\documentclass{article}
\usepackage{iclr2026_conference_arxiv}
\usepackage{url}
\usepackage[T1]{fontenc}
\usepackage[utf8]{inputenc}
\usepackage{lmodern}
\usepackage{silence}
\usepackage{microtype}
\usepackage{amsmath,amssymb,amsfonts,amsthm,mathtools}
\usepackage{aliascnt}
\usepackage{bbm}
\usepackage{bm}
\usepackage{enumitem}
\usepackage{booktabs}
\usepackage{graphicx}
\usepackage{placeins}
\usepackage{float}
\usepackage[table]{xcolor}
\usepackage{hyperref}
\usepackage[nameinlink,capitalize]{cleveref}
\mathtoolsset{showonlyrefs}
\usepackage[scr=boondoxo]{mathalfa}
\usepackage{longtable}
\usepackage{array}
\newcolumntype{L}[1]{>{\raggedright\arraybackslash}p{#1}}
\usepackage{subcaption}

\hypersetup{
  colorlinks=true,
  linkcolor=blue!60!black,
  citecolor=blue!60!black,
  urlcolor=blue!60!black
}

\definecolor{lfHeaderGray}{HTML}{F3F4F6}
\definecolor{lfBlue}{HTML}{EAF2F8}
\definecolor{lfTeal}{HTML}{E7F4F2}
\definecolor{lfPurple}{HTML}{F3EEF8}
\definecolor{lfGreen}{HTML}{EAF6F0}
\definecolor{lfAmber}{HTML}{FFF4E3}
\definecolor{terracotta}{HTML}{9D5242}

\newtheorem{theorem}{Theorem}[section]
\newaliascnt{proposition}{theorem}
\newtheorem{proposition}[proposition]{Proposition}
\aliascntresetthe{proposition}
\newaliascnt{lemma}{theorem}

\aliascntresetthe{lemma}
\newaliascnt{corollary}{theorem}
\newtheorem{corollary}[corollary]{Corollary}
\aliascntresetthe{corollary}
\theoremstyle{definition}
\newaliascnt{definition}{theorem}

\aliascntresetthe{definition}
\newaliascnt{assumption}{theorem}
\newtheorem{assumption}[assumption]{Assumption}
\aliascntresetthe{assumption}
\newaliascnt{example}{theorem}

\aliascntresetthe{example}
\theoremstyle{remark}
\newaliascnt{remark}{theorem}
\newtheorem{remark}[remark]{Remark}
\aliascntresetthe{remark}

\crefname{section}{Section}{Sections}
\Crefname{section}{Section}{Sections}
\crefname{appendix}{appendix}{appendices}
\Crefname{appendix}{Appendix}{Appendices}

\newcommand{\R}{\mathbb R}
\newcommand{\N}{\mathcal N}
\newcommand{\E}{\mathbb E}
\newcommand{\KL}{\mathrm{KL}}
\newcommand{\Law}{\mathcal L}
\newcommand{\dd}{\,\mathrm d}

\newcommand{\calC}{\mathcal C}

\newcommand{\calE}{\mathcal E}

\newcommand{\vart}{\vartheta}

\newcommand{\1}{\mathbbm 1}

\providecommand{\Law}{\operatorname{Law}}

\title{\textsc{LatentFlow}: A General Framework for \\ Conditioning Stochastic Processes}

\author{Louis Sharrock \\
University College London\\
\And
Lachlan Astfalck \\
University of New South Wales \\
\And
Henry Moss \\
Lancaster University \\
}

\arxivpreprintcopy 

\begin{document}

\maketitle

\begin{abstract}
Stochastic-process models are, as a rule, far easier to simulate than to condition.
Non-linear observations, non-Gaussian likelihoods, black-box information, and global constraints all induce intractable conditional laws, requiring bespoke, model-specific constructions.
We introduce \textsc{LatentFlow}, a single framework for conditioning stochastic processes, with no learned neural approximations and no training.
Our starting point is to write the stochastic process as the deterministic image of a tractable latent innovation, \(f_0 = T_{\vart}(\xi_0)\), with \(\xi_0\) sampled from a simple reference distribution. 
This reduces process-level conditioning to latent-space inference: pull the likelihood back through $T_{\vart}$, sample the resulting latent law with a tractable guided probability flow, and push the samples forward.
This construction is provably exact at the level of the target law; in practice, approximation enters only through finite terminal noising, Monte Carlo guidance, and time discretisation of the continuous-time dynamics, each of which is explicit and systematically reducible.
As \textsc{LatentFlow} is training-free, conditioning reduces to solving a single reverse-time SDE.
This enables conditional sampling in seconds on a single desktop CPU across model classes that have never shared a scalable method: classical spatial priors, nonlinear stochastic dynamics, mechanistic models from the physical and life sciences, stochastic PDEs, heavy-tails and extremes, point and discrete-state processes, and neural or simulator-defined processes. 
\end{abstract}

\section{Introduction}
\label{sec:intro}
Sampling from a stochastic process only requires running it forward; conditioning it, until now, has remained the preserve of bespoke, model-specific constructions.
Examples include diffusion paths conditioned on fixed endpoints or partial observations \citep{delyon2006simulation,schauer2017guided};
functions with shape \citep{riihimaki2010gaussian} or physics-informed \citep{chen2021solving,hamelijnck2024physics} constraints; and simulator outputs conditioned on indirect measurements, rare events, or user-defined criteria \citep{botev2020sampling,finzi2023user}.
In each case the conditioning information cannot be absorbed into the model in closed form, demanding its own specialised, expensive, or approximate sampler; for instance, those based on MCMC \citep{andrieu2003introduction}, SMC \citep{doucet2001sequential}, variational inference \citep{blei2017variational} or Laplace approximations \citep{rue2009approximate}. 

Recently, \cite{moss2026conditioning} introduced \textsc{FlowGP}, which addresses conditioning for Gaussian process (GP) priors under intractable information by recasting conditional sampling as a differential equation with closed-form Gaussian dynamics and a likelihood-dependent guidance term.
The resulting sampler applies to any likelihood that can be evaluated pointwise, and requires no trained network to approximate the guidance field.
Underlying the model is a simple mechanism: the GP sample is the linear, deterministic image of a Gaussian innovation, and conditioning is a re-weighting of that innovation's density by the pulled-back likelihood.

Our key observation is that Gaussianity of the stochastic process prior is not the essential ingredient in this construction; instead, it is the availability of a tractable \emph{latent innovation representation} on which guided dynamics can be defined. 
Rather than constructing conditioning dynamics directly in the original process space, this allows us to express the process as the deterministic image of a tractable innovation variable, pull the conditioning information back through this generator, and sample the resulting reweighted law in latent space.
The conditioned process is then obtained by pushing the guided samples in the latent space forward through the same generator.
We refer to this method as \textsc{LatentFlow}.
Our construction does not require learning a process prior, score model, or conditional generator \citep[e.g.,][]{song2021score,chung2023diffusion,zammit2025neural}; the sampler is built directly from the known map \(T_{\vart}\) and pointwise evaluations of the pulled-back likelihood.
Our method also provides a natural way to learn the parameters of the process generator. 

We obtain formal theoretical guarantees for \textsc{LatentFlow}, and demonstrate its application to a wide variety of stochastic process priors. 
These include GPs; heavy-tailed Student-$t$ and Cauchy-convolution processes; discrete-valued Potts processes; linear and nonlinear diffusion processes including a model for cell-differentiation, a FitzHugh--Nagumo model for excitable neurons, and a Heston stochastic volatility model; mechanistic models of SIR epidemics and Lotka--Volterra predator--prey dynamics; and Allen--Cahn and advection--diffusion SPDEs.
See \Cref{fig:figure1} for several examples.
Until now, beyond simple or contrived cases, conditioning each of these processes required either low-fidelity approximations or expensive computation.
We sample in single-seconds time on a single desktop CPU, providing the first general method for conditioning stochastic processes in real time which is exact up to an explicit and systematically reducible numerical error.

\begin{figure}[!ht]
    \centering
    \includegraphics[width=.9\textwidth]{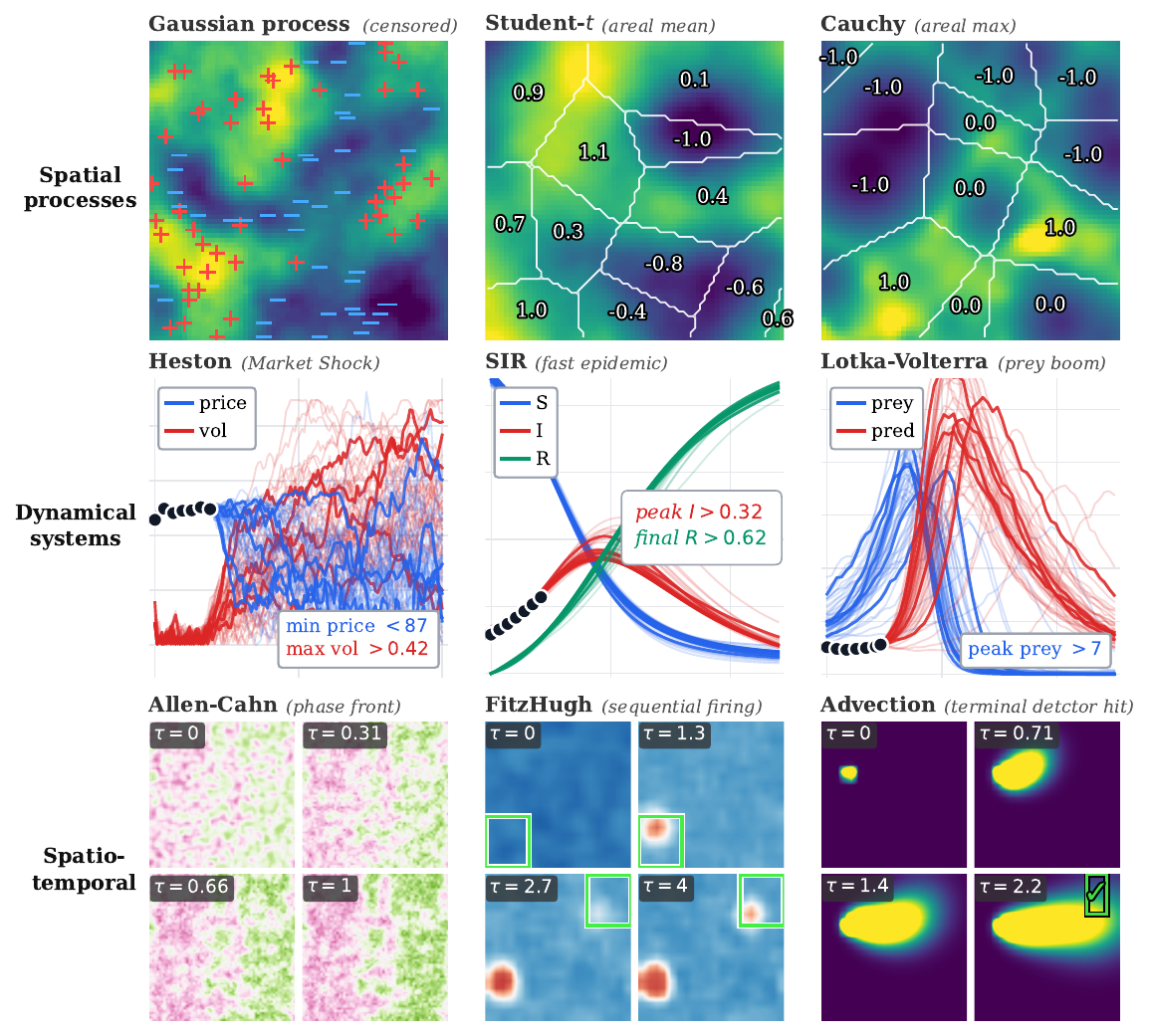}
    \caption{
    \textsc{LatentFlow} across diverse stochastic-process priors and conditions. 
    \textbf{Top}: spatial processes conditioned on censored point measurements for a Gaussian process; prescribed areal means for a Student-\(t\) process; and prescribed areal maxima for a Cauchy convolution field. 
    \textbf{Centre}: SDEs conditioned on observations and events. A Heston model under a joint market-crash and volatility spike; an SIR model under a fast epidemic peak; and a Lotka--Volterra system given a large prey boom. 
    \textbf{Bottom}: Spatio-temporal PDEs. Allen--Cahn conditioned on a prescribed phase front (negative left, positive right) at final time; FitzHugh--Nagumo conditioned on sequential source firing (bottom-left then top-right); and an advection--diffusion plume conditioned on reaching a downstream detector (box).
    }
    \label{fig:figure1}
    \vspace{-3mm}
\end{figure}

\section{Conditioning via a Latent Representation}
\label{sec:conditioning-latent-representation}
Let \(\mathscr{f} \in\mathcal E\) denote a stochastic process, viewed as a random element in a measurable space \(\mathcal E\), and let \(f_0\in E_m\) denote a finite-dimensional discretisation of $\mathscr{f}$.
In typical examples, 
\(
  E_m\subseteq\R^m
\)
or
\(
  E_m\subseteq\R^{md},
\)
where \(m\) is the number of input locations and \(d\) is the state dimension at each location.
Let $p_{\vart}(f_0)$ denote the prior density of $f_0$ on \(E_m\), with hyperparameters $\vart$.
We are interested in sampling the conditional distribution of this process, given some additional information \(\calC\) encoded by a likelihood
\(
  L_{\calC}:E_m\to[0,\infty).
\)
The desired conditional density is thus given by
\begin{equation}
  \pi_{\vart}(f_0 \mid \calC)
  =
  \frac{L_{\calC}(f_0)\,p_{\vart}(f_0)}{Z_{\vart}},
  \qquad
  Z_{\vart}
  =
  \int_{E_m} L_{\calC}(f_0)\,p_{\vart}(f_0)\dd f_0.
  \label{eq:process-target}
\end{equation}
%
%
Our key assumption is that \(p_{\vart}\) can be obtained by transforming a simple reference density.
\begin{assumption}
\label{ass:latent-rep}
There exists a latent dimension \(m_\xi\), a reference density \(r\) on \(\R^{m_\xi}\), and a map
\(
  T_{\vart}:\R^{m_\xi}\to E_m
\)
such that, if $\xi_0\sim r$, then $f_0=T_{\vart}(\xi_0)$ has density $p_{\vart}$. 
\end{assumption}
\Cref{ass:latent-rep} is very mild; after discretisation, it holds for essentially any process of interest.
Consider a scalar target with continuous distribution function $F$.
By the probability integral transform, the variable $F^{-1}(\mathrm{R}(\xi_0))$ with $\xi_0\sim r$ has distribution $F$, where $\mathrm{R}$ is the distribution function of $r$. 
The Rosenblatt transform \citep{rosenblatt1952remarks} generalises this to multivariate $f_0$ and writes \textit{any} absolutely continuous law on $E_m$ as the deterministic image of some $\xi_0 \sim r$.
The choice of $r$ is arbitrary; we fix $r(\xi_0)=\N(\xi_0;0,I_{m_\xi})$ throughout.
It is worth noting that we use densities here only to simplify the presentation. 
In fact, we require only a measure-theoretic version of \Cref{ass:latent-rep}, which also applies when the prior is singular or discrete; see \Cref{app:measure-theoretic-view} for details.

Once such a representation is available, conditioning the process on \(\mathcal C\) can be transferred to the latent space via the \emph{latent likelihood} \(
\Lambda_{\vart, \calC}(\xi_0)
  =
  (L_{\calC}\circ T_{\vart})(\xi_0).
\),  i.e. the standard likelihood pulled back to $\R^{m_\xi}$. 
The corresponding latent target density is
\begin{equation}
  \rho_{\vart}(\xi_0 \mid \calC)
  =
  \frac{\Lambda_{\vart, \calC}(\xi_0)\,r(\xi_0)}{Z_{\vart}},
  \qquad
  Z_{\vart}
  =
  \int_{\mathbb R^{m_\xi}} \Lambda_{\vart, \calC}(\xi_0)\,r(\xi_0)\dd\xi_0,
  \label{eq:latent-target}
\end{equation}
where the posterior is recovered by pushing $\rho_{\vart}$ through $T_{\vart}$. 
That is, if $\xi_0\sim\rho_{\vart}( \cdot \mid \calC)$, then $f_0=T_{\vart}(\xi_0)$ has density $\pi_{\vart}(\cdot\mid\calC)$.
We formalise this equivalence in \Cref{prop:latent-process-agree}, which states that the latent and process targets agree in distribution. 

\begin{proposition}
\label{prop:latent-process-agree}
Suppose that Assumption~\ref{ass:latent-rep} holds and that \(0<Z_{\vart}<\infty\). 
Then, if \(\xi_0\sim\rho_{\vart}(\cdot \mid \calC) \), the random variable \(f_0=T_{\vart}(\xi_0)\) has the conditioned process density $\pi_{\vart}(\cdot\mid\calC)$.
\end{proposition}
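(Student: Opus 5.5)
The plan is to identify the law of \(f_0=T_{\vart}(\xi_0)\) by testing it against an arbitrary bounded measurable function \(h:E_m\to\R\); no change-of-variables formula is needed. The point is that \(\rho_{\vart}\) is an exponential-tilt-type reweighting of \(r\) by a function that factors through \(T_{\vart}\). Such a reweighting commutes with pushforward: reweighting before or after applying the map gives the same law.

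First, I check that the two normalising constants agree, which also shows that both targets in \eqref{eq:process-target} and \eqref{eq:latent-target} are well-defined probability densities under \(0<Z_{\vart}<\infty\). By \Cref{ass:latent-rep}, for any nonnegative measurable \(g\) on \(E_m\),
\[
\int_{\R^{m_\xi}} g(T_{\vart}(\xi_0))\,r(\xi_0)\dd\xi_0=\E_{\xi_0\sim r}[g(T_{\vart}(\xi_0))]=\int_{E_m} g(f_0)\,p_{\vart}(f_0)\dd f_0 .
\]
This is just the abstract change of variables for the pushforward measure \((T_{\vart})_\#r=p_{\vart}\dd f_0\); it is justified for indicators by the assumption and extended by simple functions and monotone convergence. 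Taking \(g=L_{\calC}\ge 0\) shows that the latent \(Z_{\vart}=\int\Lambda_{\vart,\calC}\,r\) equals the process \(Z_{\vart}=\int L_{\calC}\,p_{\vart}\).

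Second, for bounded measurable \(h\) I apply the same identity with \(g=h^{\pm}L_{\calC}\), which is nonnegative and splits into positive and negative parts. Integrability follows from \(|h|\le \|h\|_\infty\) together with \(Z_{\vart}<\infty\). This gives
\[
\E_{\xi_0\sim\rho_{\vart}}[h(T_{\vart}(\xi_0))]=\frac{1}{Z_{\vart}}\int h(T_{\vart}(\xi_0))\,L_{\calC}(T_{\vart}(\xi_0))\,r(\xi_0)\dd\xi_0=\frac{1}{Z_{\vart}}\int_{E_m} h(f_0)\,L_{\calC}(f_0)\,p_{\vart}(f_0)\dd f_0,
\]
which is \(\int h\,\pi_{\vart}(\cdot\mid\calC)\). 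Choosing \(h=\1_A\) for measurable \(A\subseteq E_m\) shows that the law of \(f_0\) has density \(\pi_{\vart}(\cdot\mid\calC)\).

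There is no serious obstacle. The only care needed is measurability: \(L_{\calC}\) and \(T_{\vart}\) must be measurable so that \(\Lambda_{\vart,\calC}\) is, and the pushforward identity must be stated for general nonnegative functions rather than only for indicators. I would note that the argument never uses densities or invertibility of \(T_{\vart}\). It therefore carries over verbatim to the measure-theoretic version in \Cref{app:measure-theoretic-view}, with \(p_{\vart}\dd f_0\) replaced by the prior measure.
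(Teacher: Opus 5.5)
Your proof is correct and follows essentially the same route as the paper: both apply the pushforward identity \((T_{\vart})_\# r = p_{\vart}\) to the nonnegative function \(\1_A L_{\calC}\) and then divide by \(Z_{\vart}\). Your explicit check that the latent and process normalising constants coincide is a worthwhile addition, since the paper's main-text proof uses this implicitly and only verifies it in the measure-theoretic remark; the detour through bounded test functions and their positive and negative parts is harmless but unnecessary, because indicators already suffice.
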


\subsection{Examples of latent innovation representations}
\label{sec:background-examples}
Below we provide three examples of stochastic processes that satisfy Assumption~\ref{ass:latent-rep}.
Further examples, including heavy-tailed and max-stable processes, state-space models and SPDEs, and neural processes and black-box simulators, are given in \Cref{app:innovation-representations}.

\textbf{Gaussian process}.
Let \(f\sim\operatorname{GP}(\mu_{\vart},k_{\vart})\), and let
\(
  f_0=(f(s_1),\ldots,f(s_m))\in\mathbb R^m
\)
denote its values at inputs \(s_{1:m}\).
Thus
\(
  f_0\sim \N(m_{\vart},K_{\vart}),
\)
where \(m_{\vart}=(\mu_{\vart}(s_1),\ldots,\mu_{\vart}(s_m))\) and \((K_{\vart})_{ij}=k_{\vart}(s_i,s_j)\).
Let \(L_{\vart}\) be a matrix satisfying
\(
  L_{\vart}L_{\vart}^{\top}=K_{\vart}.
\)
We can then write 
\[
  f_0
  =
  T_{\vart}(\xi_0)
  =
  m_{\vart}+L_{\vart}\xi_0, \qquad \xi_0\sim\N(0,I_m).
\] 
Thus, the latent innovation is the whitened Gaussian vector \(\xi_0\), and the generator \(T_{\vart}\) is the usual affine GP whitening map.

\textbf{Student-\(t\) process}. 
Let \(f\sim\operatorname{TP}_{\nu}(\mu_{\vart},k_{\vart})\), where \(k_{\vart}\) denotes the Student-\(t\) scale kernel, and let
\(
  f_0=(f(s_1),\ldots,f(s_m))\in\mathbb R^m
\)
denote its values at inputs \(s_{1:m}\).
We then have
\[
  f_0
  =
  m_{\vart}
  +
  \omega^{-1/2}L_{\vart}v,
\]
where $v\sim\N(0,I_m)$ and $\omega\sim\operatorname{Gamma}(\nu/2,\nu/2)$ independently.
We can also express $f_0$ in terms of a standard Gaussian reference. Let \(a\sim\N(0,1)\), and define
\(
  G_{\nu}(a)
  =
  F_{\Gamma,\nu}^{-1}(\Phi_{\mathrm N}(a)),
\)
where \(F_{\Gamma,\nu}\) is the \(\operatorname{Gamma}(\nu/2,\nu/2)\) CDF, and \(\Phi_{\mathrm N}\) is the standard normal CDF.
Then
\[
  f_0
  =
  T_{\vart}(\xi_0)
  =
  m_{\vart}
  +
  G_{\nu}(a)^{-1/2}L_{\vart}v, \qquad \xi_0=(v,a)\sim\N(0,I_{m+1}).
\]
Thus, \(m\) Gaussian variables generate the spatial Gaussian component, and one additional Gaussian variable generates the random scale.

\textbf{SDE-driven process}.
Consider a process defined via the stochastic differential equation 
\[
  \dd x_\tau
  =
  b_{\vart}(\tau,x_\tau)\dd \tau
  +
  \Sigma_{\vart}(\tau,x_\tau)\dd w_\tau,
\]
where $w=(w_\tau)_{\tau\geq 0}$ is a standard Brownian motion.
Define a time grid
\(
  0=\tau_0<\tau_1<\cdots<\tau_N=T,
\)
with
\(
  \Delta_n=\tau_{n+1}-\tau_n.
\)
The Euler--Maruyama discretisation of this SDE is
\[
  x_{n+1}
  =
  x_n
  +
  b_{\vart}(\tau_n,x_n)\Delta_n
  +
  \Sigma_{\vart}(\tau_n,x_n)\sqrt{\Delta_n}\eta_n,
\]
where $\eta_n\stackrel{\mathrm{i.i.d.}}{\sim}\N(0,I)$.  
If $x_0$ is random, we also write
\(
  x_0=\chi_{\vart}(\eta_{-1}),
\)
with
\(
  \eta_{-1}\sim\N(0,I),
\)
and omit \(\eta_{-1}\) when $x_0$ is fixed.
The latent innovation is the collection of Gaussian variables
\(
  \xi_0=(\eta_{-1},\eta_0,\ldots,\eta_{N-1}),
\)
with the obvious modification for deterministic \(x_0\).
Meanwhile, the generator \(T_{\vart}\) is the numerical solver itself: it takes \(\xi_0\), applies the Euler--Maruyama recursion, and returns the discretised path
\(
  (x_0,x_1,\ldots,x_N) = T_{\vart}(\xi_0).
\)
\section{LatentFlow: Inference with Guided Latent Flows}
\label{sec:bridge}

We now develop the method on which the paper rests. 
Whenever the latent likelihood is differentiable, or its score admits a tractable estimator, a guided diffusion draws samples \(\xi_0\sim\rho_{\vart}(\cdot \mid \calC) \).
We call the resulting framework \textsc{LatentFlow}. 
The statements provided here are formal; see \Cref{app:exactness-proofs} for a more rigorous treatment.

\subsection{Noising bridge, potential, and guidance field}
We first define a Gaussian noising process on the latent space.
Let \(\xi_0\sim r\) and \(\varepsilon\sim\N(0,I_{m_\xi})\) be independent of \(\xi_0\).
The variance-preserving (VP) latent noising bridge is given by
\begin{equation}
  \xi_t
  =
  \alpha(t)\xi_0+\sigma(t)\varepsilon,
  \label{eq:latent-bridge}
\end{equation}
where $\alpha(t)=\exp\{-\frac{1}{2}\int_0^t\beta(s)\dd s\}$ and $\sigma(t)^2=1-\alpha(t)^2$, for a locally integrable function \(\beta:[0,1)\to(0,\infty)\) such that $\lim_{t\uparrow 1}\alpha(t)=0$.
Thus \(\xi_0\) is the clean latent variable and \(\xi_t\) for \(t\uparrow1\) is increasingly noised.

Since $r(\xi_0)=\N(\xi_0;0,I_{m_\xi})$, we have \(\xi_t\sim r\) for every \(t\in[0,1)\). 
Moreover, the conditional density of $\xi_0 \mid \xi_t$ is given by
\begin{equation}
  r_{0\mid t}(\xi_0) \coloneqq p(\xi_0\mid \xi_t = u) = \N\bigl(\xi_0;\alpha(t)u,\sigma(t)^2I_{m_\xi}\bigr).
  \label{eq:bridge-conditional}
\end{equation}
The additional condition enters through the latent likelihood \(\Lambda\). 
For \(t\in[0,1)\), we define the \emph{latent conditioning potential} as
\begin{equation}
    \psi(t,u)
    =
    \E_{\xi_0\sim r}\left[\Lambda_{\vart,\calC}(\xi_0)\mid \xi_t=u\right]
    = 
    \E_{\varepsilon\sim r} \left[ \Lambda_{\vart,\calC} \left(\alpha(t)u+\sigma(t)\varepsilon\right) \right].
    \label{eq:psi-conditional-definition}
\end{equation}
This can be viewed as a smoothed version of the latent likelihood.
This function also has a density-ratio interpretation.
Let \(\xi_t\sim\rho_t\) be the random variable obtained by drawing \(\xi_0\sim\rho_{0}\) and then applying the VP noising bridge in \eqref{eq:latent-bridge}.
Then 
\(
\smash{\psi(t,u) =Z_{\vart}\frac{\rho_t(u)}{r(u)}}
\).

Finally, suppose \(\psi(t,u)>0\) and that \(\psi(t,\cdot)\) is differentiable at \(u\).
We define for $t\in[0,1)$ the \emph{latent guidance field}
\begin{equation}
  g(t,u)
  =
  \nabla_u\log\psi(t,u)
  =
  \nabla_u\log\tfrac{\rho_t(u)}{r(u)}.
  \label{eq:g-definition}
\end{equation}
This will later be used to steer the dynamics in the direction of the increased expected future likelihood.

\subsection{Exact latent dynamics}
\label{sec:exact-latent}
Our goal is a sampling procedure for $\xi_0 \sim \rho_0$ with $\rho_0 \equiv \rho_\vart$. 
First recall that the family of guided densities \((\rho_t)_{0\le t<1}\) are precisely the one-time marginals of the VP-SDE
\begin{equation} \label{eq:forward_SDE}
  \dd \xi_t = -\tfrac12\beta(t)\,\xi_t\,\dd t + \sqrt{\beta(t)}\,\dd w_t,
  \qquad \xi_0\sim\rho_0,\quad t\in[0,1),
\end{equation}
where $w$ is a standard $\R^{m_{\xi}}$-valued Brownian motion \citep[e.g.,][]{song2021score}.
By \citet{anderson1982reverse}, the time reversal of these dynamics is also an SDE, given by
\begin{equation} \label{eq:reverse_SDE}
\dd \xi_t = \beta(t)\big[\tfrac12\xi_t - g(t,\xi_t)\big]\dd t
+ \sqrt{\beta(t)}\,\dd\overline w_t,
\end{equation}
where $\overline w$ is a reverse-time Brownian motion, and this equation is integrated backwards in $t$.
In particular, if \eqref{eq:reverse_SDE} is initialised at
$\xi_{t_1}\sim\rho_{t_1}$ and integrated backwards from $t_1$ to $0$, then $\xi_t\sim\rho_t$ for every $t\in[0,t_1]$, and hence $\xi_0\sim\rho_0$.
Since $\rho_{t_1}\Rightarrow\mathcal N(0,I_{m_\xi})$ as $t_1\uparrow1$, initialising at $\mathcal N(0,I_{m_\xi})$ in place of $\rho_{t_1}$ is asymptotically exact, yielding a practical sampler. 

The same family of marginal densities can also be generated by a deterministic process known as the probability-flow ODE \citep[see][]{song2021score}. 
In practice, however, the SDE is often a preferable generating mechanism; see~\Cref{app:exactness-proofs} for further details.

\subsection{Monte Carlo guidance}
\label{sec:mc-guidance}

To generate $\xi_0$ using the reverse-time SDE in \eqref{eq:reverse_SDE} we first need \(g(t,u)=\nabla_u\log\psi(t,u)\). 
Typically, neither $g(t,u)$ nor $\psi(t,u)$ are available in closed form.
We can, however, obtain estimates of both via a Monte Carlo estimator of \eqref{eq:bridge-conditional}.
For a given pair \((t,u)\), draw
\begin{equation}
  \varepsilon^{(s)}\sim \N(0,I_{m_\xi}),
  \qquad
  \xi^{(s)}_{0 \mid t}
  =
  \alpha(t)u+\sigma(t)\varepsilon^{(s)},
  \qquad
  s=1,\ldots,S.
\end{equation}
Then
\(
  \widehat\psi_S(t,u)
  =
  \frac1S\sum_{s=1}^S\Lambda_{\vart, \calC}(\xi^{(s)}_{0 \mid t})
\)
is an unbiased estimator of \(\psi(t,u)\).
If \(\Lambda_{\vart, \calC}\) is differentiable, then
\begin{equation}
  \nabla_u\log\psi(t,u)
  =
  \alpha(t)
  \frac{
  \E_{\varepsilon\sim r}\left[
    \nabla_{\xi}\Lambda_{\vart, \calC}\bigl(\alpha(t)u+\sigma(t)\varepsilon\bigr)
  \right]
  }{
  \E_{\varepsilon\sim r}\left[
    \Lambda_{\vart,\calC}\bigl(\alpha(t)u+\sigma(t)\varepsilon\bigr)
  \right]
  }.
  \label{eq:exact-gradient-ratio}
\end{equation}
When \(\Lambda_{\vart, \calC}>0\), this motivates the self-normalised estimator
\begin{equation}
  \widehat g_S(t,u)
  =
  \alpha(t)\sum_{s=1}^S
  \bar w_s\nabla_{\xi}\log\Lambda_{\vart,\calC}(\xi^{(s)}_{0 \mid t}),
  \qquad
  \bar w_s
  =
  \frac{\Lambda_{\vart,\calC}(\xi^{(s)}_{0 \mid t})}
  {\sum_{\ell=1}^S\Lambda_{\vart,\calC}(\xi^{(\ell)}_{0 \mid t})}.
  \label{eq:mc-guidance}
\end{equation}
Since \(\Lambda_{\vart, \calC}=L_{\calC}\circ T_{\vart}\), the latent score can be computed by differentiating through the process generator; in practice, this is typically available via automatic differentiation.
In particular, when \(T_\vart\) and \(L_{\calC}\) are differentiable and \(L_{\calC}>0\), the chain rule gives
\begin{equation}
  \nabla_{\xi}\log\Lambda_{\vart,\calC}(\xi)
  =
  [D_{\xi}T_\vart(\xi)]^{\top}
  \nabla_{f_0}\log L_{\calC}(f_0)
  \big|_{f_0=T_\vart(\xi)},
  \label{eq:latent-score-chain-rule}
\end{equation}
where $D_{\xi}T_\vart(\xi)$ is the Jacobian of $T_\vart(\xi)$. For hard constraints, discontinuous likelihoods, or non-differentiable simulators, this gradient-based form must be replaced by a smooth relaxation, a differentiable surrogate, or a derivative-free guidance estimator; see~\Cref{sec:derivative-free-guidance}.

In practice, we replace \(g(t,u)\) by \(\widehat g_S(t,u)\) and solve \eqref{eq:reverse_SDE} numerically.
The resulting error decomposes into three terms: terminal initialisation, guidance approximation, and numerical discretisation; see~\Cref{sec:approx-guidance-stability}.
Under a common exact terminal coupling, if the Monte Carlo guidance field is uniformly root-\(S\) consistent and the SDE solver has strong order \(q_{\mathrm{sde}}\), then the coupled latent grid error is
\(\mathcal O_{\mathbb P}(S^{-1/2}+h^{q_{\mathrm{sde}}})\).
The error contribution of each can be made arbitrarily small, so the sampling error is limited only by available compute.
These bounds pass to process space whenever \(T_\vart\) is Lipschitz.

\section{Guidance-aware parameter estimation}
\label{sec:hyperparams}

We have, thus far, treated the parameter $\vart$ as fixed. 
Adapting it based on the information $\calC$ offers further advantages of our training-free approach.
A learned conditional generator or amortised sampler must be retrained as $\vart$ is updated, or trained in advance over a prescribed range of parameter values. 
In \textsc{LatentFlow}, $T_{\vart}$ is a pre-defined generative prior, and $\vart$ enters through the pulled-back likelihood.
Parameter estimation can thus be performed using the same latent conditional samples, without retraining a conditional generator.

The normalising constant \(Z_{\vart}\) in \eqref{eq:process-target} and \eqref{eq:latent-target} can be interpreted as the evidence for \(\calC\) under the prior.
This suggests the empirical Bayes objective
\(
  \mathcal J(\vart)
  =
  \log Z_{\vart}.
\)
This balances satisfaction of $\calC$ with the relative-entropy cost of deforming \(p_{\vart}\) into \(\pi_{\vart}(\cdot\mid\calC)\); see~\Cref{prop:conditional-evidence-gibbs}.
This relative entropy cost also admits a dynamic interpretation as the guidance energy; see~\Cref{thm:girsanov}. 
Thus, parameters are compatible with the condition when they can be realised with high likelihood and low guidance energy.

Given \(\nabla_{\vart} \mathcal J(\vart) =  \nabla_{\vart}\log Z_{\vart}\), or an approximation thereof, we can update parameters using an outer-loop scheme.
Fix \(\vart_0\).
For each \(k\geq0\), run \textsc{LatentFlow} with \(\vart_k\) fixed, and update
\begin{equation}
  \vart_{k+1}
  =
  \vart_k
  +
  \eta_k
  \nabla_{\vart} \mathcal J(\vart)
  \big|_{\vart=\vart_k},
  \label{eq:outer-theta-update-main}
\end{equation}
where \(\eta_k>0\) is the step size.
When $\nabla_{\vart} \mathcal J(\vart)$ is unknown, we require an approximation.
If \(\Lambda_{\vart,\calC}\) is differentiable in \(\vart\) and differentiation and integration are interchangeable, then
\begin{align}
  \nabla_{\vart} \mathcal J(\vart)
  =
  \E_{\xi_0\sim\rho_{\vart}(\cdot\mid\calC)}
  \left[
    \nabla_{\vart}
    \log\Lambda_{\vart,\calC}(\xi_0)
  \right].
  \label{eq:endpoint-theta-gradient}
\end{align}
Thus samples from the latent target provide a direct estimator of the conditional-evidence gradient.
If the likelihood \(L_{\calC}\) is not a function of \(\vart\), then
\begin{equation}
  \nabla_{\vart}
  \log\Lambda_{\vart,\calC}(\xi_0)
  =
  [D_{\vart}T_{\vart}(\xi_0)]^{\top}
  \nabla_{f_0}\log L_{\calC}(f_0)
  \big|_{f_0=T_{\vart}(\xi_0)},
  \label{eq:theta-chain-rule}
\end{equation}
where \(D_{\vart}T_{\vart}(\xi_0)\) is the Jacobian of \(T_{\vart}(\xi_0)\).
The same gradient can also be represented at intermediate noising times. 
In particular, we have that
\begin{equation}
  \nabla_{\vart} \mathcal J(\vart)
  =
  \nabla_{\vart}\log Z_{\vart}
  =
  \E_{u\sim\rho_{\vart,t}(\cdot\mid\calC)}
  \left[
    \nabla_{\vart}
    \log\psi_{\vart}(t,u)
  \right].
  \label{eq:intermediate-gradient-init}
\end{equation}
This intermediate-time gradient can be estimated using the same bridge samples used for guidance; see \Cref{sec:mc-guidance}.
In particular, differentiating the estimator of the conditioning potential with respect to \(\vart\) yields
\begin{equation}
  \widehat a_{\vart,S}(t,u)
  \coloneqq
  \nabla_{\vart}
  \log\widehat\psi_{\vart,S}(t,u)
  =
  \sum_{r=1}^S
  \bar w_r
  \nabla_{\vart}
  \log\Lambda_{\vart,\calC}(\xi^{(r)}_{0 \mid t}),
  \quad
  \bar w_r
  =
  \frac{
    \Lambda_{\vart,\calC}(\xi^{(r)}_{0 \mid t})
  }{
    \sum_{\ell=1}^S
    \Lambda_{\vart,\calC}(\xi^{(\ell)}_{0 \mid t})
  }.
  \label{eq:mc-theta-guidance}
\end{equation}
Thus, the same weighted bridge candidates provide two derivatives: \eqref{eq:mc-guidance} guides the latent state, while \eqref{eq:mc-theta-guidance} adapts the process parameter.
Given \(M\) particles and selected noising times \(t_j\), a practical estimator is
\begin{equation}
  \nabla_{\vart}\mathcal{J}(\vart) \approx
  \frac{1}{MJ}
  \sum_{i=1}^M
  \sum_{j=1}^J
  \widehat a_{\vart,S}
  \bigl(t_j,u_j^{(i)}\bigr).
  \label{eq:time-particle-hypergradient}
\end{equation}
\section{Related work}
\label{sec:related-work}

The closest antecedent to our work is \textsc{FlowGP} \citep{moss2026conditioning}.
Our framework contains \textsc{FlowGP} as a special case, but also applies to a substantially broader class of stochastic processes; see~\Cref{app:innovation-representations}.
More broadly, our approach relates to diffusion models, flow matching, and stochastic interpolants, which generate samples via continuous-time transport \citep{sohl2015deep,ho2020denoising,song2021score,lipman2023flow,albergo2023stochastic}.
Unlike pretrained diffusion-model guidance \citep[e.g.,][]{ho2022classifierfree}, however, \textsc{LatentFlow} does not need to be pre-trained on a neural estimate of an unconditional data score or reverse process.
The noising bridge is imposed in innovation space, and the guidance is the gradient of a smoothed pulled-back likelihood under the known latent bridge.

A closely related line of work conditions generative models by performing inference in the latent noise space \citep{whang2021composing,holden2022bayesian,patel2022solution,achituve2025inverse,askari2025latent,purohit2024posterior,venkatraman2025outsourced,xia2026noise}.
These methods share the idea of conditioning a generator by changing the distribution of its input noise, but focus only on learned neural generators.
Our framework is significantly more general: it includes pre-trained neural generators as a special case, but also extends to a much broader class of genuine stochastic-process priors.
The papers above also rely on different, often more expensive, samplers, for instance, Langevin dynamics \citep{purohit2024posterior}, HMC \citep{patel2022solution,xia2026noise}, pCN \citep{holden2022bayesian}, or SMC \citep{achituve2025inverse}.
\textsc{LatentFlow} instead targets the latent posterior using a guided, training-free diffusion process, whose guidance field is constructed from the pulled-back likelihood.

Our work is also related to many other areas, including diffusion bridges, rare-event simulation, and transport-assisted MCMC; see \Cref{sec:add-related-work} for a more detailed discussion.

\section{Numerical Experiments}
\label{sec:numerics}

We now present a range of numerical experiments designed to assess the flexibility, accuracy, and scalability of \textsc{LatentFlow}. 
We first evaluate \textsc{LatentFlow} on three standard sampling benchmarks; see \Cref{app:sampling-benchmarks} for full details.
Across all three benchmarks, we achieve MMD$^2$ scores competitive with or better than NUTS. 
As \textsc{LatentFlow} generates independent samples, it scales to problems where standard MCMC either collapses modally or explores very slowly. 
In this section, we consider higher-dimensional and more challenging processes. 
All timings are reported on a 13th Gen Intel i9-13900K CPU. Additional results are provided in \Cref{app:numerics-additional}.

\subsection{Spatial Processes}
\label{sec:spatial-processes}
Spatial processes are central to many applications, including environmental modelling, climate science, and ecology \citep[e.g.,][]{cressie1993statistics}.
While GPs provide a convenient prior in such settings, many applications exhibit extremal or heavy tails, discontinuities, sharp local anomalies, or discrete observations.
This motivates alternatives such as Student-\(t\) processes, L\'evy random fields, and Potts models \citep{potts1952some,besag1974spatial,rajput1989spectral,wolpert2011levy,shah2014student}.
The difficulty here has two sources. 
The prior alone can break conjugacy as non-Gaussian priors induce conditional laws involving scale mixtures or jump-like innovations \citep{walchessen2025neural}. 
Additionally, non-linear conditioning statements such as areal maxima, threshold exceedances, or censored measurements do not admit closed-form conditionals, even under a Gaussian prior.

In \Cref{fig:spatial_processes}, we use \textsc{LatentFlow} to condition four spatial priors on four conditions.
The conditioning statements include point observations, regional summaries, and censored observations.
The resulting samples qualitatively respect the imposed conditioning information, whilst also preserving the distinctive behaviour of each prior.
The speed of \textsc{LatentFlow} is notable here: conditional samples are generated in just \(3-4\) seconds per sample.

\begin{figure}[!b]
\vspace{-3mm}
    \centering
    \includegraphics[width=.95\textwidth]{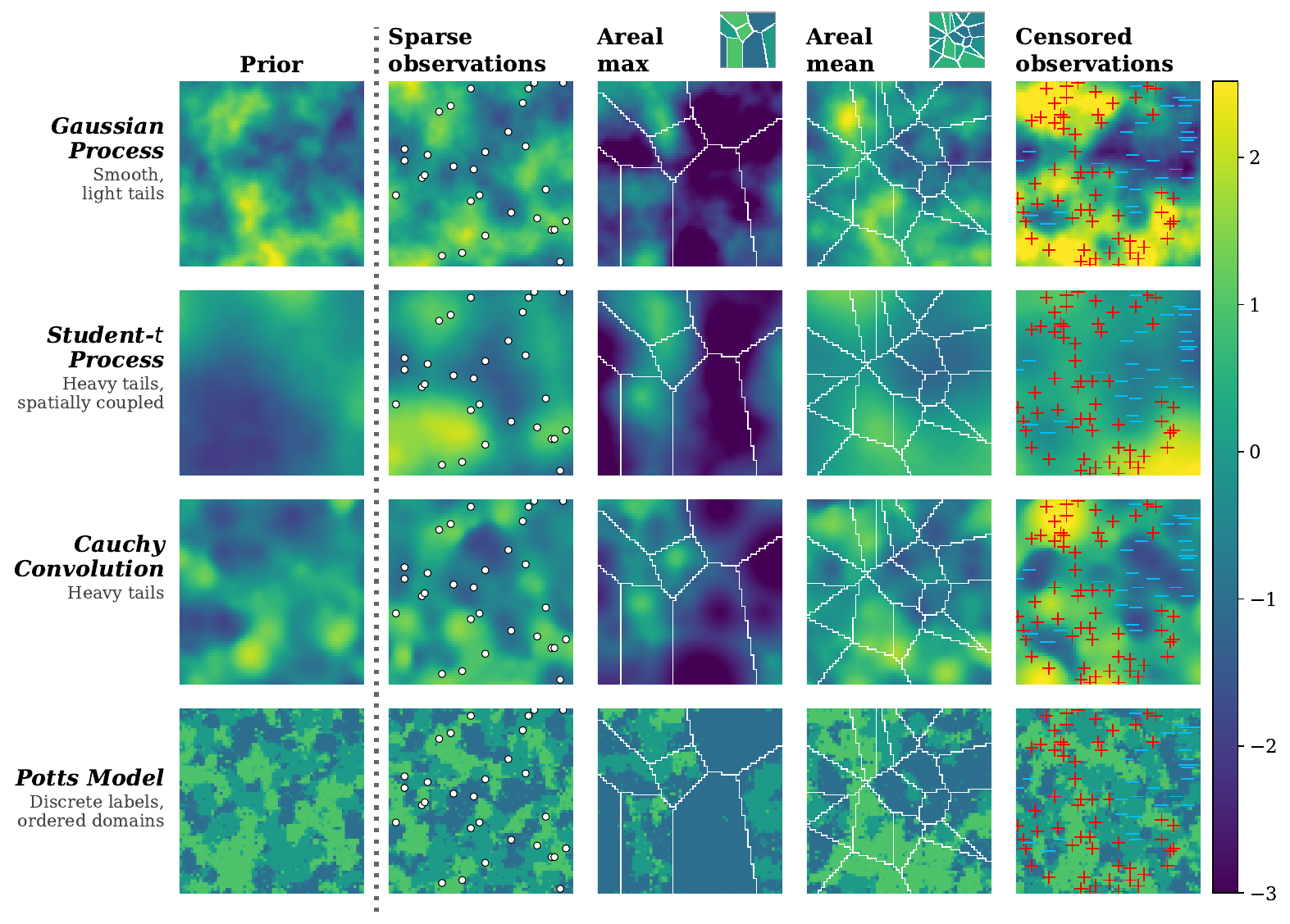}
    \caption{Posterior samples from four spatial process priors under four observation types via \textsc{LatentFlow}. \textbf{Rows:} Gaussian process (Mat\'{e}rn-$\tfrac{3}{2}$); Student-$t$ process ($\nu=4$); Cauchy convolution field; Potts model ($q=3$). \textbf{Columns:} unconditional; 30 point observations; areal maxima (8 regions); areal means (20 regions); 60 censored observations ($+$/$-$). Thumbnails show areal structures. Sampling time: $3-4$s per sample.}
\label{fig:spatial_processes}
\end{figure}

\subsection{Temporal processes}
\label{sec:temporal-processes}

We next consider {temporal processes}.
Examples include diffusion processes \citep[e.g.,][]{oksendal2003stochastic}, and dynamical systems with random initial conditions or exogenous forcing \citep[e.g.,][]{law2015data}.
This presents a different use case for \textsc{LatentFlow}: the object being conditioned is no longer a static field, but a trajectory generated by propagating latent randomness through a dynamical model. 

\textbf{Diffusion bridges.} Diffusion bridges condition diffusion processes on a fixed terminal condition.
They arise in many applications, including computational chemistry \citep{bolhuis2002transition}, econometrics \citep{elerian2001likelihood}, developmental dynamics \citep{wang2011quantifying}, and shape analysis \citep{arnaudon2022diffusion}.
Except in a few tractable cases, the bridge drift depends on an unavailable transition density, motivating bespoke guided proposals and path-space MCMC methods \citep{delyon2006simulation,schauer2017guided}, as well as recent neural approaches that approximate the bridge score or guidance field \citep{baker2024conditioning,heng2021simulating,yang2025neural}.
In \Cref{fig:neural-guided-diffusion-main}, we assess whether \textsc{LatentFlow} can generate diffusion bridges by acting on the exogenous simulation noise.
In our case, exact endpoint conditioning is approximated by a narrow finite-width likelihood.
Our method recovers the analytic OU bridge behaviour, and produces nonlinear cell-differentiation and FitzHugh--Nagumo paths consistent with rejection-sampled references.
It also agrees qualitatively with existing methods \citep[e.g.,][]{yang2025neural} in cases where no valid rejection samples were found after simulating the unconditioned process 100,000 times; see \Cref{fig:cell-differentiation-bridges} in \Cref{app:temporal-processes}.

\begin{figure}[!b]
    \centering
    \begin{subfigure}[t]{0.32\textwidth}
        \centering
        \includegraphics[width=\linewidth]{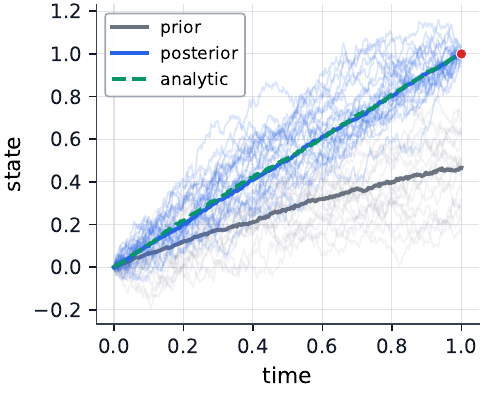}
        \caption{Ornstein--Uhlenbeck.}
        \label{fig:ngd-ou-slow}
    \end{subfigure}
    \hfill
    \begin{subfigure}[t]{0.32\textwidth}
        \centering
        \includegraphics[width=\linewidth]{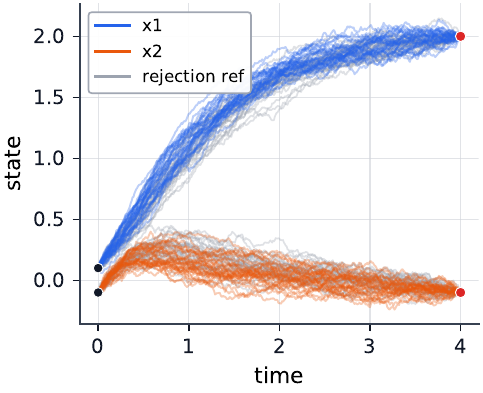}
        \caption{Cell differentiation.}
        \label{fig:ngd-cell-normal}
    \end{subfigure}
    \hfill
    \begin{subfigure}[t]{0.32\textwidth}
        \centering
        \includegraphics[width=\linewidth]{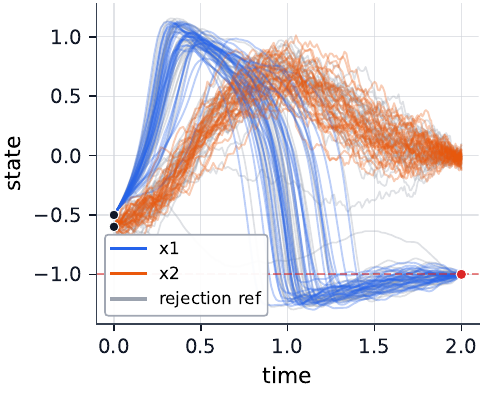}
        \caption{FitzHugh--Nagumo.}
        \label{fig:ngd-fhn-low}
    \end{subfigure}
    \vspace{1mm}
    \caption{
Diffusion bridges generated by \textsc{LatentFlow}.
\textbf{(a)} An Ornstein--Uhlenbeck bridge.
\textbf{(b)} A two-dimensional nonlinear cell-differentiation diffusion conditioned to reach a prescribed terminal state.
\textbf{(c)} A FitzHugh--Nagumo diffusion conditioned on a rare transition between metastable states.
Red markers indicate terminal targets, blue \& orange curves indicate conditioned state components; grey curves indicate prior or rejection-reference trajectories.
}
    \label{fig:neural-guided-diffusion-main}
\end{figure}

\textbf{Trajectory-level conditioning.}
We next consider more general trajectory-level conditioning, where the information may involve observations, terminal events, or pathwise threshold constraints \citep[e.g.,][]{bolhuis2021transition,finzi2023user,grafke2025sampling}.
Such constraints act on the realised trajectory after the latent randomness has been propagated through a nonlinear dynamical simulator, and need not reduce to a standard filtering, smoothing, or bridge-simulation problem.
In \Cref{fig:temporal_processes}, we consider Lotka--Volterra dynamics \citep{lotka1925elements,volterra1926fluctuations}, an SIR epidemic model \citep{kermack1927contribution}, and the Heston stochastic-volatility model \citep{heston1993closed}.
The resulting conditional paths satisfy the imposed trajectory-level information while retaining the characteristic dynamics of each model.
Sampling remains very fast, taking roughly \(1\) second per sample for the SIR and Lotka--Volterra models, and roughly \(3\) seconds per sample for the Heston model.

\begin{figure}[!t]
    \centering
    \includegraphics[width=.95\textwidth]{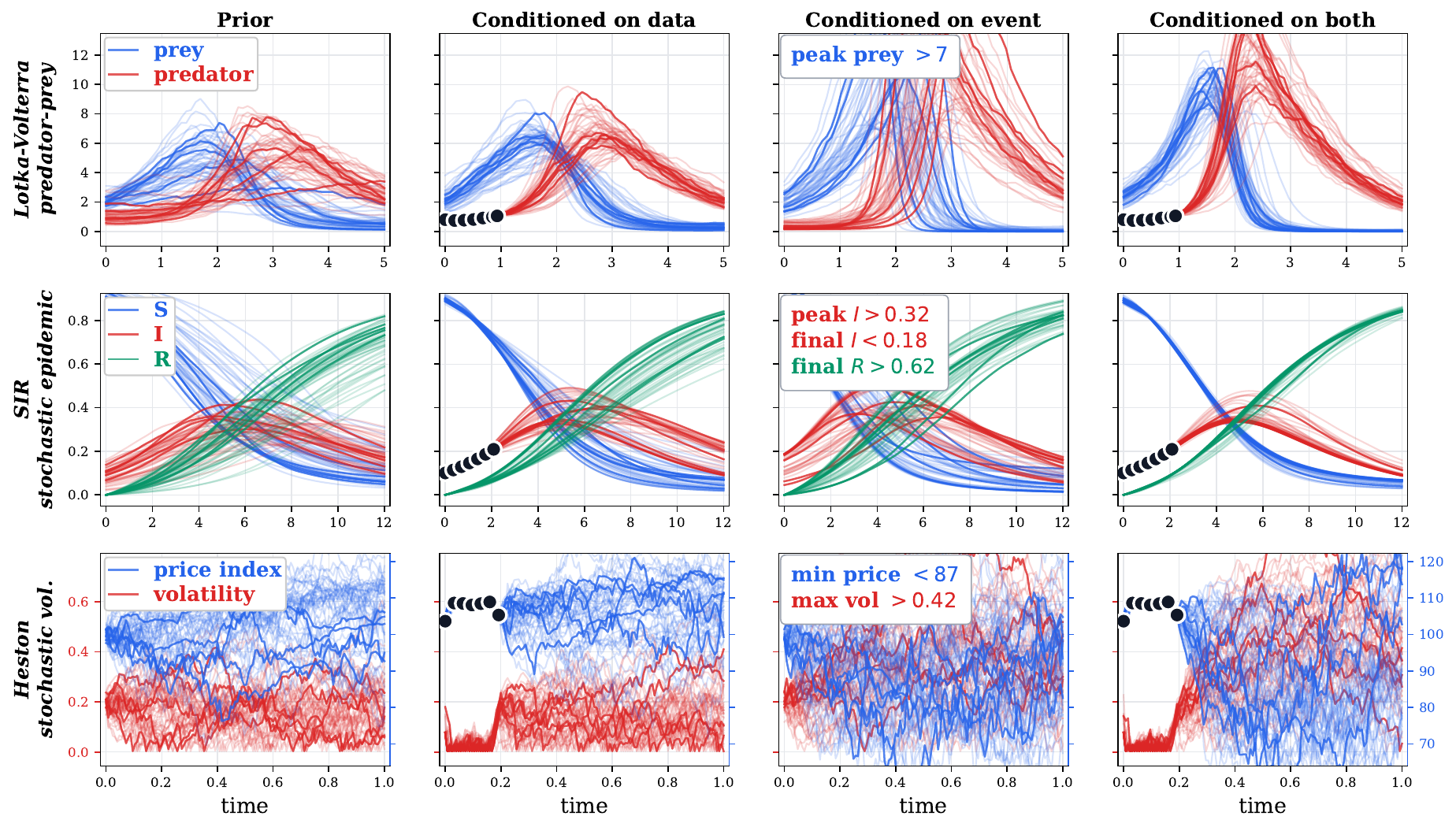}
    \caption{\textsc{LatentFlow} can condition SDE priors on data and rare events. \textbf{Rows}: Lotka--Volterra predator--prey dynamics, a stochastic SIR epidemic, and Heston stochastic volatility. \textbf{Columns}: the prior, the posterior given observations (black circles), the posterior given rare events: a prey boom, a fast growing but fast terminating epidemic, and a stress event (a market crash followed by a volatility spike). Sampling time: $<1$s for  SIR and Lotka--Volterra; $3$s Heston.}
    \label{fig:temporal_processes}
\end{figure}
\subsection{Spatio-Temporal Processes}
\label{sec:spatio-temporal-processes}

We finally consider spatio-temporal models, which describe the random evolution of spatial fields.
Such models arise in turbulent transport, environmental dispersion, materials science, excitable media, and phase separation \citep{majda1999simplified,allen1979microscopic,nagumo1962active}.
Conditioning them on sparse observations, future events, or aggregate space--time constraints is central to data assimilation, inverse modelling, and rare-event prediction \citep{dashti2017bayesian,cerou2007adaptive,cerou2012sequential,botev2020sampling}.

We consider three examples: a stochastic Allen--Cahn field for noisy phase separation, an advection--diffusion plume model, and a stochastic FitzHugh--Nagumo excitable-media model that produces travelling activation waves \citep{fitzhugh1961impulses,nagumo1962active,allen1979microscopic,majda1999simplified}.
Representative samples are shown in~\Cref{ffig:spatiotemp_demo}.
Across all three models, \textsc{LatentFlow} produces fields that remain consistent with the prior dynamics while satisfying the imposed conditions. 

\begin{figure}[!b]
    \centering
    \includegraphics[width=.95\textwidth]{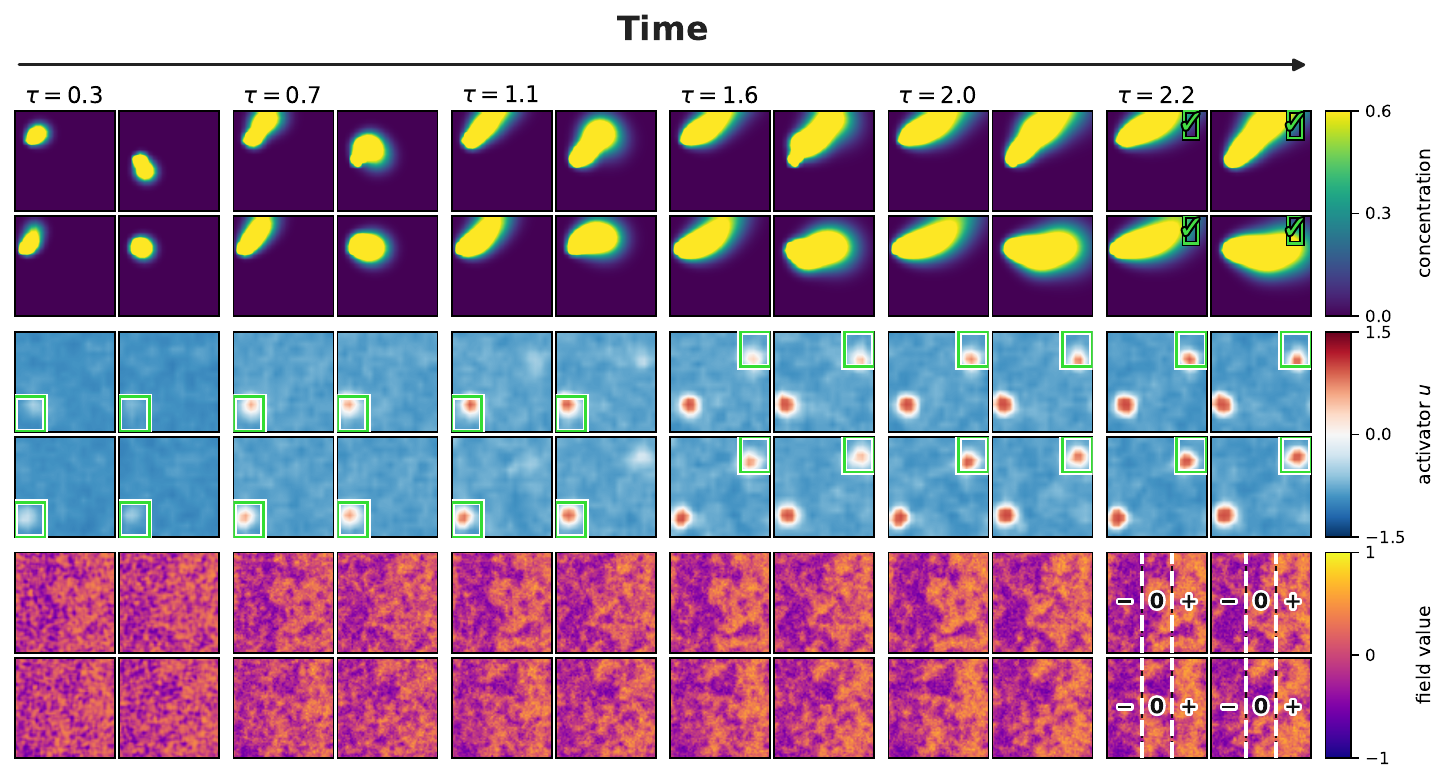}
    \caption{Posterior samples from three spatio-temporal systems using our \textsc{LatentFlow}: \textbf{(top)} a concentration plume conditioned on triggering a downstream detector evolving under advection--diffusion dynamics ($160$ latent dimensions); \textbf{(middle)} a FitzHugh--Nagumo excitable medium conditioned on sequential pacemaker firing ($40$K latent dimensions); and \textbf{(bottom)} an Allen--Cahn phase field conditioned on a prescribed phase front ($122$K latent dimensions). Each row shows four independent draws, over six time snapshots. Sampling time: $1$s for the  advection--diffusion; $5$s for FitzHugh--Nagumo, and $2$s for Allen--Cahn.}
    \label{ffig:spatiotemp_demo}
\end{figure}

\subsection{Parameter Estimation}
\label{sec:gaussian-processes}
Finally, we assess the guidance-aware parameter adaptation framework introduced in \Cref{sec:hyperparams}.
We revisit two of the processes considered in earlier sections, and compare the results obtained with fixed parameters, with the hyperparameter selected after incorporating the full conditioning information.
The results are shown in \Cref{fig:parameter-estimation}. 
Across both examples, our guidance-aware scheme recovers the ground truth, resulting in samples that more closely respect the target observations and constraints.

\begin{figure}[t!]
    \centering
    \begin{subfigure}[b]{0.2\textwidth}
        \includegraphics[height=3.0cm]{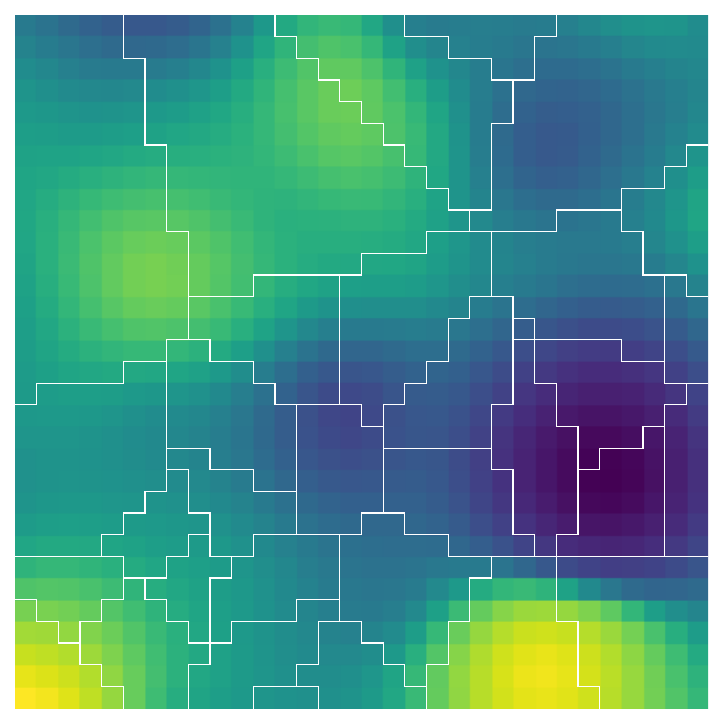}
        \caption{Ground truth}
        \label{fig:sub1}
    \end{subfigure}
    \hfill
    \begin{subfigure}[b]{0.2\textwidth}
        \includegraphics[height=3.0cm]{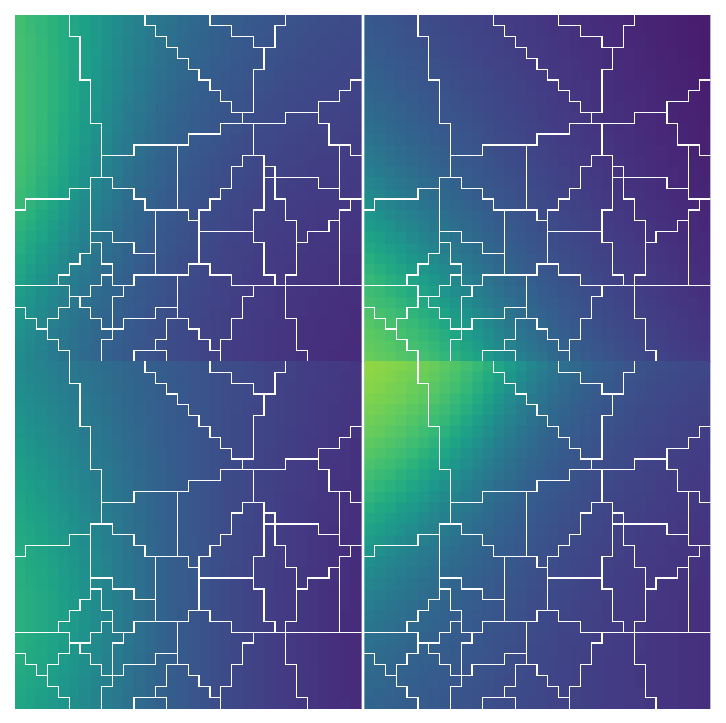}
        \caption{Pre-fit}
        \label{fig:sub2}
    \end{subfigure}
    \hfill
    \begin{subfigure}[b]{0.2\textwidth}
        \includegraphics[height=3.0cm]{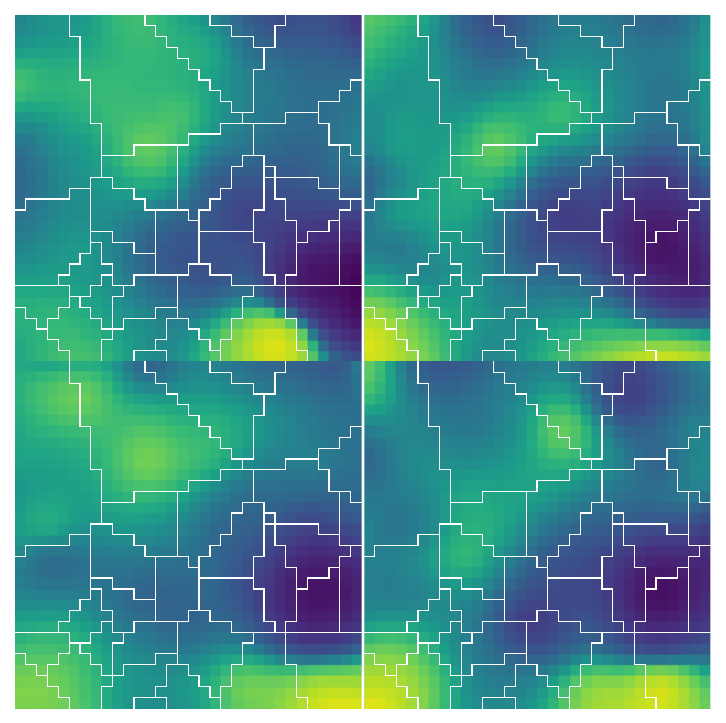}
        \caption{Post-fit}
        \label{fig:sub3}
    \end{subfigure}
    \hfill
    \begin{subfigure}[b]{0.36\textwidth}
        \includegraphics[height=3.0cm]{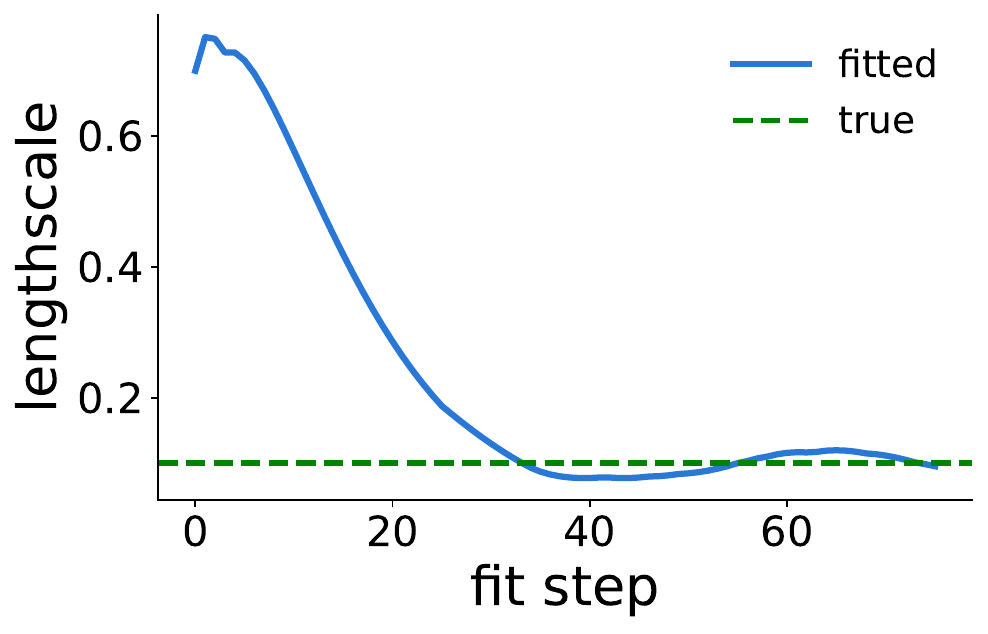}
        \caption{Optimisation trajectory}
        \label{fig:sub4}
    \end{subfigure}

    \vskip\baselineskip
    \begin{subfigure}[b]{0.24\textwidth}
        \includegraphics[height=3.0cm]{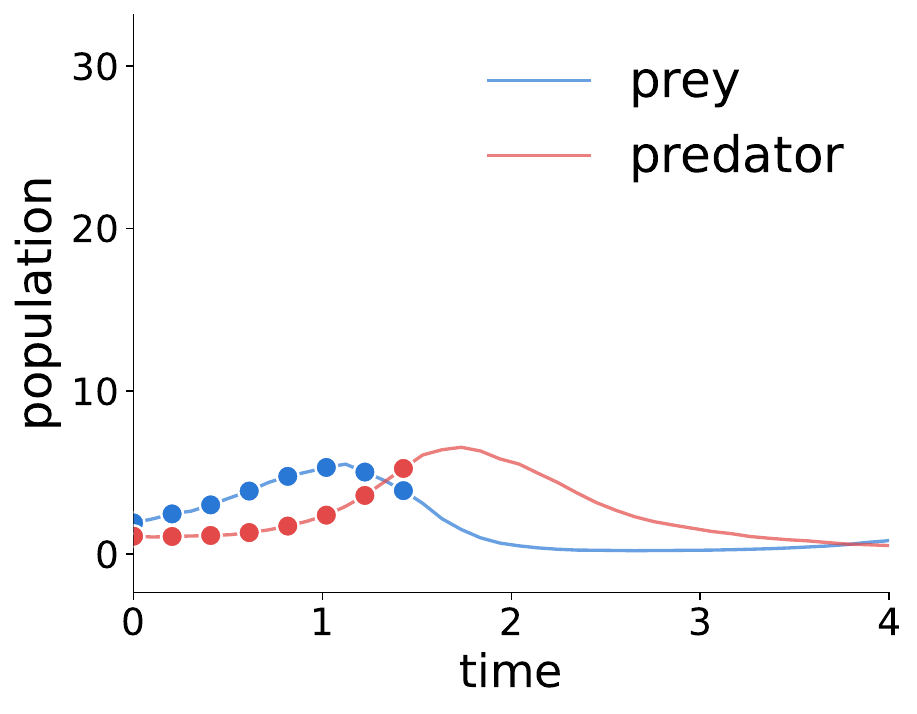}
        \caption{Ground truth}
        \label{fig:sub5}
    \end{subfigure}
    \hfill
    \begin{subfigure}[b]{0.18\textwidth}
        \includegraphics[height=3.0cm]{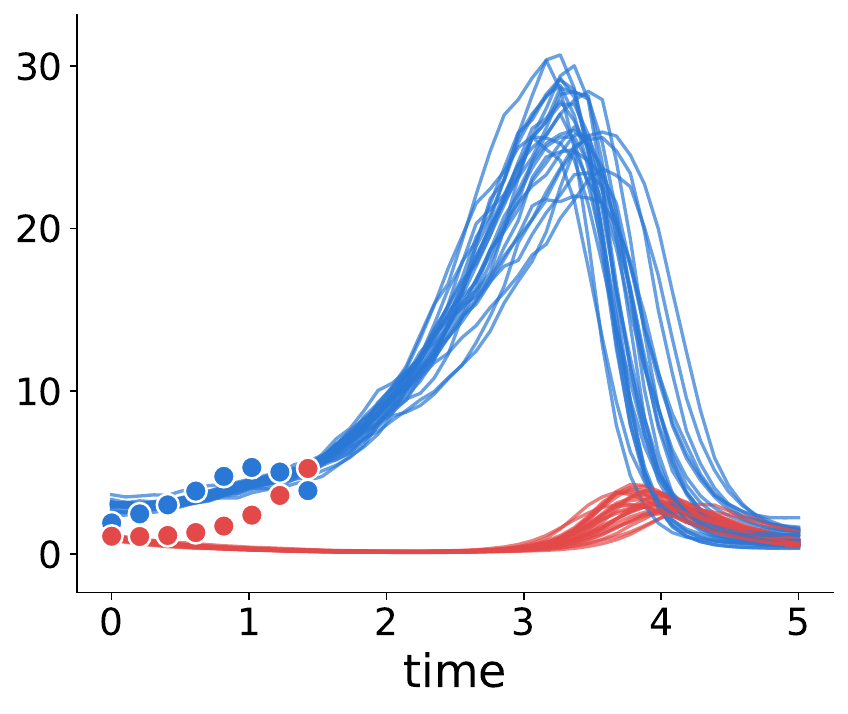}
        \caption{Pre-fit}
        \label{fig:sub6}
    \end{subfigure}
    \hfill
    \begin{subfigure}[b]{0.18\textwidth}
        \includegraphics[height=3.0cm]{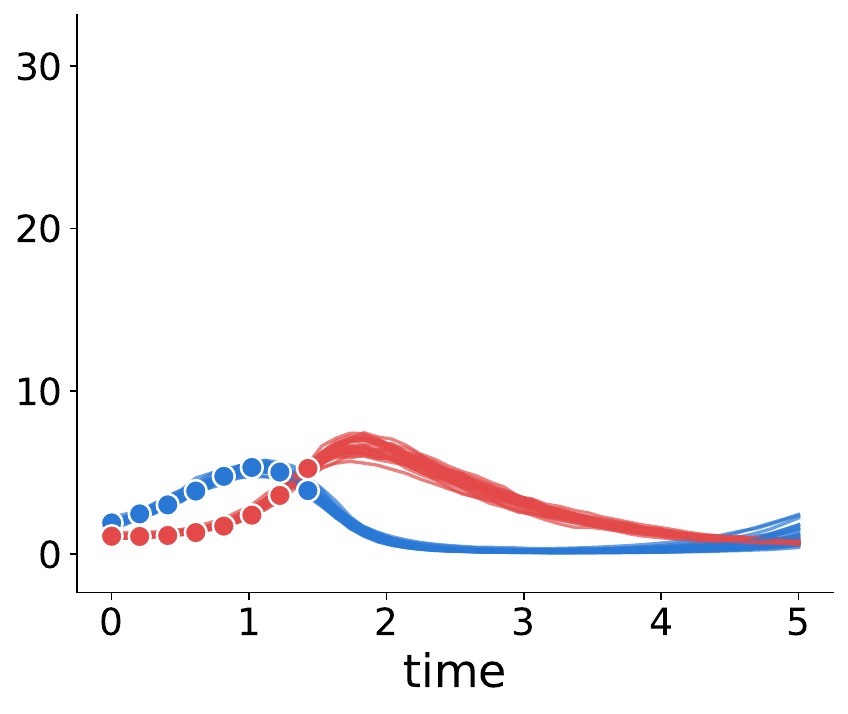}
        \caption{Post-fit}
        \label{fig:sub7}
    \end{subfigure}
    \hfill
    \begin{subfigure}[b]{0.36\textwidth}
        \includegraphics[height=3.0cm]{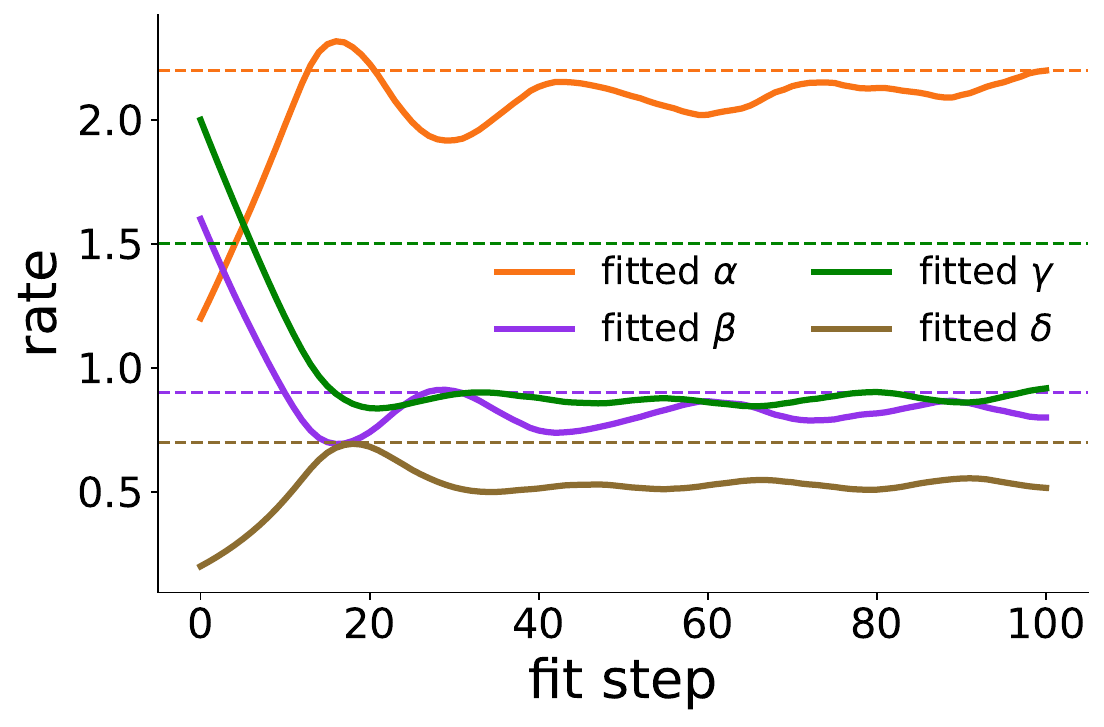}
        \caption{Optimisation trajectory}
        \label{fig:sub8}
    \end{subfigure}
\vspace{2mm}
\caption{
Guidance-aware parameter estimation recovers reasonable process
hyperparameters, modulo identifiability.
\textbf{Top}: Cauchy-convolution field. (a) Truth, observed only via the \emph{maximum} value within each of $K=32$ irregular regions. 
(b) Guided posterior samples under a mis-specified lengthscale ($\ell=0.7$). 
(c) Samples after adapting the lengthscale against the
regional-maximum condition. 
(d) Lengthscale trajectory converging to the true value ($\ell=0.1$, dashed).
\textbf{Bottom}: stochastic Lotka--Volterra process. (e) Truth, with dense early observations (dots). 
(f) Guided posterior samples under mis-specified rates.
(g) Samples after jointly fitting all four rates against the same early-window condition. 
(h) Identifiable rates converge towards true values (dashed).
}
    \label{fig:parameter-estimation}
\end{figure}
\section{Discussion}
\label{sec:conclusions}

\textsc{LatentFlow} replaces the bespoke, model-by-model samplers that have previously been required for conditioning stochastic processes with a single, unified approach. 
The central idea is to move conditioning from process space to a latent innovation space, with the likelihood pulled back through the process generator and the resulting conditional law sampled using guided reverse-time dynamics.
This gives an exact target-law construction, with practical approximation confined to three explicit and systematically reducible sources: finite terminal noising, Monte Carlo guidance, and time discretisation.
This is a fundamental shift in capability.
We expect the largest gains for rare, nonlinear, or global conditioning events under non-Gaussian and simulator-defined process priors, including extreme events, pathwise conditions, and constraints on SDE and SPDE driven models.

Several limitations remain.
\textsc{LatentFlow} requires a \emph{tractable} latent innovation representation of the discretised prior, and its performance can depend on the dimension, smoothness, and geometry of this representation.
It also requires pointwise evaluation of the pulled-back likelihood, and ideally that the likelihood is differentiable through the generator.
Discontinuous simulators, hard constraints, rare events, sharp likelihoods, multimodal conditionals, or poorly scaled latent parameterisations may therefore require relaxations, surrogates, derivative-free guidance, or larger Monte Carlo budgets.

\bibliographystyle{iclr2026_conference}
\bibliography{refs}

\begin{thebibliography}{174}
\providecommand{\natexlab}[1]{#1}
\providecommand{\url}[1]{\texttt{#1}}
\expandafter\ifx\csname urlstyle\endcsname\relax
  \providecommand{\doi}[1]{doi: #1}\else
  \providecommand{\doi}{doi: \begingroup \urlstyle{rm}\Url}\fi

\bibitem[Achituve et~al.(2025)Achituve, Habi, Rosenfeld, Netzer, Diamant, and
  Fetaya]{achituve2025inverse}
Idan Achituve, Hai~Victor Habi, Amir Rosenfeld, Arnon Netzer, Idit Diamant, and
  Ethan Fetaya.
\newblock Inverse problem sampling in latent space using sequential {Monte
  Carlo}.
\newblock In \emph{Proceedings of the 42nd International Conference on Machine
  Learning}, pp.\  420--443, 2025.

\bibitem[Agrell(2019)]{agrell2019gaussian}
Christian Agrell.
\newblock {Gaussian} processes with linear operator inequality constraints.
\newblock \emph{Journal of Machine Learning Research}, 20\penalty0
  (135):\penalty0 1--36, 2019.

\bibitem[Ajalloeian \& Stich(2020)Ajalloeian and
  Stich]{ajalloeian2020convergence}
Ahmad Ajalloeian and Sebastian~U Stich.
\newblock On the convergence of sgd with biased gradients.
\newblock \emph{arXiv preprint arXiv:2008.00051}, 2020.

\bibitem[Albergo \& Vanden-Eijnden(2023)Albergo and
  Vanden-Eijnden]{albergo2023building}
Michael~S. Albergo and Eric Vanden-Eijnden.
\newblock Building normalizing flows with stochastic interpolants.
\newblock In \emph{Proceedings of the 11th International Conference on Learning
  Representations}, 2023.

\bibitem[Albergo et~al.(2025)Albergo, Boffi, and
  Vanden-Eijnden]{albergo2023stochastic}
Michael~S. Albergo, Nicholas~M. Boffi, and Eric Vanden-Eijnden.
\newblock Stochastic interpolants: A unifying framework for flows and
  diffusions.
\newblock \emph{Journal of Machine Learning Research}, 26\penalty0
  (209):\penalty0 1--80, 2025.

\bibitem[Allen \& Cahn(1979)Allen and Cahn]{allen1979microscopic}
Samuel~M. Allen and John~W. Cahn.
\newblock A microscopic theory for antiphase boundary motion and its
  application to antiphase domain coarsening.
\newblock \emph{Acta Metallurgica}, 27\penalty0 (6):\penalty0 1085--1095, 1979.

\bibitem[{\'A}lvarez et~al.(2009){\'A}lvarez, Luengo, and
  Lawrence]{alvarez2009latent}
Mauricio {\'A}lvarez, David Luengo, and Neil~D. Lawrence.
\newblock Latent force models.
\newblock In \emph{Proceedings of The 12th International Conference on
  Artificial Intelligence and Statistics}, pp.\  9--16, 2009.

\bibitem[{\'A}lvarez et~al.(2013){\'A}lvarez, Luengo, and
  Lawrence]{alvarez2013linear}
Mauricio~A. {\'A}lvarez, David Luengo, and Neil~D. Lawrence.
\newblock Linear latent force models using {Gaussian} processes.
\newblock \emph{IEEE Transactions on Pattern Analysis and Machine
  Intelligence}, 35\penalty0 (11):\penalty0 2693--2705, 2013.

\bibitem[Anderson(1982)]{anderson1982reverse}
Brian D.~O. Anderson.
\newblock Reverse-time diffusion equation models.
\newblock \emph{Stochastic Processes and their Applications}, 12\penalty0
  (3):\penalty0 313--326, 1982.

\bibitem[Andrieu et~al.(2003)Andrieu, de~Freitas, Doucet, and
  Jordan]{andrieu2003introduction}
Christophe Andrieu, Nando de~Freitas, Arnaud Doucet, and Michael~I. Jordan.
\newblock An introduction to {MCMC} for machine learning.
\newblock \emph{Machine Learning}, 50\penalty0 (1--2):\penalty0 5--43, 2003.

\bibitem[Arbel et~al.(2021)Arbel, Matthews, and Doucet]{arbel2021annealed}
Michael Arbel, Alexander G. D.~G. Matthews, and Arnaud Doucet.
\newblock Annealed flow transport {Monte Carlo}.
\newblock In \emph{Proceedings of the 38th International Conference on Machine
  Learning}, pp.\  318--330, 2021.

\bibitem[Arnaudon et~al.(2022)Arnaudon, {van der Meulen}, Schauer, and
  Sommer]{arnaudon2022diffusion}
Alexis Arnaudon, Frank {van der Meulen}, Moritz Schauer, and Stefan Sommer.
\newblock Diffusion bridges for stochastic hamiltonian systems and shape
  evolutions.
\newblock \emph{SIAM Journal on Imaging Sciences}, 15\penalty0 (1):\penalty0
  293--323, 2022.

\bibitem[Askari et~al.(2025)Askari, Luo, Sun, and Roosta]{askari2025latent}
Hossein Askari, Yadan Luo, Hongfu Sun, and Fred Roosta.
\newblock Latent refinement via flow matching for training-free linear inverse
  problem solving.
\newblock In \emph{Advances in Neural Information Processing Systems},
  volume~38, 2025.

\bibitem[Astfalck et~al.(2018)Astfalck, Sen, Patra, Cripps, and
  Dunson]{astfalck2026posterior}
Lachlan Astfalck, Deborshee Sen, Sayan Patra, Edward Cripps, and David Dunson.
\newblock Posterior projection for inference in constrained spaces.
\newblock \emph{arXiv preprint arXiv:1812.05741}, 2018.

\bibitem[Astfalck et~al.(2024)Astfalck, Bird, and
  Williamson]{astfalck2024generalised}
Lachlan Astfalck, Cassandra Bird, and Daniel Williamson.
\newblock Generalised {B}ayes linear inference.
\newblock \emph{arXiv preprint arXiv:2405.14145}, 2024.

\bibitem[Baker et~al.(2026)Baker, Denker, and Frellsen]{baker2026supervised}
Elizabeth~L. Baker, Alexander Denker, and Jes Frellsen.
\newblock Supervised guidance training for infinite-dimensional diffusion
  models.
\newblock In \emph{Proceedings of the 43rd International Conference on Machine
  Learning}, 2026.

\bibitem[Baker et~al.(2024)Baker, Yang, Severinsen, Hipsley, and
  Sommer]{baker2024conditioning}
Elizabeth~Louise Baker, Gefan Yang, Michael~Lind Severinsen, Christy~Anna
  Hipsley, and Stefan Sommer.
\newblock Conditioning non-linear and infinite-dimensional diffusion processes.
\newblock In \emph{Advances in Neural Information Processing Systems},
  volume~37, pp.\  10801--10826, 2024.

\bibitem[Baker et~al.(2025)Baker, Schauer, and Sommer]{baker2025score}
Elizabeth~Louise Baker, Moritz Schauer, and Stefan Sommer.
\newblock Score matching for bridges without learning time-reversals.
\newblock In \emph{Proceedings of The 28th International Conference on
  Artificial Intelligence and Statistics}, pp.\  775--783, 2025.

\bibitem[Baldassari et~al.(2023)Baldassari, Siahkoohi, Garnier, S{\o}lna, and
  de~Hoop]{baldassari2023conditional}
Lorenzo Baldassari, Ali Siahkoohi, Josselin Garnier, Knut S{\o}lna, and
  Maarten~V. de~Hoop.
\newblock Conditional score-based diffusion models for {Bayesian} inference in
  infinite dimensions.
\newblock In \emph{Advances in Neural Information Processing Systems},
  volume~36, pp.\  24262--24290, 2023.

\bibitem[Baldassari et~al.(2024)Baldassari, Siahkoohi, Garnier, S{\o}lna, and
  de~Hoop]{baldassari2024taming}
Lorenzo Baldassari, Ali Siahkoohi, Josselin Garnier, Knut S{\o}lna, and
  Maarten~V. de~Hoop.
\newblock Taming score-based diffusion priors for infinite-dimensional
  nonlinear inverse problems.
\newblock \emph{arXiv preprint arXiv:2405.15676}, 2024.

\bibitem[Baldassari et~al.(2025)Baldassari, Garnier, S{\o}lna, and
  de~Hoop]{baldassari2025preconditioned}
Lorenzo Baldassari, Josselin Garnier, Knut S{\o}lna, and Maarten~V. de~Hoop.
\newblock Preconditioned {Langevin} dynamics with score-based generative models
  for infinite-dimensional linear {Bayesian} inverse problems.
\newblock In \emph{Advances in Neural Information Processing Systems},
  volume~38, 2025.

\bibitem[Bansal et~al.(2024)Bansal, Chu, Schwarzschild, Sengupta, Goldblum,
  Geiping, and Goldstein]{bansal2023universal}
Arpit Bansal, Hong-Min Chu, Avi Schwarzschild, Roni Sengupta, Micah Goldblum,
  Jonas Geiping, and Tom Goldstein.
\newblock Universal guidance for diffusion models.
\newblock In \emph{Proceedings of the 12th International Conference on Learning
  Representations}, 2024.

\bibitem[Bernton et~al.(2019)Bernton, Heng, Doucet, and
  Jacob]{bernton2019schrodinger}
Espen Bernton, Jeremy Heng, Arnaud Doucet, and Pierre~E. Jacob.
\newblock Schr{\"o}dinger bridge samplers.
\newblock \emph{arXiv preprint arXiv:1912.13170}, 2019.

\bibitem[Besag(1974)]{besag1974spatial}
Julian Besag.
\newblock Spatial interaction and the statistical analysis of lattice systems.
\newblock \emph{Journal of the Royal Statistical Society: Series B
  (Methodological)}, 36\penalty0 (2):\penalty0 192--225, 1974.

\bibitem[Beskos et~al.(2011)Beskos, Pinski, Sanz-Serna, and
  Stuart]{beskos2011hybrid}
Alexandros Beskos, Frank~J. Pinski, Jesus~Maria Sanz-Serna, and Andrew~M.
  Stuart.
\newblock Hybrid {Monte Carlo} on hilbert spaces.
\newblock \emph{Stochastic Processes and their Applications}, 121\penalty0
  (10):\penalty0 2201--2230, 2011.

\bibitem[Bierkens et~al.(2020)Bierkens, van~der Meulen, and
  Schauer]{bierkens2020simulation}
Joris Bierkens, Frank van~der Meulen, and Moritz Schauer.
\newblock Simulation of elliptic and hypo-elliptic conditional diffusions.
\newblock \emph{Advances in Applied Probability}, 52\penalty0 (1):\penalty0
  173--212, 2020.

\bibitem[Bladt \& S{\o}rensen(2014)Bladt and S{\o}rensen]{bladt2014simple}
Mogens Bladt and Michael S{\o}rensen.
\newblock Simple simulation of diffusion bridges with application to likelihood
  inference for diffusions.
\newblock \emph{Bernoulli}, 20\penalty0 (2):\penalty0 645--675, 2014.

\bibitem[Bladt et~al.(2016)Bladt, Finch, and S{\o}rensen]{bladt2016simulation}
Mogens Bladt, Samuel Thomas~John Finch, and Michael S{\o}rensen.
\newblock Simulation of multivariate diffusion bridges.
\newblock \emph{Journal of the Royal Statistical Society: Series B (Statistical
  Methodology)}, 78\penalty0 (2):\penalty0 343--369, 2016.

\bibitem[Blei et~al.(2017)Blei, Kucukelbir, and McAuliffe]{blei2017variational}
David~M. Blei, Alp Kucukelbir, and Jon~D. McAuliffe.
\newblock Variational inference: A review for statisticians.
\newblock \emph{Journal of the American Statistical Association}, 112\penalty0
  (518):\penalty0 859--877, 2017.

\bibitem[B{\"o}hm et~al.(2019)B{\"o}hm, Lanusse, and
  Seljak]{bohm2019uncertainty}
Vanessa B{\"o}hm, Fran{\c{c}}ois Lanusse, and Uro{\v{s}} Seljak.
\newblock Uncertainty quantification with generative models.
\newblock \emph{arXiv preprint arXiv:1910.10046}, 2019.

\bibitem[Bolhuis \& Swenson(2021)Bolhuis and Swenson]{bolhuis2021transition}
Peter~G. Bolhuis and David W.~H. Swenson.
\newblock Transition path sampling as {Markov} chain {Monte Carlo} of
  trajectories: Recent algorithms, software, applications, and future outlook.
\newblock \emph{Advanced Theory and Simulations}, 4\penalty0 (4):\penalty0
  2000237, 2021.

\bibitem[Bolhuis et~al.(2002)Bolhuis, Chandler, Dellago, and
  Geissler]{bolhuis2002transition}
Peter~G. Bolhuis, David Chandler, Christoph Dellago, and Phillip~L. Geissler.
\newblock Transition path sampling: Throwing ropes over rough mountain passes,
  in the dark.
\newblock \emph{Annual Review of Physical Chemistry}, 53:\penalty0 291--318,
  2002.

\bibitem[Botev \& L'Ecuyer(2020)Botev and L'Ecuyer]{botev2020sampling}
Zdravko~I. Botev and Pierre L'Ecuyer.
\newblock Sampling conditionally on a rare event via generalized splitting.
\newblock \emph{INFORMS Journal on Computing}, 32\penalty0 (4):\penalty0
  986--995, 2020.

\bibitem[Cabezas \& Nemeth(2023)Cabezas and Nemeth]{cabezas2023transport}
Alberto Cabezas and Christopher Nemeth.
\newblock Transport elliptical slice sampling.
\newblock In \emph{Proceedings of The 26th International Conference on
  Artificial Intelligence and Statistics}, pp.\  3664--3676, 2023.

\bibitem[Cabezas et~al.(2024)Cabezas, Sharrock, and
  Nemeth]{cabezas2024markovian}
Alberto Cabezas, Louis Sharrock, and Christopher Nemeth.
\newblock Markovian flow matching: Accelerating {MCMC} with continuous
  normalizing flows.
\newblock In \emph{Advances in Neural Information Processing Systems},
  volume~37, pp.\  104383--104411, 2024.

\bibitem[Cardoso et~al.(2024)Cardoso, Janati El~Idrissi, Le~Corff, and
  Moulines]{cardoso2024monte}
Gabriel Cardoso, Yazid Janati El~Idrissi, Sylvain Le~Corff, and Eric Moulines.
\newblock {Monte Carlo} guided denoising diffusion models for {Bayesian} linear
  inverse problems.
\newblock In \emph{Proceedings of the 12th International Conference on Learning
  Representations}, 2024.

\bibitem[C{\'e}rou \& Guyader(2007)C{\'e}rou and Guyader]{cerou2007adaptive}
Fr{\'e}d{\'e}ric C{\'e}rou and Arnaud Guyader.
\newblock Adaptive multilevel splitting for rare event analysis.
\newblock \emph{Stochastic Analysis and Applications}, 25\penalty0
  (2):\penalty0 417--443, 2007.

\bibitem[C{\'e}rou et~al.(2012)C{\'e}rou, Del~Moral, Furon, and
  Guyader]{cerou2012sequential}
Fr{\'e}d{\'e}ric C{\'e}rou, Pierre Del~Moral, Teddy Furon, and Arnaud Guyader.
\newblock Sequential {Monte Carlo} for rare event estimation.
\newblock \emph{Statistics and Computing}, 22\penalty0 (3):\penalty0 795--808,
  2012.

\bibitem[Chen et~al.(2021)Chen, Hosseini, Owhadi, and Stuart]{chen2021solving}
Yifan Chen, Bamdad Hosseini, Houman Owhadi, and Andrew~M. Stuart.
\newblock Solving and learning nonlinear {PDE}s with {Gaussian} processes.
\newblock \emph{Journal of Computational Physics}, 447:\penalty0 110668, 2021.

\bibitem[Chen et~al.(2016)Chen, Georgiou, and Pavon]{chen2016relation}
Yongxin Chen, Tryphon~T. Georgiou, and Michele Pavon.
\newblock On the relation between optimal transport and {Schr{\"o}dinger}
  bridges: A stochastic control viewpoint.
\newblock \emph{Journal of Optimization Theory and Applications}, 169\penalty0
  (2):\penalty0 671--691, 2016.

\bibitem[Chung et~al.(2023)Chung, Kim, McCann, Klasky, and
  Ye]{chung2023diffusion}
Hyungjin Chung, Jeongsol Kim, Michael~T. McCann, Marc~L. Klasky, and Jong~Chul
  Ye.
\newblock Diffusion posterior sampling for general noisy inverse problems.
\newblock In \emph{Proceedings of the 11th International Conference on Learning
  Representations}, 2023.

\bibitem[Corenflos et~al.(2025)Corenflos, Zhao, Sch{\"o}n, S{\"a}rkk{\"a}, and
  Sj{\"o}lund]{corenflos2025conditioning}
Adrien Corenflos, Zheng Zhao, Thomas~B. Sch{\"o}n, Simo S{\"a}rkk{\"a}, and
  Jens Sj{\"o}lund.
\newblock Conditioning diffusion models by explicit forward-backward bridging.
\newblock In \emph{Proceedings of The 28th International Conference on
  Artificial Intelligence and Statistics}, pp.\  3709--3717, 2025.

\bibitem[Cotter et~al.(2013)Cotter, Roberts, Stuart, and White]{cotter2013mcmc}
Simon~L. Cotter, Gareth~O. Roberts, Andrew~M. Stuart, and David White.
\newblock {MCMC} methods for functions: Modifying old algorithms to make them
  faster.
\newblock \emph{Statistical Science}, 28\penalty0 (3):\penalty0 424--446, 2013.

\bibitem[Cressie(1993)]{cressie1993statistics}
Noel A.~C. Cressie.
\newblock \emph{Statistics for Spatial Data}.
\newblock John Wiley \& Sons, New York, revised edition, 1993.
\newblock ISBN 9780471002550.

\bibitem[Da~Veiga \& Marrel(2012)Da~Veiga and Marrel]{daveiga2012gaussian}
S{\'e}bastien Da~Veiga and Amandine Marrel.
\newblock {Gaussian} process modeling with inequality constraints.
\newblock \emph{Annales de la Facult{\'e} des Sciences de Toulouse:
  Math{\'e}matiques}, 21\penalty0 (3):\penalty0 529--555, 2012.

\bibitem[Damianou \& Lawrence(2013)Damianou and Lawrence]{damianou2013deep}
Andreas Damianou and Neil~D. Lawrence.
\newblock Deep {Gaussian} processes.
\newblock In \emph{Proceedings of The 16th International Conference on
  Artificial Intelligence and Statistics}, pp.\  207--215, 2013.

\bibitem[Dashti \& Stuart(2017)Dashti and Stuart]{dashti2017bayesian}
Masoumeh Dashti and Andrew~M. Stuart.
\newblock The {Bayesian} approach to inverse problems.
\newblock In Roger Ghanem, David Higdon, and Houman Owhadi (eds.),
  \emph{Handbook of Uncertainty Quantification}, pp.\  311--428. Springer,
  Cham, 2017.

\bibitem[De~Bortoli et~al.(2021)De~Bortoli, Thornton, Heng, and
  Doucet]{debortoli2021diffusion}
Valentin De~Bortoli, James Thornton, Jeremy Heng, and Arnaud Doucet.
\newblock Diffusion {Schr{\"o}dinger} bridge with applications to score-based
  generative modeling.
\newblock In \emph{Advances in Neural Information Processing Systems},
  volume~34, pp.\  17695--17709, 2021.

\bibitem[Del~Moral \& Garnier(2005)Del~Moral and
  Garnier]{delmoral2005genealogical}
Pierre Del~Moral and Josselin Garnier.
\newblock Genealogical particle analysis of rare events.
\newblock \emph{The Annals of Applied Probability}, 15\penalty0 (4):\penalty0
  2496--2534, 2005.

\bibitem[Del~Moral et~al.(2006)Del~Moral, Doucet, and
  Jasra]{delmoral2006sequential}
Pierre Del~Moral, Arnaud Doucet, and Ajay Jasra.
\newblock Sequential {Monte Carlo} samplers.
\newblock \emph{Journal of the Royal Statistical Society: Series B (Statistical
  Methodology)}, 68\penalty0 (3):\penalty0 411--436, 2006.

\bibitem[Dellago et~al.(1998)Dellago, Bolhuis, Csajka, and
  Chandler]{dellago1998transition}
Christoph Dellago, Peter~G. Bolhuis, Felix~S. Csajka, and David Chandler.
\newblock Transition path sampling and the calculation of rate constants.
\newblock \emph{The Journal of Chemical Physics}, 108\penalty0 (5):\penalty0
  1964--1977, 1998.

\bibitem[Dellago et~al.(2002)Dellago, Bolhuis, and
  Geissler]{dellago2002transition}
Christoph Dellago, Peter~G. Bolhuis, and Phillip~L. Geissler.
\newblock Transition path sampling.
\newblock In Ilya Prigogine and Stuart~A. Rice (eds.), \emph{Advances in
  Chemical Physics}, volume 123, pp.\  1--78. John Wiley \& Sons, 2002.

\bibitem[Delyon \& Hu(2006)Delyon and Hu]{delyon2006simulation}
Bernard Delyon and Ying Hu.
\newblock Simulation of conditioned diffusion and application to parameter
  estimation.
\newblock \emph{Stochastic Processes and their Applications}, 116\penalty0
  (11):\penalty0 1660--1675, 2006.

\bibitem[Denker et~al.(2024)Denker, Vargas, Padhy, Didi, Mathis, Dutordoir,
  Barbano, Mathieu, Komorowska, and Li{\`o}]{denker2024deft}
Alexander Denker, Francisco Vargas, Shreyas Padhy, Kieran Didi, Simon Mathis,
  Vincent Dutordoir, Riccardo Barbano, Emile Mathieu, Urszula~Julia Komorowska,
  and Pietro Li{\`o}.
\newblock {DEFT}: Efficient fine-tuning of diffusion models by learning the
  generalised {$h$}-transform.
\newblock In \emph{Advances in Neural Information Processing Systems},
  volume~37, pp.\  19636--19682, 2024.

\bibitem[Dhariwal \& Nichol(2021)Dhariwal and Nichol]{dhariwal2021diffusion}
Prafulla Dhariwal and Alexander~Quinn Nichol.
\newblock Diffusion models beat {GAN}s on image synthesis.
\newblock In \emph{Advances in Neural Information Processing Systems},
  volume~34, pp.\  8780--8794, 2021.

\bibitem[Didi et~al.(2023)Didi, Vargas, Mathis, Dutordoir, Mathieu, Komorowska,
  and Li{\`o}]{didi2024framework}
Kieran Didi, Francisco Vargas, Simon~V. Mathis, Vincent Dutordoir, Emile
  Mathieu, Urszula~J. Komorowska, and Pietro Li{\`o}.
\newblock A framework for conditional diffusion modelling with applications in
  motif scaffolding for protein design.
\newblock In \emph{NeurIPS 2023 Workshop AI4D3}, 2023.

\bibitem[Donsker \& Varadhan(1975)Donsker and Varadhan]{donsker1975asymptotic}
Monroe~D. Donsker and Sathamangalam R.~S. Varadhan.
\newblock Asymptotic evaluation of certain {Markov} process expectations for
  large time, {I}.
\newblock \emph{Communications on Pure and Applied Mathematics}, 28\penalty0
  (1):\penalty0 1--47, 1975.

\bibitem[Doob(1957)]{doob1957conditional}
Joseph~L. Doob.
\newblock Conditional {Brownian} motion and the boundary limits of harmonic
  functions.
\newblock \emph{Bulletin de la Soci{\'e}t{\'e} Math{\'e}matique de France},
  85:\penalty0 431--458, 1957.

\bibitem[Doucet et~al.(2001)Doucet, de~Freitas, and
  Gordon]{doucet2001sequential}
Arnaud Doucet, Nando de~Freitas, and Neil Gordon (eds.).
\newblock \emph{Sequential {Monte Carlo} Methods in Practice}.
\newblock Information Science and Statistics. Springer, New York, NY, 2001.
\newblock ISBN 978-0-387-95146-1.

\bibitem[Dutordoir et~al.(2023)Dutordoir, Saul, Ghahramani, and
  Simpson]{dutordoir2023neural}
Vincent Dutordoir, Alan Saul, Zoubin Ghahramani, and Fergus Simpson.
\newblock Neural diffusion processes.
\newblock In \emph{Proceedings of the 40th International Conference on Machine
  Learning}, pp.\  8990--9012, 2023.

\bibitem[El~Moselhy \& Marzouk(2012)El~Moselhy and
  Marzouk]{elmoselhy2012bayesian}
Tarek~A. El~Moselhy and Youssef~M. Marzouk.
\newblock {Bayesian} inference with optimal maps.
\newblock \emph{Journal of Computational Physics}, 231\penalty0 (23):\penalty0
  7815--7850, 2012.

\bibitem[Elerian et~al.(2001)Elerian, Chib, and
  Shephard]{elerian2001likelihood}
Ola Elerian, Siddhartha Chib, and Neil Shephard.
\newblock Likelihood inference for discretely observed nonlinear diffusions.
\newblock \emph{Econometrica}, 69\penalty0 (4):\penalty0 959--993, 2001.

\bibitem[Feng et~al.(2025)Feng, Yu, Deng, Hu, and Wu]{feng2025guidance}
Ruiqi Feng, Chenglei Yu, Wenhao Deng, Peiyan Hu, and Tailin Wu.
\newblock On the guidance of flow matching.
\newblock In \emph{Proceedings of the 42nd International Conference on Machine
  Learning}, pp.\  16993--17029, 2025.

\bibitem[Finzi et~al.(2023)Finzi, Boral, Wilson, Sha, and
  Zepeda-N{\'u}{\~n}ez]{finzi2023user}
Marc~Anton Finzi, Anudhyan Boral, Andrew~Gordon Wilson, Fei Sha, and Leonardo
  Zepeda-N{\'u}{\~n}ez.
\newblock User-defined event sampling and uncertainty quantification in
  diffusion models for physical dynamical systems.
\newblock In \emph{Proceedings of the 40th International Conference on Machine
  Learning}, pp.\  10136--10152, 2023.

\bibitem[FitzHugh(1961)]{fitzhugh1961impulses}
Richard FitzHugh.
\newblock Impulses and physiological states in theoretical models of nerve
  membrane.
\newblock \emph{Biophysical Journal}, 1\penalty0 (6):\penalty0 445--466, 1961.

\bibitem[F{\"o}llmer(1985)]{follmer1985entropy}
Hans F{\"o}llmer.
\newblock An entropy approach to the time reversal of diffusion processes.
\newblock In Michel M{\'e}tivier and {\'E}tienne Pardoux (eds.),
  \emph{Stochastic Differential Systems: Filtering and Control}, volume~69 of
  \emph{Lecture Notes in Control and Information Sciences}, pp.\  156--163.
  Springer, Berlin, Heidelberg, 1985.

\bibitem[Franzese et~al.(2023)Franzese, Corallo, Rossi, Heinonen, Filippone,
  and Michiardi]{franzese2023continuous}
Giulio Franzese, Giulio Corallo, Simone Rossi, Markus Heinonen, Maurizio
  Filippone, and Pietro Michiardi.
\newblock Continuous-time functional diffusion processes.
\newblock In \emph{Advances in Neural Information Processing Systems},
  volume~36, pp.\  37370--37400, 2023.

\bibitem[Frigola et~al.(2013)Frigola, Lindsten, Sch{\"o}n, and
  Rasmussen]{frigola2013bayesian}
Roger Frigola, Fredrik Lindsten, Thomas~B. Sch{\"o}n, and Carl~Edward
  Rasmussen.
\newblock {Bayesian} inference and learning in {Gaussian} process state-space
  models with particle {MCMC}.
\newblock In \emph{Advances in Neural Information Processing Systems},
  volume~26, pp.\  3156--3164, 2013.

\bibitem[Frigola et~al.(2014)Frigola, Chen, and
  Rasmussen]{frigola2014variational}
Roger Frigola, Yutian Chen, and Carl~Edward Rasmussen.
\newblock Variational {Gaussian} process state-space models.
\newblock In \emph{Advances in Neural Information Processing Systems},
  volume~27, pp.\  3680--3688, 2014.

\bibitem[Garnelo et~al.(2018)Garnelo, Schwarz, Rosenbaum, Viola, Rezende,
  Eslami, and Teh]{garnelo2018neural}
Marta Garnelo, Jonathan Schwarz, Dan Rosenbaum, Fabio Viola, Danilo~J. Rezende,
  S.~M.~Ali Eslami, and Yee~Whye Teh.
\newblock Neural processes.
\newblock \emph{arXiv preprint arXiv:1807.01622}, 2018.

\bibitem[Girsanov(1960)]{girsanov1960transforming}
Igor~V. Girsanov.
\newblock On transforming a certain class of stochastic processes by absolutely
  continuous substitution of measures.
\newblock \emph{Theory of Probability \& Its Applications}, 5\penalty0
  (3):\penalty0 285--301, 1960.

\bibitem[Golightly \& Wilkinson(2008)Golightly and
  Wilkinson]{golightly2008bayesian}
Andrew Golightly and Darren~J. Wilkinson.
\newblock Bayesian inference for nonlinear multivariate diffusion models
  observed with error.
\newblock \emph{Computational Statistics \& Data Analysis}, 52\penalty0
  (3):\penalty0 1674--1693, 2008.

\bibitem[Gordon et~al.(2020)Gordon, Bruinsma, Foong, Requeima, Dubois, and
  Turner]{gordon2020convolutional}
Jonathan Gordon, Wessel~P. Bruinsma, Andrew Y.~K. Foong, James Requeima, Yann
  Dubois, and Richard~E. Turner.
\newblock Convolutional conditional neural processes.
\newblock In \emph{Proceedings of the 8th International Conference on Learning
  Representations}, 2020.

\bibitem[Grafke(2025)]{grafke2025sampling}
Tobias Grafke.
\newblock Sampling conditioned diffusions via pathspace projected monte carlo.
\newblock \emph{arXiv preprint arXiv:2506.15743}, 2025.

\bibitem[Graham \& Storkey(2017)Graham and Storkey]{graham2017asymptotically}
Matthew Graham and Amos Storkey.
\newblock Asymptotically exact inference in differentiable generative models.
\newblock In \emph{Proceedings of the 20th International Conference on
  Artificial Intelligence and Statistics}, pp.\  499--508, 2017.

\bibitem[Graham et~al.(2022)Graham, Thiery, and Beskos]{graham2022manifold}
Matthew~M. Graham, Alexandre~H. Thiery, and Alexandros Beskos.
\newblock Manifold {Markov} chain {Monte Carlo} methods for {Bayesian}
  inference in diffusion models.
\newblock \emph{Journal of the Royal Statistical Society: Series B (Statistical
  Methodology)}, 84\penalty0 (4):\penalty0 1229--1256, 2022.

\bibitem[Gulian et~al.(2022)Gulian, Frankel, and Swiler]{gulian2020gaussian}
Mamikon Gulian, Ari~L. Frankel, and Laura~P. Swiler.
\newblock {Gaussian} process regression constrained by boundary value problems.
\newblock \emph{Computer Methods in Applied Mechanics and Engineering},
  388:\penalty0 114117, 2022.

\bibitem[Guo et~al.(2026)Guo, Tang, and Xu]{guo2026conditional}
Zhengyi Guo, Wenpin Tang, and Renyuan Xu.
\newblock Conditional diffusion guidance under hard constraint: A stochastic
  analysis approach.
\newblock \emph{arXiv preprint arXiv:2602.05533}, 2026.

\bibitem[Hairer et~al.(2014)Hairer, Stuart, and Vollmer]{hairer2014spectral}
Martin Hairer, Andrew~M. Stuart, and Sebastian~J. Vollmer.
\newblock Spectral gaps for a {Metropolis--Hastings} algorithm in infinite
  dimensions.
\newblock \emph{The Annals of Applied Probability}, 24\penalty0 (6):\penalty0
  2455--2490, 2014.

\bibitem[Hamelijnck et~al.(2024)Hamelijnck, Solin, and
  Damoulas]{hamelijnck2024physics}
Oliver Hamelijnck, Arno Solin, and Theodoros Damoulas.
\newblock Physics-informed variational state-space {Gaussian} processes.
\newblock In \emph{Advances in Neural Information Processing Systems},
  volume~37, pp.\  98505--98536, 2024.

\bibitem[Heng et~al.(2025)Heng, De~Bortoli, Doucet, and
  Thornton]{heng2021simulating}
Jeremy Heng, Valentin De~Bortoli, Arnaud Doucet, and James Thornton.
\newblock Simulating diffusion bridges with score matching.
\newblock \emph{Biometrika}, 112\penalty0 (4):\penalty0 asaf048, 2025.

\bibitem[Hensman et~al.(2015)Hensman, Matthews, and
  Ghahramani]{hensman2015scalable}
James Hensman, Alexander G. de~G. Matthews, and Zoubin Ghahramani.
\newblock Scalable variational {Gaussian} process classification.
\newblock In \emph{Proceedings of The 18th International Conference on
  Artificial Intelligence and Statistics}, pp.\  351--360, 2015.

\bibitem[Heston(1993)]{heston1993closed}
Steven~L. Heston.
\newblock A closed-form solution for options with stochastic volatility with
  applications to bond and currency options.
\newblock \emph{The Review of Financial Studies}, 6\penalty0 (2):\penalty0
  327--343, 1993.

\bibitem[Ho \& Salimans(2022)Ho and Salimans]{ho2022classifierfree}
Jonathan Ho and Tim Salimans.
\newblock Classifier-free diffusion guidance.
\newblock \emph{arXiv preprint arXiv:2207.12598}, 2022.

\bibitem[Ho et~al.(2020)Ho, Jain, and Abbeel]{ho2020denoising}
Jonathan Ho, Ajay Jain, and Pieter Abbeel.
\newblock Denoising diffusion probabilistic models.
\newblock In \emph{Advances in Neural Information Processing Systems},
  volume~33, pp.\  6840--6851, 2020.

\bibitem[Hoffman \& Gelman(2014)Hoffman and Gelman]{hoffman2014no}
Matthew~D. Hoffman and Andrew Gelman.
\newblock The {No-U-Turn} sampler: Adaptively setting path lengths in
  {H}amiltonian {M}onte {C}arlo.
\newblock \emph{Journal of Machine Learning Research}, 15\penalty0
  (47):\penalty0 1593--1623, 2014.

\bibitem[Hoffman et~al.(2019)Hoffman, Sountsov, Dillon, Langmore, Tran, and
  Vasudevan]{hoffman2019neutra}
Matthew~D. Hoffman, Pavel Sountsov, Joshua~V. Dillon, Ian Langmore, Dustin
  Tran, and Srinivas Vasudevan.
\newblock {NeuTra}-lizing bad geometry in hamiltonian {Monte Carlo} using
  neural transport.
\newblock \emph{arXiv preprint arXiv:1903.03704}, 2019.

\bibitem[Holden et~al.(2022)Holden, Pereyra, and Zygalakis]{holden2022bayesian}
Matthew Holden, Marcelo Pereyra, and Konstantinos~C. Zygalakis.
\newblock {Bayesian} imaging with data-driven priors encoded by neural
  networks: Theory, methods, and algorithms.
\newblock \emph{SIAM Journal on Imaging Sciences}, 15\penalty0 (2):\penalty0
  892--924, 2022.

\bibitem[Hosseini et~al.(2025)Hosseini, Hsu, and
  Taghvaei]{hosseini2025conditional}
Bamdad Hosseini, Alexander~W. Hsu, and Amirhossein Taghvaei.
\newblock Conditional optimal transport on function spaces.
\newblock \emph{SIAM/ASA Journal on Uncertainty Quantification}, 13\penalty0
  (1):\penalty0 304--338, 2025.

\bibitem[Huang et~al.(2021)Huang, Jiao, Kang, Liao, Liu, and
  Liu]{huang2021schrodinger}
Jian Huang, Yuling Jiao, Lican Kang, Xu~Liao, Jin Liu, and Yanyan Liu.
\newblock {Schr{\"o}dinger--F{\"o}llmer} sampler: Sampling without ergodicity.
\newblock \emph{arXiv preprint arXiv:2106.10880}, 2021.

\bibitem[Jidling et~al.(2017)Jidling, Wahlstr{\"o}m, Wills, and
  Sch{\"o}n]{jidling2017linearly}
Carl Jidling, Niklas Wahlstr{\"o}m, Adrian Wills, and Thomas~B. Sch{\"o}n.
\newblock Linearly constrained {Gaussian} processes.
\newblock In \emph{Advances in Neural Information Processing Systems},
  volume~30, pp.\  1215--1224, 2017.

\bibitem[Kermack \& McKendrick(1927)Kermack and
  McKendrick]{kermack1927contribution}
William~O. Kermack and Anderson~G. McKendrick.
\newblock A contribution to the mathematical theory of epidemics.
\newblock \emph{Proceedings of the Royal Society of London. Series A,
  Containing Papers of a Mathematical and Physical Character}, 115\penalty0
  (772):\penalty0 700--721, 1927.

\bibitem[Kerrigan et~al.(2024)Kerrigan, Migliorini, and
  Smyth]{kerrigan2024functional}
Gavin Kerrigan, Giosue Migliorini, and Padhraic Smyth.
\newblock Functional flow matching.
\newblock In \emph{Proceedings of The 27th International Conference on
  Artificial Intelligence and Statistics}, pp.\  3934--3942, 2024.

\bibitem[Kim et~al.(2019)Kim, Mnih, Schwarz, Garnelo, Eslami, Rosenbaum,
  Vinyals, and Teh]{kim2019attentive}
Hyunjik Kim, Andriy Mnih, Jonathan Schwarz, Marta Garnelo, S.~M.~Ali Eslami,
  Dan Rosenbaum, Oriol Vinyals, and Yee~Whye Teh.
\newblock Attentive neural processes.
\newblock In \emph{Proceedings of the 7th International Conference on Learning
  Representations}, 2019.

\bibitem[Kuss \& Rasmussen(2005)Kuss and Rasmussen]{kuss2005assessing}
Malte Kuss and Carl~Edward Rasmussen.
\newblock Assessing approximate inference for binary {Gaussian} process
  classification.
\newblock \emph{Journal of Machine Learning Research}, 6:\penalty0 1679--1704,
  2005.

\bibitem[Lange-Hegermann(2018)]{langehegermann2018algorithmic}
Markus Lange-Hegermann.
\newblock Algorithmic linearly constrained {Gaussian} processes.
\newblock In \emph{Advances in Neural Information Processing Systems},
  volume~31, pp.\  2137--2148, 2018.

\bibitem[Law et~al.(2015)Law, Stuart, and Zygalakis]{law2015data}
Kody J.~H. Law, Andrew~M. Stuart, and Konstantinos~C. Zygalakis.
\newblock \emph{Data Assimilation: A Mathematical Introduction}, volume~62 of
  \emph{Texts in Applied Mathematics}.
\newblock Springer, 2015.

\bibitem[L{\'e}onard(2014)]{leonard2014survey}
Christian L{\'e}onard.
\newblock A survey of the {Schr{\"o}dinger} problem and some of its connections
  with optimal transport.
\newblock \emph{Discrete \& Continuous Dynamical Systems - A}, 34\penalty0
  (4):\penalty0 1533--1574, 2014.

\bibitem[Lim et~al.(2025)Lim, Kovachki, Baptista, Beckham, Azizzadenesheli,
  Kossaifi, Voleti, Song, Kreis, Kautz, Pal, Vahdat, and
  Anandkumar]{lim2025score}
Jae~Hyun Lim, Nikola~B. Kovachki, Ricardo Baptista, Christopher Beckham, Kamyar
  Azizzadenesheli, Jean Kossaifi, Vikram Voleti, Jiaming Song, Karsten Kreis,
  Jan Kautz, Christopher Pal, Arash Vahdat, and Anima Anandkumar.
\newblock Score-based diffusion models in function space.
\newblock \emph{Journal of Machine Learning Research}, 26\penalty0
  (158):\penalty0 1--62, 2025.

\bibitem[Lin \& Dunson(2014)Lin and Dunson]{lin2014bayesian}
Lizhen Lin and David~B. Dunson.
\newblock Bayesian monotone regression using {Gaussian} process projection.
\newblock \emph{Biometrika}, 101\penalty0 (2):\penalty0 303--317, 2014.

\bibitem[Lindgren et~al.(2011)Lindgren, Rue, and
  Lindstr{\"o}m]{lindgren2011explicit}
Finn Lindgren, H{\aa}vard Rue, and Johan Lindstr{\"o}m.
\newblock An explicit link between {Gaussian} fields and {Gaussian Markov}
  random fields: The stochastic partial differential equation approach.
\newblock \emph{Journal of the Royal Statistical Society: Series B (Statistical
  Methodology)}, 73\penalty0 (4):\penalty0 423--498, 2011.

\bibitem[Lipman et~al.(2023)Lipman, Chen, Ben-Hamu, Nickel, and
  Le]{lipman2023flow}
Yaron Lipman, Ricky T.~Q. Chen, Heli Ben-Hamu, Maximilian Nickel, and Matt Le.
\newblock Flow matching for generative modeling.
\newblock In \emph{Proceedings of the 11th International Conference on Learning
  Representations}, 2023.

\bibitem[Liu et~al.(2023{\natexlab{a}})Liu, Vahdat, Huang, Theodorou, Nie, and
  Anandkumar]{liu2023i2sb}
Guan-Horng Liu, Arash Vahdat, De-An Huang, Evangelos~A. Theodorou, Weili Nie,
  and Anima Anandkumar.
\newblock {I$^2$SB}: Image-to-image {Schr{\"o}dinger} bridge.
\newblock In \emph{Proceedings of the 40th International Conference on Machine
  Learning}, pp.\  22042--22062, 2023{\natexlab{a}}.

\bibitem[Liu et~al.(2024)Liu, Lipman, Nickel, Karrer, Theodorou, and
  Chen]{liu2024generalized}
Guan-Horng Liu, Yaron Lipman, Maximilian Nickel, Brian Karrer, Evangelos~A.
  Theodorou, and Ricky T.~Q. Chen.
\newblock Generalized {Schr{\"o}dinger} bridge matching.
\newblock In \emph{Proceedings of the 12th International Conference on Learning
  Representations}, 2024.

\bibitem[Liu et~al.(2023{\natexlab{b}})Liu, Gong, and Liu]{liu2023flow}
Xingchao Liu, Chengyue Gong, and Qiang Liu.
\newblock Flow straight and fast: Learning to generate and transfer data with
  rectified flow.
\newblock In \emph{Proceedings of the 11th International Conference on Learning
  Representations}, 2023{\natexlab{b}}.

\bibitem[L{\'o}pez-Lopera et~al.(2018)L{\'o}pez-Lopera, Bachoc, Durrande, and
  Roustant]{lopezlopera2018finite}
Andr{\'e}s~F. L{\'o}pez-Lopera, Fran{\c{c}}ois Bachoc, Nicolas Durrande, and
  Olivier Roustant.
\newblock Finite-dimensional {Gaussian} approximation with linear inequality
  constraints.
\newblock \emph{SIAM/ASA Journal on Uncertainty Quantification}, 6\penalty0
  (3):\penalty0 1224--1255, 2018.

\bibitem[Lotka(1925)]{lotka1925elements}
Alfred~J. Lotka.
\newblock \emph{Elements of Physical Biology}.
\newblock Williams \& Wilkins, Baltimore, 1925.

\bibitem[Ma et~al.(2019)Ma, Li, and Hernandez-Lobato]{ma2019variational}
Chao Ma, Yingzhen Li, and Jose~Miguel Hernandez-Lobato.
\newblock Variational implicit processes.
\newblock In \emph{Proceedings of the 36th International Conference on Machine
  Learning}, pp.\  4222--4233, 2019.

\bibitem[Maatouk \& Bay(2017)Maatouk and Bay]{maatouk2017gaussian}
Hassan Maatouk and Xavier Bay.
\newblock {Gaussian} process emulators for computer experiments with inequality
  constraints.
\newblock \emph{Mathematical Geosciences}, 49\penalty0 (5):\penalty0 557--582,
  2017.

\bibitem[Maatouk et~al.(2025)Maatouk, Rulli{\`e}re, and
  Bay]{maatouk2025bayesian}
Hassan Maatouk, Didier Rulli{\`e}re, and Xavier Bay.
\newblock {Bayesian} analysis of constrained {Gaussian} processes.
\newblock \emph{Bayesian Analysis}, 20\penalty0 (3):\penalty0 973--1002, 2025.

\bibitem[Majda \& Kramer(1999)Majda and Kramer]{majda1999simplified}
Andrew~J. Majda and Peter~R. Kramer.
\newblock Simplified models for turbulent diffusion: Theory, numerical
  modelling, and physical phenomena.
\newblock \emph{Physics Reports}, 314\penalty0 (4--5):\penalty0 237--574, 1999.

\bibitem[Mark et~al.(2025)Mark, Galustian, Kovar, and Heid]{mark2025feynman}
Konstantin Mark, Leonard Galustian, Maximilian P.-P. Kovar, and Esther Heid.
\newblock Feynman--kac-flow: Inference steering of conditional flow matching to
  an energy-tilted posterior.
\newblock \emph{arXiv preprint arXiv:2509.01543}, 2025.

\bibitem[Moss et~al.(2026)Moss, Astfalck, Cowperthwaite, Doumont, Willis,
  Hennig, Nemeth, and Zammit-Mangion]{moss2026conditioning}
Henry~B. Moss, Lachlan Astfalck, Thomas Cowperthwaite, Colin Doumont, Sam
  Willis, Philipp Hennig, Christopher Nemeth, and Andrew Zammit-Mangion.
\newblock Conditioning {Gaussian} processes on almost anything.
\newblock \emph{arXiv preprint arXiv:2605.21041}, 2026.

\bibitem[Nagumo et~al.(1962)Nagumo, Arimoto, and Yoshizawa]{nagumo1962active}
Jin-Ichi Nagumo, Suguru Arimoto, and Shuji Yoshizawa.
\newblock An active pulse transmission line simulating nerve axon.
\newblock \emph{Proceedings of the IRE}, 50\penalty0 (10):\penalty0 2061--2070,
  1962.

\bibitem[Neal(2001)]{neal2001annealed}
Radford~M. Neal.
\newblock Annealed importance sampling.
\newblock \emph{Statistics and Computing}, 11\penalty0 (2):\penalty0 125--139,
  2001.

\bibitem[Nickisch \& Rasmussen(2008)Nickisch and
  Rasmussen]{nickisch2008approximations}
Hannes Nickisch and Carl~Edward Rasmussen.
\newblock Approximations for binary {Gaussian} process classification.
\newblock \emph{Journal of Machine Learning Research}, 9:\penalty0 2035--2078,
  2008.

\bibitem[Nickisch et~al.(2018)Nickisch, Solin, and
  Grigorevskiy]{nickisch2018state}
Hannes Nickisch, Arno Solin, and Alexander Grigorevskiy.
\newblock State space {Gaussian} processes with non-{Gaussian} likelihood.
\newblock In \emph{Proceedings of the 35th International Conference on Machine
  Learning}, pp.\  3789--3798, 2018.

\bibitem[Nijkamp et~al.(2022)Nijkamp, Gao, Sountsov, Vasudevan, Pang, Zhu, and
  Wu]{nijkamp2022mcmc}
Erik Nijkamp, Ruiqi Gao, Pavel Sountsov, Srinivas Vasudevan, Bo~Pang, Song-Chun
  Zhu, and Ying~Nian Wu.
\newblock {MCMC} should mix: Learning energy-based model with neural transport
  latent space {MCMC}.
\newblock In \emph{Proceedings of the 10th International Conference on Learning
  Representations}, 2022.

\bibitem[{\O}ksendal(2003)]{oksendal2003stochastic}
Bernt {\O}ksendal.
\newblock \emph{Stochastic Differential Equations: An Introduction with
  Applications}.
\newblock Universitext. Springer, Berlin, 6 edition, 2003.

\bibitem[Papamakarios et~al.(2021)Papamakarios, Nalisnick, Rezende, Mohamed,
  and Lakshminarayanan]{papamakarios2021normalizing}
George Papamakarios, Eric Nalisnick, Danilo~Jimenez Rezende, Shakir Mohamed,
  and Balaji Lakshminarayanan.
\newblock Normalizing flows for probabilistic modeling and inference.
\newblock \emph{Journal of Machine Learning Research}, 22\penalty0
  (57):\penalty0 1--64, 2021.

\bibitem[Papaspiliopoulos et~al.(2013)Papaspiliopoulos, Roberts, and
  Stramer]{papaspiliopoulos2013data}
Omiros Papaspiliopoulos, Gareth~O. Roberts, and Osnat Stramer.
\newblock Data augmentation for diffusions.
\newblock \emph{Journal of Computational and Graphical Statistics}, 22\penalty0
  (3):\penalty0 665--688, 2013.

\bibitem[Parikh et~al.(2026)Parikh, Chen, and Wang]{parikh2026dflow}
Meet~Hemant Parikh, Yaqin Chen, and Jian-Xun Wang.
\newblock {D-Flow SGLD}: Source-space posterior sampling for scientific inverse
  problems with flow matching.
\newblock \emph{arXiv preprint arXiv:2602.21469}, 2026.

\bibitem[Park et~al.(2024)Park, Choi, Lim, and Lee]{park2024stochastic}
Byoungwoo Park, Jungwon Choi, Sungbin Lim, and Juho Lee.
\newblock Stochastic optimal control for diffusion bridges in function spaces.
\newblock In \emph{Advances in Neural Information Processing Systems},
  volume~37, pp.\  28745--28771, 2024.

\bibitem[Parno \& Marzouk(2018)Parno and Marzouk]{parno2018transport}
Matthew~D. Parno and Youssef~M. Marzouk.
\newblock Transport map accelerated {Markov} chain {Monte Carlo}.
\newblock \emph{SIAM/ASA Journal on Uncertainty Quantification}, 6\penalty0
  (2):\penalty0 645--682, 2018.

\bibitem[Patel et~al.(2022)Patel, Ray, and Oberai]{patel2022solution}
Dhruv~V. Patel, Deep Ray, and Assad~A. Oberai.
\newblock Solution of physics-based {Bayesian} inverse problems with deep
  generative priors.
\newblock \emph{Computer Methods in Applied Mechanics and Engineering},
  400:\penalty0 115428, 2022.

\bibitem[Peluchetti(2023)]{peluchetti2023diffusion}
Stefano Peluchetti.
\newblock Diffusion bridge mixture transports, {Schr{\"o}dinger} bridge
  problems and generative modeling.
\newblock \emph{Journal of Machine Learning Research}, 24\penalty0
  (374):\penalty0 1--51, 2023.

\bibitem[Pidstrigach et~al.(2025)Pidstrigach, Baker, Domingo-Enrich,
  Deligiannidis, and N{\"u}sken]{pidstrigach2025conditioning}
Jakiw Pidstrigach, Elizabeth~Louise Baker, Carles Domingo-Enrich, George
  Deligiannidis, and Nikolas N{\"u}sken.
\newblock Conditioning diffusions using {Malliavin} calculus.
\newblock In \emph{Proceedings of the 42nd International Conference on Machine
  Learning}, pp.\  49292--49315, 2025.

\bibitem[Pieper-Sethmacher et~al.(2025)Pieper-Sethmacher, van~der Meulen, and
  van~der Vaart]{pieper2025simulation}
Thorben Pieper-Sethmacher, Frank van~der Meulen, and Aad van~der Vaart.
\newblock Simulation of infinite-dimensional diffusion bridges.
\newblock \emph{arXiv preprint arXiv:2503.13177}, 2025.

\bibitem[Potts(1952)]{potts1952some}
Renfrey~B. Potts.
\newblock Some generalized order-disorder transformations.
\newblock \emph{Mathematical Proceedings of the Cambridge Philosophical
  Society}, 48\penalty0 (1):\penalty0 106--109, 1952.

\bibitem[Purohit et~al.(2025)Purohit, Repasky, Lu, Qiu, Xie, and
  Cheng]{purohit2024posterior}
Vishal Purohit, Matthew Repasky, Jianfeng Lu, Qiang Qiu, Yao Xie, and Xiuyuan
  Cheng.
\newblock Consistency posterior sampling for diverse image synthesis.
\newblock In \emph{Proceedings of the IEEE/CVF Conference on Computer Vision
  and Pattern Recognition (CVPR)}, pp.\  28327--28336, 2025.

\bibitem[Rahimi \& Recht(2007)Rahimi and Recht]{rahimi2007random}
Ali Rahimi and Benjamin Recht.
\newblock Random features for large-scale kernel machines.
\newblock In \emph{Advances in Neural Information Processing Systems},
  volume~20, pp.\  1177--1184, 2007.

\bibitem[Rajput \& Rosi{\'n}ski(1989)Rajput and
  Rosi{\'n}ski]{rajput1989spectral}
Balram~S. Rajput and Jan Rosi{\'n}ski.
\newblock Spectral representations of infinitely divisible processes.
\newblock \emph{Probability Theory and Related Fields}, 82\penalty0
  (3):\penalty0 451--487, 1989.

\bibitem[Rasmussen \& Williams(2006)Rasmussen and
  Williams]{rasmussen2006gaussian}
Carl~Edward Rasmussen and Christopher~K.I. Williams.
\newblock \emph{{Gaussian} Processes for Machine Learning}.
\newblock Adaptive Computation and Machine Learning. MIT Press, Cambridge, MA,
  2006.

\bibitem[Rezende \& Mohamed(2015)Rezende and Mohamed]{rezende2015variational}
Danilo~Jimenez Rezende and Shakir Mohamed.
\newblock Variational inference with normalizing flows.
\newblock In \emph{Proceedings of the 32nd International Conference on Machine
  Learning}, pp.\  1530--1538, 2015.

\bibitem[Riihim{\"a}ki \& Vehtari(2010)Riihim{\"a}ki and
  Vehtari]{riihimaki2010gaussian}
Jaakko Riihim{\"a}ki and Aki Vehtari.
\newblock {Gaussian} processes with monotonicity information.
\newblock In \emph{Proceedings of The 13th International Conference on
  Artificial Intelligence and Statistics}, pp.\  645--652, 2010.

\bibitem[Roberts \& Stramer(2001)Roberts and Stramer]{roberts2001inference}
Gareth~O. Roberts and Osnat Stramer.
\newblock On inference for partially observed nonlinear diffusion models using
  the {Metropolis--Hastings} algorithm.
\newblock \emph{Biometrika}, 88\penalty0 (3):\penalty0 603--621, 2001.

\bibitem[Rosenblatt(1952)]{rosenblatt1952remarks}
Murray Rosenblatt.
\newblock Remarks on a multivariate transformation.
\newblock \emph{The Annals of Mathematical Statistics}, 23\penalty0
  (3):\penalty0 470--472, 1952.

\bibitem[Rue \& Held(2005)Rue and Held]{rue2005gaussian}
H{\aa}vard Rue and Leonhard Held.
\newblock \emph{Gaussian {Markov} Random Fields: Theory and Applications}.
\newblock Monographs on Statistics and Applied Probability. Chapman \&
  Hall/CRC, Boca Raton, FL, 2005.

\bibitem[Rue et~al.(2009)Rue, Martino, and Chopin]{rue2009approximate}
H{\aa}vard Rue, Sara Martino, and Nicolas Chopin.
\newblock Approximate {Bayesian} inference for latent {Gaussian} models by
  using integrated nested {Laplace} approximations.
\newblock \emph{Journal of the Royal Statistical Society: Series B (Statistical
  Methodology)}, 71\penalty0 (2):\penalty0 319--392, 2009.

\bibitem[S{\"a}rkk{\"a}(2013)]{sarkka2013bayesian}
Simo S{\"a}rkk{\"a}.
\newblock \emph{{Bayesian} Filtering and Smoothing}.
\newblock Cambridge University Press, Cambridge, 2013.

\bibitem[Schauer et~al.(2017)Schauer, van~der Meulen, and van
  Zanten]{schauer2017guided}
Moritz Schauer, Frank van~der Meulen, and Harry van Zanten.
\newblock Guided proposals for simulating multi-dimensional diffusion bridges.
\newblock \emph{Bernoulli}, 23\penalty0 (4A):\penalty0 2917--2950, 2017.

\bibitem[Seong et~al.(2025)Seong, Park, Kim, Kim, and Ahn]{seong2025transition}
Kiyoung Seong, Seonghyun Park, Seonghwan Kim, Woo~Youn Kim, and Sungsoo Ahn.
\newblock Transition path sampling with improved off-policy training of
  diffusion path samplers.
\newblock In \emph{Proceedings of the 13th International Conference on Learning
  Representations}, 2025.

\bibitem[Shah et~al.(2014)Shah, Wilson, and Ghahramani]{shah2014student}
Amar Shah, Andrew~Gordon Wilson, and Zoubin Ghahramani.
\newblock {Student-t} processes as alternatives to {Gaussian} processes.
\newblock In \emph{Proceedings of The 17th International Conference on
  Artificial Intelligence and Statistics}, pp.\  877--885, 2014.

\bibitem[Singhal et~al.(2025)Singhal, Horvitz, Teehan, Ren, Yu, McKeown, and
  Ranganath]{singhal2025general}
Raghav Singhal, Zachary Horvitz, Ryan Teehan, Mengye Ren, Zhou Yu, Kathleen
  McKeown, and Rajesh Ranganath.
\newblock A general framework for inference-time scaling and steering of
  diffusion models.
\newblock In \emph{Proceedings of the 42nd International Conference on Machine
  Learning}, pp.\  55810--55827, 2025.

\bibitem[Skreta et~al.(2025)Skreta, Akhound-Sadegh, Ohanesian, Bondesan,
  Aspuru-Guzik, Doucet, Brekelmans, Tong, and Neklyudov]{skreta2025feynman}
Marta Skreta, Tara Akhound-Sadegh, Viktor Ohanesian, Roberto Bondesan, Alan
  Aspuru-Guzik, Arnaud Doucet, Rob Brekelmans, Alexander Tong, and Kirill
  Neklyudov.
\newblock {Feynman--Kac} correctors in diffusion: Annealing, guidance, and
  product of experts.
\newblock In \emph{Proceedings of the 42nd International Conference on Machine
  Learning}, pp.\  55906--55949, 2025.

\bibitem[Sohl-Dickstein et~al.(2015)Sohl-Dickstein, Weiss, Maheswaranathan, and
  Ganguli]{sohl2015deep}
Jascha Sohl-Dickstein, Eric Weiss, Niru Maheswaranathan, and Surya Ganguli.
\newblock Deep unsupervised learning using nonequilibrium thermodynamics.
\newblock In \emph{Proceedings of the 32nd International Conference on Machine
  Learning}, pp.\  2256--2265, 2015.

\bibitem[Song et~al.(2021)Song, Sohl-Dickstein, Kingma, Kumar, Ermon, and
  Poole]{song2021score}
Yang Song, Jascha Sohl-Dickstein, Diederik~P. Kingma, Abhishek Kumar, Stefano
  Ermon, and Ben Poole.
\newblock Score-based generative modeling through stochastic differential
  equations.
\newblock In \emph{Proceedings of the 9th International Conference on Learning
  Representations}, 2021.

\bibitem[Stuart(2010)]{stuart2010inverse}
Andrew~M. Stuart.
\newblock Inverse problems: A {Bayesian} perspective.
\newblock \emph{Acta Numerica}, 19:\penalty0 451--559, 2010.

\bibitem[Swiler et~al.(2020)Swiler, Gulian, Frankel, Safta, and
  Jakeman]{swiler2020survey}
Laura Swiler, Mamikon Gulian, Ari Frankel, Cosmin Safta, and John Jakeman.
\newblock A survey of constrained {Gaussian} process {Regression}: Approaches
  and implementation challenges.
\newblock \emph{Journal of Machine Learning for Modeling and Computing},
  1\penalty0 (2):\penalty0 119--156, 2020.

\bibitem[Tamogashev \& Malkin(2026)Tamogashev and Malkin]{tamogashev2026data}
Kirill Tamogashev and Nikolay Malkin.
\newblock Data-to-energy stochastic dynamics.
\newblock In \emph{Proceedings of the 14th International Conference on Learning
  Representations}, 2026.

\bibitem[Vargas et~al.(2023{\natexlab{a}})Vargas, Grathwohl, and
  Doucet]{vargas2023denoising}
Francisco Vargas, Will~Sussman Grathwohl, and Arnaud Doucet.
\newblock Denoising diffusion samplers.
\newblock In \emph{Proceedings of the 11th International Conference on Learning
  Representations}, 2023{\natexlab{a}}.

\bibitem[Vargas et~al.(2023{\natexlab{b}})Vargas, Ovsianas, Fernandes,
  Girolami, Lawrence, and N{\"u}sken]{vargas2022bayesian}
Francisco Vargas, Andrius Ovsianas, David Fernandes, Mark Girolami, Neil~D.
  Lawrence, and Nikolas N{\"u}sken.
\newblock Bayesian learning via neural {Schr{\"o}dinger--F{\"o}llmer} flows.
\newblock \emph{Statistics and Computing}, 33\penalty0 (3), 2023{\natexlab{b}}.

\bibitem[Vargas et~al.(2024)Vargas, Padhy, Blessing, and
  N{\"u}sken]{vargas2024controlled}
Francisco Vargas, Shreyas Padhy, Denis Blessing, and Nikolas N{\"u}sken.
\newblock Transport meets variational inference: Controlled {Monte Carlo}
  diffusions.
\newblock In \emph{Proceedings of the 12th International Conference on Learning
  Representations}, 2024.

\bibitem[Venkatraman et~al.(2025)Venkatraman, Hasan, Kim, Scimeca, Sendera,
  Bengio, Berseth, and Malkin]{venkatraman2025outsourced}
Siddarth Venkatraman, Mohsin Hasan, Minsu Kim, Luca Scimeca, Marcin Sendera,
  Yoshua Bengio, Glen Berseth, and Nikolay Malkin.
\newblock Outsourced diffusion sampling: Efficient posterior inference in
  latent spaces of generative models.
\newblock In \emph{Proceedings of the 42nd International Conference on Machine
  Learning}, pp.\  61212--61239, 2025.

\bibitem[Volterra(1926)]{volterra1926fluctuations}
Vito Volterra.
\newblock Fluctuations in the abundance of a species considered mathematically.
\newblock \emph{Nature}, 118:\penalty0 558--560, 1926.

\bibitem[Walchessen et~al.(2025)Walchessen, Zammit-Mangion, Huser, and
  Kuusela]{walchessen2025neural}
Julia Walchessen, Andrew Zammit-Mangion, Rapha{\"e}l Huser, and Mikael Kuusela.
\newblock Neural conditional simulation for complex spatial processes.
\newblock \emph{arXiv preprint arXiv:2508.20067}, 2025.

\bibitem[Wang et~al.(2011)Wang, Zhang, Xu, and Wang]{wang2011quantifying}
Jin Wang, Kun Zhang, Li~Xu, and Erkang Wang.
\newblock Quantifying the waddington landscape and biological paths for
  development and differentiation.
\newblock \emph{Proceedings of the National Academy of Sciences}, 108\penalty0
  (20):\penalty0 8257--8262, 2011.

\bibitem[Wang \& Tartakovsky(2026)Wang and Tartakovsky]{wang2026latent}
Yuanzhe Wang and Alexandre~M. Tartakovsky.
\newblock Latent diffusion posterior sampling with surrogate likelihood
  guidance for {PDE} inverse problems.
\newblock \emph{arXiv preprint arXiv:2606.26592}, 2026.

\bibitem[Wang et~al.(2026)Wang, Harting, Barreau, Zavlanos, and
  Johansson]{wang2025sourceguided}
Zifan Wang, Alice Harting, Matthieu Barreau, Michael~M. Zavlanos, and Karl~H.
  Johansson.
\newblock Source-guided flow matching.
\newblock In \emph{Proceedings of the 14th International Conference on Learning
  Representations}, 2026.

\bibitem[Whang et~al.(2021)Whang, Lindgren, and Dimakis]{whang2021composing}
Jay Whang, Erik~M. Lindgren, and Alexandros~G. Dimakis.
\newblock Composing normalizing flows for inverse problems.
\newblock In \emph{Proceedings of the 38th International Conference on Machine
  Learning}, pp.\  11158--11169, 2021.

\bibitem[Whitaker et~al.(2017)Whitaker, Golightly, Boys, and
  Sherlock]{whitaker2017bayesian}
Gavin~A. Whitaker, Andrew Golightly, Richard~J. Boys, and Christopher~G.
  Sherlock.
\newblock Bayesian inference for diffusion-driven mixed-effects models.
\newblock \emph{Bayesian Analysis}, 12\penalty0 (2):\penalty0 435--463, 2017.

\bibitem[Wilson et~al.(2020)Wilson, Borovitskiy, Terenin, Mostowsky, and
  Deisenroth]{wilson2020efficient}
James~T. Wilson, Viacheslav Borovitskiy, Alexander Terenin, Peter Mostowsky,
  and Marc~Peter Deisenroth.
\newblock Efficiently sampling functions from {Gaussian} process posteriors.
\newblock In \emph{Proceedings of the 37th International Conference on Machine
  Learning}, pp.\  10292--10302, 2020.

\bibitem[Wilson et~al.(2021)Wilson, Borovitskiy, Terenin, Mostowsky, and
  Deisenroth]{wilson2021pathwise}
James~T. Wilson, Viacheslav Borovitskiy, Alexander Terenin, Peter Mostowsky,
  and Marc~Peter Deisenroth.
\newblock Pathwise conditioning of {Gaussian} processes.
\newblock \emph{Journal of Machine Learning Research}, 22\penalty0
  (105):\penalty0 1--47, 2021.

\bibitem[Wolpert et~al.(2011)Wolpert, Clyde, and Tu]{wolpert2011levy}
Robert~L. Wolpert, Merlise~A. Clyde, and Chong Tu.
\newblock Stochastic expansions using continuous dictionaries: L{\'e}vy
  adaptive regression kernels.
\newblock \emph{The Annals of Statistics}, 39\penalty0 (4):\penalty0
  1916--1962, 2011.

\bibitem[Wu et~al.(2022)Wu, Motamed, Srivastava, and {De la
  Torre}]{wu2022promptgen}
Chen~Henry Wu, Saman Motamed, Shaunak Srivastava, and Fernando~D. {De la
  Torre}.
\newblock Generative visual prompt: Unifying distributional control of
  pre-trained generative models.
\newblock In \emph{Advances in Neural Information Processing Systems},
  volume~35, pp.\  22422--22437, 2022.

\bibitem[Wu et~al.(2023)Wu, Trippe, Naesseth, Blei, and
  Cunningham]{wu2023practical}
Luhuan Wu, Brian~L. Trippe, Christian~A. Naesseth, David~M. Blei, and John~P.
  Cunningham.
\newblock Practical and asymptotically exact conditional sampling in diffusion
  models.
\newblock In \emph{Advances in Neural Information Processing Systems},
  volume~36, pp.\  31372--31403, 2023.

\bibitem[Wu et~al.(2025)Wu, Han, Naesseth, and Cunningham]{wu2025reverse}
Luhuan Wu, Yi~Han, Christian~Andersson Naesseth, and John~P. Cunningham.
\newblock Reverse diffusion sequential {Monte Carlo} samplers.
\newblock In \emph{Advances in Neural Information Processing Systems},
  volume~38, 2025.

\bibitem[Xia et~al.(2026)Xia, Tanomkiattikun, Zhen, and Gu]{xia2026noise}
Yingzhi Xia, Setthakorn Tanomkiattikun, Liangli Zhen, and Zaiwang Gu.
\newblock Noise-adaptive diffusion sampling for inverse problems without
  task-specific tuning.
\newblock In \emph{Proceedings of the 14th International Conference on Learning
  Representations}, 2026.

\bibitem[Yang et~al.(2025{\natexlab{a}})Yang, Baker, Severinsen, Hipsley, and
  Sommer]{yang2025infinite}
Gefan Yang, Elizabeth~Louise Baker, Michael~Lind Severinsen, Christy~Anna
  Hipsley, and Stefan Sommer.
\newblock Infinite-dimensional diffusion bridge simulation via operator
  learning.
\newblock In \emph{Proceedings of the 28th International Conference on
  Artificial Intelligence and Statistics}, volume 258, pp.\  3556--3564,
  2025{\natexlab{a}}.

\bibitem[Yang et~al.(2025{\natexlab{b}})Yang, van~der Meulen, and
  Sommer]{yang2025neural}
Gefan Yang, Frank van~der Meulen, and Stefan Sommer.
\newblock Neural guided diffusion bridges.
\newblock In \emph{Proceedings of the 42nd International Conference on Machine
  Learning}, pp.\  71210--71230, 2025{\natexlab{b}}.

\bibitem[Yao et~al.(2025)Yao, Mammadov, Berner, Kerrigan, Ye, Azizzadenesheli,
  and Anandkumar]{yao2025guided}
Jiachen Yao, Abbas Mammadov, Julius Berner, Gavin Kerrigan, Jong~Chul Ye,
  Kamyar Azizzadenesheli, and Anima Anandkumar.
\newblock Guided diffusion sampling on function spaces with applications to
  {PDE}s.
\newblock In \emph{Advances in Neural Information Processing Systems},
  volume~38, 2025.

\bibitem[Zammit-Mangion et~al.(2025)Zammit-Mangion, Sainsbury-Dale, and
  Huser]{zammit2025neural}
Andrew Zammit-Mangion, Matthew Sainsbury-Dale, and Rapha{\"e}l Huser.
\newblock Neural methods for amortized inference.
\newblock \emph{Annual Review of Statistics and Its Application}, 12\penalty0
  (1):\penalty0 311--335, 2025.

\bibitem[Zhang \& Chen(2022)Zhang and Chen]{zhang2022path}
Qinsheng Zhang and Yongxin Chen.
\newblock Path integral sampler: a stochastic control approach for sampling.
\newblock In \emph{Proceedings of the 10th International Conference on Learning
  Representations}, 2022.

\bibitem[Zhao et~al.(2025)Zhao, Luo, Sj{\"o}lund, and
  Sch{\"o}n]{zhao2025conditional}
Zheng Zhao, Ziwei Luo, Jens Sj{\"o}lund, and Thomas~B. Sch{\"o}n.
\newblock Conditional sampling within generative diffusion models.
\newblock \emph{Philosophical Transactions of the Royal Society A:
  Mathematical, Physical and Engineering Sciences}, 383\penalty0
  (2299):\penalty0 20240329, 2025.

\end{thebibliography}

\appendix

\appendix
\section{Examples of latent innovation representations}
\label[appendix]{app:innovation-representations}

This appendix contains examples of stochastic-process priors that admit the latent innovation representation in \Cref{ass:latent-rep}, or its measure-theoretic extension; see \Cref{app:measure-theoretic-view} for details. 
Throughout, \(s_{1:m}\) denotes the discretisation inputs, \(f_0\in E_m\) denotes the finite-dimensional process realisation, and \(T_\vart\) denotes a generator satisfying \(f_0=T_\vart(\xi_0)\).

In several examples, the model is most naturally written in terms of a native innovation variable \(\zeta_0\), such as a uniform simulator seed, a Gamma scale variable, or a collection of non-Gaussian increments.
To ensure all examples are consistent with the common latent form used by \textsc{LatentFlow}, we express this native innovation as a deterministic function of a standard Gaussian reference:
\(
  \zeta_0 = H_\vart(\xi_0),
\)
with
\(
  \xi_0\sim\N(0,I).
\)
The generator in Assumption~\ref{ass:latent-rep} is then the composite map from the Gaussian innovation to the process realisation.
When the native innovation is already Gaussian, \(H_\vart\) is simply the identity map.

\begingroup
\small
\setlength{\tabcolsep}{5pt}
\setlength{\LTcapwidth}{\textwidth}
\setlength{\abovedisplayskip}{3pt}
\setlength{\belowdisplayskip}{3pt}
\setlength{\abovedisplayshortskip}{3pt}
\setlength{\belowdisplayshortskip}{3pt}
\setlength{\jot}{2pt}
\renewcommand{\arraystretch}{1.35}

\begin{longtable}{L{0.23\textwidth}p{0.69\textwidth}}
\caption{Examples of stochastic-process priors and their latent innovation representations.}
\label{tab:innovation-representations}
\\
\toprule
\rowcolor{lfHeaderGray}
\textbf{Model class}
&
\textbf{Innovation representation}
\\
\midrule
\endfirsthead

\toprule
\rowcolor{lfHeaderGray}
\textbf{Model class}
&
\textbf{Innovation representation}
\\
\midrule
\endhead

\midrule
\multicolumn{2}{r}{\emph{continued on next page}}
\\
\endfoot

\bottomrule
\endlastfoot

\rowcolor{lfBlue}
\multicolumn{2}{l}{\textbf{\emph{Gaussian, Student-$t$, and basis-expansion priors}}}
\\
\midrule

Gaussian process
&
A finite-dimensional realisation \(f_0=(f(s_1),\ldots,f(s_m))\) of a Gaussian process can be written as
\[
  f_0=\mu_\vart+L_\vart \xi_0,
  \qquad
  L_\vart L_\vart^\top=K_\vart,
\]
where \((K_\vart)_{ij}=k_\vart(s_i,s_j)\) is the kernel matrix, and \(\xi_0\sim\N(0,I_m)\).
The innovation is given by
\[
  \xi_0\in\mathbb R^m,
  \qquad
  \xi_0\sim\N(0,I_m).
\]
The generator is given by
\[
  T_\vart(\xi_0)=\mu_\vart+L_\vart \xi_0.
\]
This is the affine whitening map used in finite-dimensional Gaussian process inference.
\\

\midrule

Student-\(t\) process
&
A finite-dimensional realisation \(f_0=(f(s_1),\ldots,f(s_m))\) of a Student-\(t\) process can be written as
\[
  f_0
  =
  \mu_\vart+\omega^{-1/2}L_\vart v,
  \qquad
  L_\vart L_\vart^\top=K_\vart,
\]
where \((K_\vart)_{ij}=k_\vart(s_i,s_j)\) is the scale matrix induced by the Student-\(t\) process kernel \(k_\vart\), and \(v\) and \(\omega\) are independent, with
\(
  v\sim\N(0,I_m)
\)
and
\(
  \omega\sim\operatorname{Gamma}(\nu/2,\nu/2)
\)
in the shape--rate parameterisation.
The natural innovation is given by
\[
  \zeta_0=(v,\omega)\in\mathbb R^{m+1}.
\]
Let
\(
  w=\Phi_{\mathrm N}^{-1}(F_{\Gamma,\nu}(\omega)),
\)
where \(F_{\Gamma,\nu}\) is the distribution function of \(\operatorname{Gamma}(\nu/2,\nu/2)\), and \(\Phi_{\mathrm N}\) is the standard normal distribution function.
The Gaussian innovation is then given by
\[
   \xi_0=(v,w)\in\mathbb R^{m+1},
   \qquad
   \xi_0\sim\N(0,I_{m+1}).
\]
The corresponding generator is given by
\[
  T_\vart(\xi_0)
  =
  \mu_\vart+
  F_{\Gamma,\nu}^{-1}(\Phi_{\mathrm N}(w))^{-1/2}
  L_\vart v.
\]
This gives a heavy-tailed analogue of the Gaussian process while retaining an explicit scale-mixture innovation map.
\\

\midrule

Finite-basis, random-feature, or pathwise GP approximation
&
A truncated Karhunen--Loève expansion, finite basis expansion, random-feature approximation, or pathwise Gaussian process approximation can be written as
\[
  f_0=\mu_\vart+\Phi_\vart \xi_0,
\]
where \(\Phi_\vart\in\mathbb R^{m\times q}\) contains the basis or feature evaluations at \(s_{1:m}\), and \(\xi_0\sim\N(0,I_q)\).
For example,
\[
  (\Phi_\vart)_{ik}
  =
  \sqrt{\lambda_{\vart,k}}\phi_{\vart,k}(s_i)
\]
for a truncated Karhunen--Loève expansion.
The innovation is given by
\[
  \xi_0\in\mathbb R^q,
  \qquad
  \xi_0\sim\N(0,I_q).
\]
The generator is given by
\[
  T_\vart(\xi_0)=\mu_\vart+\Phi_\vart \xi_0.
\]
The innovation dimension \(q\) need not equal the discretisation dimension \(m\).
When \(q<m\), the induced prior on \(E_m\) is typically supported on a low-dimensional subspace unless an additional residual or nugget term is included.
More elaborate pathwise Gaussian process constructions fit the same form after augmenting \(\xi_0\) with auxiliary Gaussian variables.
\\

\midrule

Spectral random field
&
A finite spectral random field can be written as
\[
  f(s)
  =
  \mu_\vart(s)
  +
  \sum_{k=1}^q A_{\vart,k}
  \bigl(
    a_k\cos(\omega_k^\top s)
    +
    b_k\sin(\omega_k^\top s)
  \bigr),
\]
where \(A_{\vart,k}\) are spectral amplitudes, \(\omega_k\) are frequencies, and \(a_k,b_k\stackrel{\mathrm{i.i.d.}}{\sim}\N(0,1)\).
Here the frequencies \(\omega_k\) are treated as fixed, or as part of \(\vart\).
If the frequencies are themselves random, they can be appended to the innovation variable.
The innovation is given by
\[
  \xi_0=(a_1,b_1,\ldots,a_q,b_q)\in\mathbb R^{2q},
  \qquad
  \xi_0\sim\N(0,I_{2q}).
\]
The generator is given by
\[
  (T_\vart(\xi_0))_i
  =
  \mu_\vart(s_i)
  +
  \sum_{k=1}^q A_{\vart,k}
  \bigl(
    a_k\cos(\omega_k^\top s_i)
    +
    b_k\sin(\omega_k^\top s_i)
  \bigr),
\]
for components \(i=1,\dots,m\).
\\

\midrule

Gaussian Markov random field, or SPDE-defined Matérn field
&
A Gaussian Markov random field, including an SPDE-defined Matérn field after finite-element discretisation, can be written as
\[
  R_\vart^\top R_\vart=Q_\vart,
  \qquad
  f_0=\mu_\vart+A_\vart R_\vart^{-1}\xi_0,
\]
where \(x_\vart = R_\vart^{-1}\xi_0\in\mathbb R^q\) is a vector of GMRF or finite-element coefficients, \(Q_\vart\succ0\) is its sparse precision matrix, \(A_\vart:\mathbb R^q\to E_m\) maps the coefficients to the requested process values, and \(\xi_0\sim\N(0,I_q)\).
For an SPDE-defined Matérn field, \(Q_\vart\) is the sparse precision matrix of the finite-element coefficient vector, and \(A_\vart\) is the corresponding finite-element evaluation or projection matrix.
The innovation is given by
\[
  \xi_0\in\mathbb R^q,
  \qquad
  \xi_0\sim\N(0,I_q).
\]
The generator is given by
\[
  T_\vart(\xi_0)
  =
  \mu_\vart+A_\vart R_\vart^{-1}\xi_0.
\]
In practice, one solves a sparse triangular system rather than forming \(R_\vart^{-1}\).
This includes proper areal Gaussian Markov random fields such as proper CAR and Leroux-type models, as well as SPDE-defined Matérn fields.
Intrinsic CAR or intrinsic GMRF precision matrices are singular and instead require an identifying constraint, a reduced-coordinate factorisation on the constrained subspace, or an explicitly specified generalised-inverse construction.
\\

\newpage
\midrule

\rowcolor{lfTeal}
\multicolumn{2}{l}{\textbf{\emph{Intensity and convolution-field priors}}}
\\
\midrule

Log-Gaussian Cox intensity
&
A discretised log-Gaussian Cox process intensity can be written as
\[
  f_0
  =
  \exp(\mu_\vart+L_\vart\xi_0),
  \qquad
  L_\vart L_\vart^\top=K_\vart,
\]
where \(f_0=(\lambda(s_1),\ldots,\lambda(s_m))\), \(g_0=\mu_\vart+L_\vart\xi_0\) is the discretised latent Gaussian process, \(\lambda(s_i)=\exp(g_0(s_i))\), and \(\xi_0\sim\N(0,I_m)\).
The innovation is given by
\[
  \xi_0\in\mathbb R^m,
  \qquad
  \xi_0\sim\N(0,I_m).
\]
The generator is given by
\[
  T_\vart(\xi_0)=\exp(\mu_\vart+L_\vart \xi_0),
\]
where the exponential is applied componentwise.
\\

\midrule

Lévy-driven or convolution random field
&
A finite Lévy-driven or stable-convolution random field can be written as
\[
  f(s_i)
  =
  \sum_{k=1}^q
  G_\vart(s_i-c_k)J_k,
  \qquad i=1,\ldots,m,
\]
where \(G_\vart\) is a convolution kernel, \(c_k\) are support or quadrature locations, and \(J_k\sim\nu_{\vart,k}\) are increments of an independently scattered random measure.
The natural innovation is given by
\[
  \zeta_0=(J_1,\ldots,J_q).
\]
Let \(H_{\vart,k}\) be a map such that
\[
  J_k=H_{\vart,k}(w_k),
  \qquad
  w_k\sim\N(0,I).
\]
The Gaussian innovation is then given by
\[
  \xi_0=(w_1,\ldots,w_q).
\]
The corresponding generator is given by
\[
  (T_\vart(\xi_0))_i
  =
  \sum_{k=1}^q
  G_\vart(s_i-c_k)H_{\vart,k}(w_k),
  \qquad i=1,\ldots,m.
\]
This gives a non-Gaussian random-field prior, including Cauchy, normal-inverse Gaussian (NIG), stable, or other heavy-tailed convolution fields, through independently scattered random-measure increments.
\\

\newpage
\midrule

\rowcolor{lfPurple}
\multicolumn{2}{l}{\textbf{\emph{Discrete and extreme-value priors}}}
\\
\midrule

Potts or discrete Markov random field
&
A Potts or discrete Markov random field on sites \(s_{1:m}\) can be written as
\[
  \mathbb P_\vart(f_0=x)
  \propto
  \exp\Big(
    \sum_{i=1}^m h_{\vart,i}(x_i)
    +
    \sum_{(i,j)\in\calE}
    \beta_{\vart,ij}\mathbf 1(x_i=x_j)
  \Big),
\]
where \(x=(x_1,\ldots,x_m)\in\{1,\ldots,K\}^m\), \(x_i=f_0(s_i)\), \(\calE\) is the neighbourhood graph on \(s_{1:m}\), \(h_{\vart,i}\) are site potentials, and \(\beta_{\vart,ij}\) are interaction parameters.
The natural innovation can be represented by a simulator seed
\[
  \zeta_0=u\in(0,1)^q,
  \qquad
  u\sim\operatorname{Unif}((0,1)^q),
\]
together with an exact sampler \(\mathcal S_\vart\) such that
\(
  f_0=\mathcal S_\vart(u)\sim\mathbb P_\vart.
\)
If \(\mathcal S_\vart\) is an approximate MCMC or sequential sampler, then the corresponding \(T_\vart\) represents the algorithmic approximation rather than the exact Potts law.
Let
\(
  \xi_0=\Phi_{\mathrm N}^{-1}(u),
\)
where \(\Phi_{\mathrm N}\) is the standard normal distribution function, applied componentwise.
The Gaussian innovation is then given by
\[
  \xi_0\in\mathbb R^q,
  \qquad
  \xi_0\sim\N(0,I_q).
\]
The corresponding generator is given by
\[
  T_\vart(\xi_0)
  =
  \mathcal S_\vart(\Phi_{\mathrm N}(\xi_0)).
\]
Exact Potts fields are discrete; gradient-based guidance therefore requires a relaxation, surrogate, or derivative-free guidance estimator.
\\

\midrule

Max-stable process
&
A finite spectral truncation of a max-stable process can be written as
\[
  f(s_i)
  =
  \max_{1\le k\le K}
  P_k W_{\vart,k}(s_i),
  \qquad
  P_k=\Gamma_k^{-1},
  \qquad
  \Gamma_k=\sum_{\ell=1}^k E_\ell,
\]
where \(E_k\stackrel{\mathrm{i.i.d.}}{\sim}\operatorname{Exp}(1)\), the spectral functions \(W_{\vart,k}\) are independent and identically distributed, and the sequence \((W_{\vart,k})_{k\ge1}\) is independent of \((E_k)_{k\ge1}\).
The spectral functions satisfy the usual normalisation \(\E[W_{\vart,k}(s)]=1\) for each \(s\).
For finite \(K\), this is a spectral truncation of the corresponding infinite max-stable representation.
The natural innovation is given by
\[
  \zeta_0=(E_{1:K},\upsilon_{1:K}),
\]
where \(\upsilon_k\) generates the spectral function \(W_{\vart,k}\).
Let
\[
  w_k^E=\Phi_{\mathrm N}^{-1}(\exp(-E_k)),
\]
so that, equivalently,
\[
  E_k=-\log\Phi_{\mathrm N}(w_k^E),
  \qquad
  w_k^E\sim\N(0,1).
\]
If the spectral innovation can be Gaussianised as
\[
  \upsilon_k=H_{\vart}^{W}(w_k^W),
  \qquad
  w_k^W\sim\N(0,I),
\]
then the Gaussian innovation is given by
\[
  \xi_0=(w_1^E,\ldots,w_K^E,w_1^W,\ldots,w_K^W).
\]
All variables \(w_k^E\) and \(w_k^W\) are taken to be mutually independent across \(k\) and across the two innovation blocks.
The corresponding generator is given by
\[
  (T_\vart(\xi_0))_i
  =
  \max_{1\le k\le K}
  \Gamma_k^{-1}
  W_\vart(s_i;H_{\vart}^{W}(w_k^W)),
  \qquad i=1,\ldots,m,
\]
where
\(
  \Gamma_k
  =
  \sum_{\ell=1}^k
  -\log\Phi_{\mathrm N}(w_\ell^E).
\)
The maximum is non-smooth, so smooth relaxations may be required for gradient-based guidance.
\\

\newpage
\midrule

\rowcolor{lfGreen}
\multicolumn{2}{l}{\textbf{\emph{Dynamical and spatiotemporal priors}}}
\\
\midrule

State-space model or stochastic recurrence
&
A state-space model or stochastic recurrence can be written as
\[
\begin{aligned}
  x_0
  &=
  \Psi_{\vart,0}(\eta_0),
  \\
  x_n
  &=
  \Psi_{\vart,n}(x_{n-1},\eta_n),
  \qquad n=1,\ldots,N,
\end{aligned}
\]
where \(\eta_n\stackrel{\mathrm{ind}}{\sim}r_n\) are model innovations and \(\Psi_{\vart,n}\) is the one-step simulator.
The natural innovation is given by
\[
  \zeta_0=(\eta_0,\ldots,\eta_N).
\]
Let \(H_{\vart,n}\) be a map such that
\[
  \eta_n=H_{\vart,n}(w_n),
  \qquad
  w_n\sim\N(0,I).
\]
The Gaussian innovation is then given by
\[
  \xi_0=(w_0,\ldots,w_N).
\]
The corresponding generator is given by
\[
\begin{aligned}
  x_0
  &=
  \Psi_{\vart,0}(H_{\vart,0}(w_0)),
  \\
  x_n
  &=
  \Psi_{\vart,n}(x_{n-1},H_{\vart,n}(w_n)),
  \qquad n=1,\ldots,N,
\end{aligned}
\]
with output
\[
  T_\vart(\xi_0)=(x_0,\ldots,x_N).
\]
When the original innovations are already Gaussian, \(H_{\vart,n}\) is the identity map.
For discrete or mixed innovations, the maps \(H_{\vart,n}\) may be nonsmooth, in which case gradient-based guidance again requires a relaxation, surrogate, or derivative-free estimator.
Linear Gaussian and nonlinear Gaussian state-space models correspond to particular choices of \(\Psi_{\vart,n}\).
\\

\midrule

Diffusion process
&
A diffusion process satisfying
\[
  \dd x_\tau=b_\vart(\tau,x_\tau)\dd \tau+\Sigma_\vart(\tau,x_\tau)\dd w_\tau
\]
can be discretised by Euler--Maruyama as
\[
  x_{n+1}
  =
  x_n+b_\vart(\tau_n,x_n)\Delta_n
  +
  \Sigma_\vart(\tau_n,x_n)\sqrt{\Delta_n}\eta_n,
\]
where \(\Delta_n=\tau_{n+1}-\tau_n\), \(\eta_n\stackrel{\mathrm{ind}}{\sim}\N(0,I)\), and \(\Sigma_\vart(\tau_n,x_n)\) maps Gaussian innovations into diffusion increments.
The innovation is given by
\[
  \xi_0=(\eta_{-1},\eta_0,\ldots,\eta_{N-1}),
\]
where \(\eta_{-1}\sim\N(0,I)\) generates the initial state if \(x_0\) is random, through an initial-state map \(x_0=\chi_{\vart,0}(\eta_{-1})\), and is omitted otherwise.
The generator is given by applying the Euler--Maruyama recursion above and returning
\[
  T_\vart(\xi_0)=(x_0,\ldots,x_N).
\]
More generally, higher-order numerical solvers correspond to different choices of \(T_\vart\).
\\

\midrule

Jump diffusion, Lévy-driven SDE, or Lévy process
&
A jump diffusion or Lévy-driven SDE satisfying
\[
  \dd x_\tau
  =
  b_\vart(\tau,x_\tau)\dd \tau
  +
  \Sigma_\vart(\tau,x_\tau)\dd W_\tau
  +
  \dd j_\tau
\]
can be discretised as
\[
  x_{n+1}
  =
  x_n
  +
  b_\vart(\tau_n,x_n)\Delta_n
  +
  \Sigma_\vart(\tau_n,x_n)\sqrt{\Delta_n}\eta_n
  +
  \ell_n,
\]
where \(\Delta_n=\tau_{n+1}-\tau_n\), \(\eta_n\stackrel{\mathrm{ind}}{\sim}\N(0,I)\), and \(\ell_n\sim\nu_{\vart,\Delta_n}\) is the jump or Lévy increment over \([\tau_n,\tau_{n+1}]\).
The natural innovation is given by
\[
  \zeta_0
  =
  (\eta_{-1},\eta_0,\ldots,\eta_{N-1},\ell_0,\ldots,\ell_{N-1}),
\]
where \(\eta_{-1}\sim\N(0,I)\) generates the initial state if \(x_0\) is random, through an initial-state map \(x_0=\chi_{\vart,0}(\eta_{-1})\), and is omitted otherwise.
Let \(H_{\vart,\Delta_n}\) be a map such that
\[
  \ell_n=H_{\vart,\Delta_n}(w_n),
  \qquad
  w_n\sim\N(0,I).
\]
The Gaussian innovation is then given by
\[
  \xi_0
  =
  (\eta_{-1},\eta_0,\ldots,\eta_{N-1},w_0,\ldots,w_{N-1}).
\]
The corresponding generator is given by applying the jump--Euler recursion with
\[
  \ell_n=H_{\vart,\Delta_n}(w_n)
\]
and returning
\[
  T_\vart(\xi_0)=(x_0,\ldots,x_N).
\]
A Lévy process or subordinator is obtained as the special case with no state-dependent drift or diffusion term and additive Lévy increments.
\\

\midrule

Time-discretised SPDE
&
A spatially discretised SPDE of the form
\[
  \dd u_\tau=\mathcal A_\vart(u_\tau)\dd \tau+\mathcal B_\vart(u_\tau)\dd W_\tau
\]
can be time-discretised as
\[
  u_{n+1}=\Psi_{\vart,n}(u_n,\eta_n),
  \qquad n=0,\ldots,N-1,
\]
where \(\eta_n\stackrel{\mathrm{ind}}{\sim}\N(0,I)\) are finite-dimensional Gaussian noise innovations and \(\Psi_{\vart,n}\) is the numerical SPDE time-stepper.
The innovation is given by
\[
  \xi_0=(\eta_{-1},\eta_0,\ldots,\eta_{N-1}),
\]
where \(\eta_{-1}\sim\N(0,I)\) generates the initial state if \(u_0\) is random, through an initial-state map \(u_0=\chi_{\vart,0}(\eta_{-1})\), and is omitted otherwise.
The generator is given by applying the numerical SPDE solver above and returning the discretised space-time field.
\\

\midrule

\newpage
\rowcolor{lfAmber}
\multicolumn{2}{l}{\textbf{\emph{Implicit and simulator-defined priors}}}
\\
\midrule

Neural or deep implicit process
&
A neural or deep implicit process can be written as
\[
  f(s_i)
  =
  G_\vart(s_i,h,\epsilon_i),
  \qquad i=1,\ldots,m,
\]
where \(G_\vart\) is a neural decoder, \(h\) is a global latent variable, and \(\epsilon_i\) are optional local noise variables.
The innovation is given by
\[
  \xi_0=(h,\epsilon_1,\ldots,\epsilon_m),
\]
with a simple reference law, typically standard Gaussian.
The generator is given by
\[
  (T_\vart(\xi_0))_i
  =
  G_\vart(s_i,h,\epsilon_i),
  \qquad i=1,\ldots,m.
\]
If the decoder is differentiable, \(D_\xi T_\vart\) is available by automatic differentiation.
\\

\midrule

Black-box simulator
&
A black-box simulator can be written as
\[
  f_0=\mathcal S_\vart(s_{1:m};u),
\]
where \(\mathcal S_\vart\) is the simulator and \(u\sim\operatorname{Unif}((0,1)^q)\) is the simulator random seed.
The natural innovation is given by
\[
  \zeta_0=u\in(0,1)^q.
\]
Let
\(
  \xi_0=\Phi_{\mathrm N}^{-1}(u),
\)
where \(\Phi_{\mathrm N}\) is the standard normal distribution function, applied componentwise.
Then
\[
  \xi_0\sim\N(0,I_q).
\]
The corresponding generator is given by
\[
  T_\vart(\xi_0)=\mathcal S_\vart(s_{1:m};\Phi_{\mathrm N}(\xi_0)).
\]
If \(\mathcal S_\vart\) is not differentiable, guidance requires a smooth surrogate, relaxation, or derivative-free estimator.
\\

\end{longtable}
\endgroup

\section{Proofs for the latent conditioning principle}
\label[appendix]{app:basic-proofs}

\begin{proof}[Proof of \Cref{prop:latent-process-agree}]
Let \(\xi_0\sim\rho_{\vart}(\cdot\mid\calC)\).  Then, for any measurable \(A\subseteq E_m\), we have that
\begin{align*}
  \mathbb P\bigl(T_{\vart}(\xi_0)\in A\bigr)
  &=
  \frac{1}{Z_{\vart}}
  \int_{\mathbb R^{m_\xi}}
  \1_{\{T_{\vart}(\xi_0)\in A\}}
  L_{\calC}(T_{\vart}(\xi_0))\,r(\xi_0)\dd \xi_0
  \\
  &=
  \frac{1}{Z_{\vart}}
  \int_A L_{\calC}(f_0)\,p_{\vart}(f_0)\dd f_0 
  =
  \int_A \pi_{\vart}(f_0\mid\calC)\dd f_0.
\end{align*}
The third equality follows from the pushforward relation
\(T_{\vart\#}(r(\xi_0)\dd\xi_0)=p_{\vart}(f_0)\dd f_0\), applied to
the nonnegative measurable function
\(g(f_0)=\mathbf 1_A(f_0)L_{\calC}(f_0)\).
Hence \(T_{\vart}(\xi_0)\) has density \(\pi_{\vart}(\cdot\mid\calC)\), as required.
\end{proof}

\begin{remark}
\label{app:measure-theoretic-view}
We use density notation in the main text to simplify the exposition.
Our approach, however, admits a much more general measure-theoretic formulation.
Let \(P_{\vart}\) denote the prior law of \(f_0\). 
Assume that \(L_{\calC}:E_m\to[0,\infty)\) is measurable.
The conditional law on \(E_m\) is then
\begin{equation}
  \Pi_\vart(df_0\mid\calC)
  =
  \frac{L_{\calC}(f_0)P_\vart(df_0)}{Z_\vart}, \qquad Z_\vart
  \coloneqq
  \int L_{\calC}(f_0)P_\vart(df_0). \label{eq:process-target-measure}
\end{equation}
Our assumption now reads as follows. 
There exist a latent dimension \(m_\xi\), a reference probability measure \(R\) on \(\R^{m_\xi}\), and a measurable map
\(
  T_{\vart}:\R^{m_\xi}\to E_m
\)
such that
\(
  P_{\vart}
  =
  (T_{\vart})_{\#}R.
\)
Equivalently, if \(\xi_0\sim R\), then \(f_0=T_{\vart}(\xi_0)\sim P_{\vart}\).
The corresponding latent target is given by
\begin{equation}
  \rho_\vart(d\xi\mid\calC)
  =
  \frac{L_{\calC}(T_\vart(\xi))R(d\xi)}{Z_\vart}, \qquad 
  Z_\vart
  =
  \int L_{\calC}(T_\vart(\xi))R(d\xi). \label{eq:latent-target-measure}
\end{equation}
We emphasise that the values of \(Z_{\vart}\) as defined in \eqref{eq:process-target-measure} and \eqref{eq:latent-target-measure} coincide, since \(P_\vart=T_{\vart\#}R\) implies
\(
  \int L_{\calC}(f_0)P_\vart(df_0)
  =
  \int L_{\calC}(T_\vart(\xi))R(d\xi)
\)
by definition of the pushforward.
Now, for any measurable set \(A\subseteq E_m\), we have that
\[
\begin{aligned}
  T_{\vart\#}\rho_\vart(\cdot\mid\calC)(A)
  &=
  \int \1_{\{T_\vart(\xi)\in A\}}
  \frac{L_{\calC}(T_\vart(\xi))R(d\xi)}{Z_\vart}
  \\
  &=
  \frac{1}{Z_{\vart}}
  \int_A L_{\calC}(f_0)P_\vart(df_0)=
  \Pi_\vart(A\mid\calC).
\end{aligned}
\]
It follows that \(T_{\vart\#}\rho_\vart(\cdot\mid\calC)=\Pi_\vart(\cdot\mid\calC)\).
This is the measure-theoretic form of \Cref{prop:latent-process-agree}.
The density-based statement in the main text is recovered when \(P_\vart\) admits a density \(p_\vart\).
This more general argument also covers singular and discrete laws.
\end{remark}

Our next result establishes that the latent representation also preserves the relative-entropy cost of conditioning.

\begin{proposition}
\label{prop:kl-identity}
Suppose that Assumption~\ref{ass:latent-rep} holds and that \(0<Z_{\vart}<\infty\).
Then, in the extended sense,
\begin{equation}
  \KL\!\left(
    \rho_{\vart}(\cdot\mid\calC)\,\big\|\,r
  \right)
  =
  \KL\!\left(
    \pi_{\vart}(\cdot\mid\calC)\,\big\|\,p_{\vart}
  \right).
  \label{eq:kl-identity}
\end{equation}
\end{proposition}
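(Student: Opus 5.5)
Both divergences have explicit Radon--Nikodym derivatives, and both reduce to the same expectation of $\log L_{\calC}$. By \eqref{eq:latent-target}, the density of $\rho_{\vart}(\cdot\mid\calC)$ with respect to $r$ is $\Lambda_{\vart,\calC}/Z_{\vart}=(L_{\calC}\circ T_{\vart})/Z_{\vart}$. By \eqref{eq:process-target}, the density of $\pi_{\vart}(\cdot\mid\calC)$ with respect to $p_{\vart}$ is $L_{\calC}/Z_{\vart}$. Both conditioned laws are absolutely continuous with respect to their references, so each divergence is given by the usual integral formula:
\[
\KL(\rho_{\vart}\,\|\,r)=\E_{\xi_0\sim\rho_{\vart}}\bigl[\log L_{\calC}(T_{\vart}(\xi_0))\bigr]-\log Z_{\vart},\qquad
\KL(\pi_{\vart}\,\|\,p_{\vart})=\E_{f_0\sim\pi_{\vart}}\bigl[\log L_{\calC}(f_0)\bigr]-\log Z_{\vart}.
\]
The finiteness assumption $0<Z_{\vart}<\infty$ makes $\log Z_{\vart}$ finite. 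Both sides therefore reduce to expectations of the same function, $\log L_{\calC}$.

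The key step is to apply \Cref{prop:latent-process-agree}. It says that if $\xi_0\sim\rho_{\vart}(\cdot\mid\calC)$, then $T_{\vart}(\xi_0)\sim\pi_{\vart}(\cdot\mid\calC)$. Hence $\E_{\rho_{\vart}}[h(T_{\vart}(\xi_0))]=\E_{\pi_{\vart}}[h(f_0)]$ for every measurable $h$ for which either side is defined. We apply this with $h=\log L_{\calC}$, once for the positive part and once for the negative part. This gives equality of the two expectations as extended reals.

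The main obstacle is making the phrase ``in the extended sense'' rigorous. The integrand $\phi(x)=x\log x$ applied to the density ratio is bounded below by $-1/e$. Hence each KL divergence, written as $\int \phi(\mathrm d\rho/\mathrm dr)\,\mathrm dr$, is always a well-defined element of $[0,\infty]$. I would work directly with this representation rather than splitting into $\E[\log L]-\log Z$, which risks an $\infty-\infty$ form. Concretely, write
\[
\KL(\rho_{\vart}\,\|\,r)=\int \phi\bigl(L_{\calC}(T_{\vart}(\xi))/Z_{\vart}\bigr)\,r(\xi)\dd\xi .
\]
This integrand is a function of $T_{\vart}(\xi)$ alone. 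The pushforward identity $T_{\vart\#}r=p_{\vart}$ of \Cref{ass:latent-rep} then transfers the integral to $\int\phi(L_{\calC}(f_0)/Z_{\vart})\,p_{\vart}(f_0)\dd f_0=\KL(\pi_{\vart}\,\|\,p_{\vart})$. To justify the transfer for a quasi-integrable function, apply the pushforward identity separately to the nonnegative functions $\phi^+$ and $\phi^-$ of $L_{\calC}/Z_{\vart}$. The negative part is bounded and hence integrable, so both sides equal the same element of $[0,\infty]$.

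On the set where $L_{\calC}=0$, the convention $0\log 0=0$ applies, and $\phi$ is still a function of $T_{\vart}(\xi)$ there. Consequently no separate case analysis is needed, and the argument also extends verbatim to the measure-theoretic setting of \Cref{app:measure-theoretic-view}.
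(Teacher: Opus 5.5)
Your proposal is correct and is essentially the paper's proof. The paper writes both divergences as integrals of $\ell=\log(L_{\calC}/Z_{\vart})$ against the conditioned laws and transfers between them with \Cref{prop:latent-process-agree}, exactly as in your first two paragraphs; your $x\log x$ version performs the same transfer through the reference pushforward $T_{\vart\#}r=p_{\vart}$ instead. The $\infty-\infty$ worry does not arise in the paper's form either: with $x=\Lambda_{\vart,\calC}(\xi)/Z_{\vart}$ one has $\int \ell^{-}(T_{\vart}(\xi))\,\rho_{\vart}(\xi\mid\calC)\dd\xi=\int (x\log x)^{-}\,r(\xi)\dd\xi\le e^{-1}$, so both routes give the same well-defined element of $[0,\infty]$.
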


\begin{proof}
By construction, we have
\begin{equation}
  \frac{\rho_{\vart}(\xi_0\mid\calC)}{r(\xi_0)}
  =
  \frac{\Lambda_{\vart,\calC}(\xi_0)}{Z_{\vart}}
  =
  \frac{L_{\calC}(T_{\vart}(\xi_0))}{Z_{\vart}}
  \qquad r\text{-a.e.}
  \label{eq:latent-density-ratio}
\end{equation}
Similarly, due to \eqref{eq:process-target}, it holds that
\begin{equation}
  \frac{\pi_{\vart}(f_0\mid\calC)}{p_{\vart}(f_0)}
  =
  \frac{L_{\calC}(f_0)}{Z_{\vart}}
  \qquad p_{\vart}\text{-a.e.}
  \label{eq:process-density-ratio}
\end{equation}
Let
\(
  \ell(f_0)
  =
  \log\left(\frac{L_{\calC}(f_0)}{Z_{\vart}}\right),
\)
with the usual extended-value convention.
Using \eqref{eq:latent-density-ratio}, we obtain
\begin{equation}
  \KL\!\left(
    \rho_{\vart}(\cdot\mid\calC)\,\big\|\,r
  \right)
  =
  \int_{\mathbb R^{m_\xi}}
  \ell(T_{\vart}(\xi_0))
  \rho_{\vart}(\xi_0\mid\calC)\dd\xi_0.
  \label{eq:latent-kl-as-expectation}
\end{equation}
By \Cref{prop:latent-process-agree}, pushing \(\rho_{\vart}(\cdot\mid\calC)\) forward through \(T_{\vart}\) gives \(\pi_{\vart}(\cdot\mid\calC)\).
We therefore have,
\begin{equation}
  \int_{\mathbb R^{m_\xi}}
  \ell(T_{\vart}(\xi_0))
  \rho_{\vart}(\xi_0\mid\calC)\dd\xi_0
  =
  \int_{E_m}
  \ell(f_0)
  \pi_{\vart}(f_0\mid\calC)\dd f_0.
  \label{eq:pushforward-kl-expectation}
\end{equation}
Finally, using \eqref{eq:process-density-ratio}, the right-hand side of \eqref{eq:pushforward-kl-expectation} is exactly
\(
  \KL\!\left(
    \pi_{\vart}(\cdot\mid\calC)\,\big\|\,p_{\vart}
  \right).
\)
This proves the identity.
\end{proof}

This result suggests that the latent construction is not just a convenient sampling device.
Since the additional likelihood depends on the latent variable only through the generated process value \(T_{\vart}(\xi_0)\), the relative-entropy deformation from the prior \(p_{\vart}\) to the conditional \(\pi_{\vart}(\cdot\mid\calC)\) is exactly matched by the deformation from the reference \(r\) to the latent conditional \(\rho_{\vart}(\cdot\mid\calC)\).
In contrast, for a generic latent density \(\eta\) on \(\mathbb R^{m_\xi}\), if \(p_{\eta}\) denotes the density of \(T_{\vart}(\xi_0)\) when \(\xi_0\sim\eta\), then the data-processing inequality only gives
\(
  \KL(p_{\eta}\,\|\,p_{\vart})
  \le
  \KL(\eta\,\|\,r)
\).
Thus the latent target introduces no additional information beyond the output-level conditioning likelihood.
\section{Exactness of the latent dynamics}
\label[appendix]{app:exactness-proofs}

We now fix \(\vart\) and \(\calC\), and write \(T=T_{\vart}\), \(\Lambda=\Lambda_{\vart,\calC}\), \(Z=Z_{\vart}\), and \(\rho_0(\xi_0)=\rho_{\vart}(\xi_0\mid\calC)\).
We also write \(r(\xi)=\N(\xi;0,I_{m_\xi})\).

Recall that the VP noising schedule is specified by a locally integrable function \(\beta:[0,1)\to(0,\infty)\).
We write
\begin{equation}
  B(t)
  =
  \int_0^t\beta(s)\dd s,
  \qquad
  \alpha(t)
  =
  \exp\big\{-\frac12 B(t)\big\},
  \qquad
  \sigma(t)^2
  =
  1-\alpha(t)^2.
  \label{eq:appendix-vp-schedule}
\end{equation}
We now allow either a finite-noising horizon, where
\(
  B(1):=\lim_{t\uparrow1}B(t)<\infty,
\)
or an ideal infinite-noising horizon, where \(B(1)=\infty\) and hence \(\alpha(t)\downarrow0\) as \(t\uparrow1\).
For \(t\in[0,1)\), the VP bridge is
\begin{equation}
  \xi_t
  =
  \alpha(t)\xi_0+\sigma(t)\varepsilon,
  \qquad
  \varepsilon\sim \N(0,I_{m_{\xi}}),
  \label{eq:appendix-vp-bridge}
\end{equation}
where \(\varepsilon\) is independent of \(\xi_0\).
We write \(\rho_t\) for the density of the noised latent variable obtained by drawing \(\xi_0\sim\rho_0\) and applying \eqref{eq:appendix-vp-bridge}.
Equivalently,
\begin{equation}
  \rho_t(u)
  =
  \frac{\psi(t,u)r(u)}{Z},
  \qquad
  \psi(t,u)
  =
  \mathbb E_{\xi_0\sim r}\left[
    \Lambda(\xi_0)\mid \xi_t=u
  \right].
  \label{eq:appendix-guided-marginal-density}
\end{equation}
where the conditional expectation is taken under the reference bridge with \(\xi_0\sim r\).
Finally, assuming $\psi(t,\cdot)$ is differentiable, we write \(g(t,u)=\nabla_u\log\psi(t,u)\) for the latent guidance field.

In \Cref{sec:exact-latent}, we formally identified the SDE associated with the guided latent marginals \((\rho_t)_{0\le t<1}\).
We now recall these dynamics using notation that is convenient for the exactness statements below.
The forward noising SDE is given by
\begin{equation}
  \dd \xi_t
  =
  -\frac12\beta(t)\xi_t\,\dd t
  +
  \sqrt{\beta(t)}\,\dd w_t,
  \qquad
  \xi_0\sim\rho_0,
  \qquad
  0\leq t<1,
  \label{eq:forward-vp-sde}
\end{equation}
where \(w=(w_t)_{0\le t<1}\) is a standard \(\mathbb R^{m_\xi}\)-valued Brownian motion.
Fix \(0\le t_0<t_1<1\), and set
\(
  \tau_s=t_1-s,
\) 
for
\(
  0\le s\le t_1-t_0.
\)
The reverse-time SDE is then given by
\begin{equation}
  \dd \xi_s^{\leftarrow}
  =
  \left[
    \beta(\tau_s)g(\tau_s,\xi_s^{\leftarrow})
    -
    \frac12\beta(\tau_s)\xi_s^{\leftarrow}
  \right]\dd s
  +
  \sqrt{\beta(\tau_s)}\,\dd \overline w_s,
  \qquad
  \xi_0^{\leftarrow}\sim\rho_{t_1},
  \label{eq:latent-reverse-sde}
\end{equation}
where \(\overline w = (\overline{w}_s)_{0\leq s\leq t_1-t_0}\) is a standard Brownian motion in reverse time.
The same family of marginals can also be generated by a deterministic process known as the probability-flow ODE \citep[e.g.,][]{song2021score}.
In forward noising time, this ODE is given by
\begin{equation}
  \frac{\dd u_t}{\dd t}
  =
  -\frac12\beta(t)g(t,u_t),
  \qquad
  u_0\sim\rho_0,
  \qquad
  0\le t<1.
  \label{eq:latent-pf-ode-forward}
\end{equation}
For sampling, we again use the reverse-time parametrisation \(u_s^{\leftarrow}=u_{\tau_s}\), with $\tau_s$ as defined above.
In particular, the reverse-time probability-flow ODE is
\begin{equation}
  \frac{\dd u^{\leftarrow}_s}{\dd s}
  =
  \frac12\beta(\tau_s)g(\tau_s,u^{\leftarrow}_s), \qquad u_0^{\leftarrow} \sim \rho_{t_1}.
  \label{eq:latent-pf-ode}
\end{equation}

\begin{assumption}
\label{ass:pf-regularity}
For every compact interval \([t_0,t_1]\subset(0,1)\), \(\rho_t\) is a strictly positive Lebesgue density for all \(t\in[t_0,t_1]\).
The map \((t,u)\mapsto \nabla_u\log\rho_t(u)\) is sufficiently regular that the reverse-time SDE \eqref{eq:latent-reverse-sde} and the probability-flow ODE \eqref{eq:latent-pf-ode} are well posed on the corresponding time intervals, and their associated Fokker--Planck and continuity equations have unique density-valued solutions.
\end{assumption}

\begin{theorem}
\label{thm:latent-reverse-sde-exactness}
Suppose that Assumption~\ref{ass:latent-rep} holds, that \(r=\N(0,I_{m_\xi})\), that \(0<Z<\infty\), and that Assumption~\ref{ass:pf-regularity} holds.
Fix \(0<t_0<t_1<1\), and define \(\tau_s=t_1-s\) for \(0\le s\le t_1-t_0\).
Let \(\xi^{\leftarrow}_0\sim\rho_{t_1}\), and let \((\xi^{\leftarrow}_s)_{0\le s\le t_1-t_0}\) solve the reverse-time SDE \eqref{eq:latent-reverse-sde}.
Then \(\xi^{\leftarrow}_s\sim\rho_{\tau_s}\) for all \(0\le s\le t_1-t_0\).
In particular, \(\xi^{\leftarrow}_{t_1-t_0}\sim\rho_{t_0}\).
If the reverse SDE is well defined down to \(t_0=0\), then
\begin{equation}
  \xi^{\leftarrow}_{t_1}\sim\rho_0
  =
  \rho_{\vart}(\cdot\mid\calC),
  \qquad
  T_{\vart}(\xi^{\leftarrow}_{t_1})\sim\pi_{\vart}(\cdot\mid\calC).
\end{equation}
\end{theorem}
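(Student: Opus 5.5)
The plan is to show that the time-reversed marginals $q_s := \rho_{\tau_s}$ solve the Fokker--Planck equation associated with \eqref{eq:latent-reverse-sde}. The uniqueness part of Assumption~\ref{ass:pf-regularity} then identifies them with the law of $\xi^{\leftarrow}_s$.

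\emph{Step 1: forward equation.} First I would verify that $(\rho_t)$ solves the forward Fokker--Planck equation of \eqref{eq:forward-vp-sde},
\begin{equation}
  \partial_t\rho_t=\tfrac12\beta(t)\nabla\cdot(u\rho_t)+\tfrac12\beta(t)\Delta\rho_t .
\end{equation}
The cleanest route is via the bridge representation \eqref{eq:appendix-vp-bridge}. The Gaussian kernel $k_t(u\mid\xi_0)=\N(u;\alpha(t)\xi_0,\sigma(t)^2I)$ satisfies this PDE for each fixed $\xi_0$, since $\dot\alpha=-\tfrac12\beta\alpha$ and $\tfrac{d}{dt}\sigma^2=\beta\alpha^2=\beta(1-\sigma^2)$. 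Mixing against $\rho_0$ preserves the PDE because it is linear. The one technical point is differentiating under the integral, which is justified by Gaussian smoothness for $t\ge t_0>0$.

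\emph{Step 2: rewrite the drift.} Next I would rewrite the score. By \eqref{eq:appendix-guided-marginal-density} and $\nabla\log r(u)=-u$,
\begin{equation}
  \nabla\log\rho_t=g(t,\cdot)-u .
\end{equation}
Hence the reverse drift $\beta g-\tfrac12\beta u$ equals $\beta\nabla\log\rho_{\tau}+\tfrac12\beta u$, which is exactly Anderson's reverse drift $-b+\beta\nabla\log\rho$ for the forward drift $b=-\tfrac12\beta u$.

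\emph{Step 3: reverse Fokker--Planck.} Then I would check directly that $q_s$ satisfies the Fokker--Planck equation of \eqref{eq:latent-reverse-sde}:
\begin{equation}
  \partial_s q_s=-\nabla\cdot\bigl(q_s[\beta\nabla\log q_s+\tfrac12\beta u]\bigr)+\tfrac12\beta\Delta q_s ,
\end{equation}
with $\beta=\beta(\tau_s)$. The right-hand side is
\begin{equation}
  -\beta\Delta q_s-\tfrac12\beta\nabla\cdot(uq_s)+\tfrac12\beta\Delta q_s
  =-\bigl[\tfrac12\beta\nabla\cdot(uq_s)+\tfrac12\beta\Delta q_s\bigr],
\end{equation}
and this equals $-\partial_t\rho_t|_{t=\tau_s}=\partial_s q_s$. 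Together with $q_0=\rho_{t_1}=\mathrm{Law}(\xi^{\leftarrow}_0)$, the uniqueness in Assumption~\ref{ass:pf-regularity} gives $\mathrm{Law}(\xi^{\leftarrow}_s)=\rho_{\tau_s}$ on $[0,t_1-t_0]$.

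\emph{Step 4: endpoint at $t_0=0$.} If the SDE is well posed down to $t_0=0$, I would pass to the limit $s\uparrow t_1$. By path continuity, $\xi^\leftarrow_s\to\xi^\leftarrow_{t_1}$ almost surely. Also $\rho_{\tau}\Rightarrow\rho_0$ as $\tau\downarrow0$, because $\alpha(\tau)\to1$ and $\sigma(\tau)\to0$ in the bridge. Therefore $\xi^\leftarrow_{t_1}\sim\rho_0$. The process-space claim then follows immediately from Proposition~\ref{prop:latent-process-agree}.

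The main obstacle is regularity at the boundary. In Step 1 the concern is justifying the PDE in a weak sense when $\rho_0$ is not smooth. In Step 4 the concern is the limit $t\to0$, where $g$ may blow up. I would handle the PDE in the distributional sense against test functions, and use the weak convergence argument above so that no uniqueness is needed at $t=0$.
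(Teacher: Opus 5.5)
Your proposal is correct and follows the paper's own argument. The shared steps are: rewrite the score as $\nabla\log\rho_t=g-u$ so the drift of \eqref{eq:latent-reverse-sde} matches Anderson's reverse drift, verify that $q_s=\rho_{\tau_s}$ solves the reverse Fokker--Planck equation, and invoke the uniqueness in Assumption~\ref{ass:pf-regularity}. Two of your steps are slightly more careful than the paper, which takes these points for granted: you derive the forward equation from the Gaussian bridge kernel, and you pass to $t_0=0$ by a weak-limit argument, which avoids needing uniqueness at the boundary (the assumption only covers compact intervals inside $(0,1)$).
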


\begin{proof}
The proof is standard. 
Let
\(
  b_t(u)
  =
  -\frac12\beta(t)u
\)
denote the drift of the forward VP SDE.
By the standard time-reversal formula for diffusions \citep[e.g.,][]{anderson1982reverse}, the reverse-time diffusion with clock \(\tau_s=t_1-s\) has drift given by
\begin{equation}
  \widetilde b_s(u)
  =
  -b_{\tau_s}(u)
  +
  \beta(\tau_s)\nabla_u\log\rho_{\tau_s}(u).
  \label{eq:reverse-drift-from-score}
\end{equation}
Moreover, since \(\rho_t(u)=Z^{-1}\psi(t,u)r(u)\) and \(r(u)=\N(u;0,I_{m_\xi})\), it follows straightforwardly that
\begin{equation}
  \nabla_u\log\rho_t(u)
  =
  \nabla_u\log\psi(t,u)-u
  =
  g(t,u)-u.
  \label{eq:rho-score-psi-score}
\end{equation}
We thus have, substituting \eqref{eq:rho-score-psi-score} into \eqref{eq:reverse-drift-from-score}, that
\begin{align}
  \widetilde b_s(u)
  =
  \frac12\beta(\tau_s)u
  +
  \beta(\tau_s)\left(g(\tau_s,u)-u\right)
  =
  \beta(\tau_s)g(\tau_s,u)
  -
  \frac12\beta(\tau_s)u.
\end{align}
This is exactly the drift in \eqref{eq:latent-reverse-sde}.
Equivalently, if \(q_s(u)=\rho_{\tau_s}(u)\), then \(q_s\) solves the Fokker--Planck equation associated with \eqref{eq:latent-reverse-sde}.
Indeed,
\begin{align}
  \partial_s q_s
  =
  -\partial_t\rho_t\big|_{t=\tau_s}
  =
  \nabla\cdot(b_{\tau_s}\rho_{\tau_s})
  -
  \frac12\beta(\tau_s)\Delta \rho_{\tau_s}
  =
  -\nabla\cdot(\widetilde b_s q_s)
  +
  \frac12\beta(\tau_s)\Delta q_s.
\end{align}
By uniqueness in Assumption~\ref{ass:pf-regularity}, the law of \(\xi^{\leftarrow}_s\) has density \(q_s=\rho_{\tau_s}\).
The claims at \(t_0=0\) follow because \(\rho_0=\rho_{\vart}(\cdot\mid\calC)\). 
Finally, the process-space statement follows from \Cref{prop:latent-process-agree}.
\end{proof}

\begin{theorem}
\label{thm:latent-pf-exactness}
Suppose that Assumption~\ref{ass:latent-rep} holds, that \(r=\N(0,I_{m_\xi})\), that \(0<Z<\infty\), and that Assumption~\ref{ass:pf-regularity} holds.
Fix \(0<t_0<t_1<1\), and define \(\tau_s=t_1-s\) for \(0\le s\le t_1-t_0\).
If \(u^{\leftarrow}_0\sim\rho_{t_1}\) and \((u^{\leftarrow}_s)_{0\le s\le t_1-t_0}\) solves the reverse-time probability-flow ODE \eqref{eq:latent-pf-ode}, then
\(
  u^{\leftarrow}_s\sim\rho_{\tau_s}
\)
for all $0\le s\le t_1-t_0$.
In particular,
\(
  u^{\leftarrow}_{t_1-t_0}\sim\rho_{t_0}.
\)
If the flow is well defined down to \(t_0=0\), then
\begin{equation}
  u^{\leftarrow}_{t_1}\sim\rho_0
  =
  \rho_{\vart}(\cdot\mid\calC),
  \qquad
  T_{\vart}(u^{\leftarrow}_{t_1})\sim\pi_{\vart}(\cdot\mid\calC).
\end{equation}
\end{theorem}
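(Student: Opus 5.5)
The proof runs parallel to that of \Cref{thm:latent-reverse-sde-exactness}, with the Fokker--Planck equation replaced by a continuity equation. The marginals \((\rho_t)\) of the forward VP SDE \eqref{eq:forward-vp-sde} satisfy
\[
  \partial_t\rho_t=\nabla\cdot\bigl(\tfrac12\beta(t)u\,\rho_t\bigr)+\tfrac12\beta(t)\Delta\rho_t .
\]
Since \(\Delta\rho_t=\nabla\cdot(\rho_t\nabla\log\rho_t)\) and \(\rho_t>0\) by Assumption~\ref{ass:pf-regularity}, this can be rewritten as
\[
  \partial_t\rho_t=-\nabla\cdot(v_t\rho_t),
  \qquad
  v_t(u)=-\tfrac12\beta(t)u-\tfrac12\beta(t)\nabla_u\log\rho_t(u).
\]
Next we substitute the score identity \eqref{eq:rho-score-psi-score}, \(\nabla_u\log\rho_t=g(t,u)-u\), established in the proof of \Cref{thm:latent-reverse-sde-exactness}. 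The linear terms cancel and leave \(v_t(u)=-\tfrac12\beta(t)g(t,u)\). This is exactly the velocity of the forward probability-flow ODE \eqref{eq:latent-pf-ode-forward}, so \((\rho_t)\) solves its continuity equation.

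The reverse-time clock is \(\tau_s=t_1-s\), and we set \(q_s=\rho_{\tau_s}\). By the chain rule,
\[
  \partial_sq_s=-\partial_t\rho_t|_{t=\tau_s}=\nabla\cdot(v_{\tau_s}q_s)=-\nabla\cdot(\widetilde v_s q_s),
  \qquad
  \widetilde v_s(u)=\tfrac12\beta(\tau_s)g(\tau_s,u).
\]
This \(\widetilde v_s\) is precisely the vector field of \eqref{eq:latent-pf-ode}. The law of \(u^{\leftarrow}_s\) also solves this continuity equation: for any test function \(\varphi\), differentiate \(\E[\varphi(u^{\leftarrow}_s)]\) along the ODE, which is well posed by Assumption~\ref{ass:pf-regularity}. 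Both solutions start from the same initial datum \(\rho_{t_1}\). By the uniqueness of density-valued solutions in Assumption~\ref{ass:pf-regularity}, the law of \(u^{\leftarrow}_s\) is \(\rho_{\tau_s}\) for all \(s\in[0,t_1-t_0]\); in particular \(u^{\leftarrow}_{t_1-t_0}\sim\rho_{t_0}\).

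For \(t_0=0\), the same argument shows that \(u^{\leftarrow}_{t_1}\sim\rho_0=\rho_{\vart}(\cdot\mid\calC)\). \Cref{prop:latent-process-agree} then gives \(T_{\vart}(u^{\leftarrow}_{t_1})\sim\pi_{\vart}(\cdot\mid\calC)\).

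The main obstacle is justifying the manipulations rather than the algebra itself. Writing the diffusion term as a transport term needs \(\rho_t>0\) and a well-defined score, both supplied by Assumption~\ref{ass:pf-regularity}. The identification of the ODE law with \(q_s\) rests entirely on the assumed uniqueness for the continuity equation. At \(t_0=0\), \(g\) may be irregular (for instance when \(\Lambda\) is discontinuous), so that case is handled by hypothesis: we assume the flow is well defined and apply the same uniqueness argument on \([0,t_1]\).
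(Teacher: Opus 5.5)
Your proposal is correct and follows essentially the same route as the paper. Both rewrite the forward Fokker--Planck equation as a continuity equation, use $\nabla_u\log\rho_t=g(t,u)-u$ to reduce the velocity to $-\tfrac12\beta(t)g(t,u)$, and conclude by the uniqueness in \Cref{ass:pf-regularity} together with \Cref{prop:latent-process-agree}. The differences are cosmetic: the paper checks by a test-function computation that $\rho_t$ are the forward VP marginals, which you take as given, while you spell out the reversed-clock continuity equation and the fact that the ODE law solves it.
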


\begin{proof}
The marginals \((\rho_t)_{0\le t<1}\) are generated by the forward VP-SDE when the initial law is \(\rho_0=\rho_{\vart}(\cdot\mid\calC)\).
Indeed, for any bounded measurable test function \(\varphi\),
\begin{align}
  \mathbb E_{\xi_0\sim\rho_0}\left[\varphi(\xi_t)\right]
  &=
  \frac1Z
  \mathbb E_{\xi_0\sim r,\varepsilon\sim r}
  \left[
    \Lambda(\xi_0)
    \varphi\left(\alpha(t)\xi_0+\sigma(t)\varepsilon\right)
  \right]
  \nonumber
  \\
  &=
  \frac1Z
  \mathbb E_{u\sim r}
  \left[
    \varphi(u)
    \mathbb E\left[\Lambda(\xi_0)\mid \xi_t=u\right]
  \right]
  =
  \int_{\mathbb R^{m_\xi}}
  \varphi(u)\rho_t(u)\dd u.
\end{align}
Thus \(\rho_t\) solves the Fokker--Planck equation
\begin{equation}
  \partial_t\rho_t
  =
  -\nabla\cdot(b_t\rho_t)
  +
  \frac12\beta(t)\Delta\rho_t,
  \qquad
  b_t(u)=-\frac12\beta(t)u.
  \label{eq:vp-fokker-planck-rho}
\end{equation}
Writing \(s_t(u)=\nabla_u\log\rho_t(u)\), we can rewrite \eqref{eq:vp-fokker-planck-rho} as
\begin{align}
  \partial_t\rho_t
  &=
  -\nabla\cdot(b_t\rho_t)
  +
  \frac12\beta(t)\nabla\cdot(\rho_t s_t)
  =
  -\nabla\cdot
  \big(
    \big[
      b_t
      -
      \frac12\beta(t)s_t
    \big]
    \rho_t
  \big).
\end{align}
It follows that \(\rho_t\) also solves the continuity equation
\begin{equation}
  \partial_t\rho_t+\nabla\cdot(v_t\rho_t)=0,
  \qquad
  v_t(u)=b_t(u)-\frac12\beta(t)s_t(u).
  \label{eq:pf-continuity-equation}
\end{equation}
Meanwhile, substituting \eqref{eq:rho-score-psi-score}, we have that 
\begin{align}
  v_t(u)
  &=
  -\frac12\beta(t)u
  -
  \frac12\beta(t)\left(g(t,u)-u\right)
  =
  -\frac12\beta(t)g(t,u).
\end{align}
This is exactly the vector field in the forward probability-flow ODE \eqref{eq:latent-pf-ode-forward}.
By Assumption~\ref{ass:pf-regularity}, the ODE flow is well posed and gives the unique solution of the continuity equation.
Therefore the backward flow \eqref{eq:latent-pf-ode} transports \(\rho_{t_1}\) to \(\rho_{t_0}\).
Taking \(t_0=0\) gives \(u^{\leftarrow}_{t_1}\sim\rho_0=\rho_{\vart}(\cdot\mid\calC)\), and the process-space statement follows from \Cref{prop:latent-process-agree}.
\end{proof}

\begin{remark}
If \(B(1)=\infty\), then \(\alpha(t)\downarrow0\) and the noised target marginals satisfy
\(
  \rho_t\to r
\)
in total variation as \(t\uparrow1\), provided \(\Lambda\in L^1(r)\).
In this sense, starting from \(u^{\leftarrow}_0\sim r\) can be viewed as the limiting infinite-noising version of \Cref{thm:latent-pf-exactness}.
For a finite noising horizon, exact sampling from a terminal time \(t_1\) requires initialising from the terminal marginal \(\rho_{t_1}\), not from
\(r\).
If one instead starts from \(r\), the initialisation discrepancy is \(\|\rho_{t_1}-r\|_{\mathrm{TV}}\), and the discrepancy after exact reverse probability flow is at most this quantity, with equality when the deterministic flow is invertible.
\end{remark}
\section{Approximation and stability guarantees}
\label[appendix]{sec:approx-guidance-stability}

The practical sampler replaces the exact guidance field \(g(t,u)\) by an approximate field \(\widehat g_S(t,u)\), and then solves the resulting ODE or SDE numerically.
This introduces two sources of error: the guidance approximation error and the numerical discretisation error.
Both errors enter the probability-flow ODE and the reverse-time SDE through the same perturbation mechanism.
We thus state the main stability and numerical-error bounds in a unified form.

\subsection{Common perturbation framework}
\label{subsec:latent-common-perturbation-framework}

We work on a truncated forward-time interval \([\tau,t_1]\), where \(0<\tau<t_1<1\), and use reverse-time notation throughout this subsection.
Set
\begin{equation}
  R_{\tau,t_1}=t_1-\tau,
  \qquad
  \kappa_s=t_1-s,
  \qquad
  0\le s\le R_{\tau,t_1}.
  \label{eq:latent-reverse-time-notation}
\end{equation}
Thus \(s\) denotes reverse time and \(\kappa_s\) denotes the corresponding forward noising time.
We index the two reverse dynamics by \(\chi\in\{\mathrm{ode},\mathrm{sde}\}\), and define
\begin{equation}
\begin{array}{c|ccc}
\chi
& \lambda_\chi
& \rho_\chi
& \Sigma_\chi(s) \\
\hline
\mathrm{ode}
& \frac12
& 0
& 0 \\
\mathrm{sde}
& 1
& \frac12
& \sqrt{\beta(\kappa_s)}I_{m_\xi}.
\end{array}
\label{eq:latent-dynamics-constants}
\end{equation}
For an exact guidance field \(g\) and an approximate guidance field \(\widehat g\), define the exact and approximate reverse-time drifts
\begin{equation}
  F_\chi(s,u)
  =
  \lambda_\chi\beta(\kappa_s)g(\kappa_s,u)
  -
  \rho_\chi\beta(\kappa_s)u,
  \qquad
  \widehat F_\chi(s,u)
  =
  \lambda_\chi\beta(\kappa_s)\widehat g(\kappa_s,u)
  -
  \rho_\chi\beta(\kappa_s)u.
  \label{eq:latent-unified-reversed-drifts}
\end{equation}
In particular, for the probability-flow ODE we have
\begin{align}
  a(s,u)=F_{\mathrm{ode}}(s,u)
  &=
  \frac12\beta(\kappa_s)g(\kappa_s,u),
  \\
  \widehat a(s,u)=\widehat F_{\mathrm{ode}}(s,u)
  &=
  \frac12\beta(\kappa_s)\widehat g(\kappa_s,u).
  \label{eq:latent-reversed-drifts}
\end{align}
Meanwhile, for the reverse-time SDE we have
\begin{align}
  b(s,u)=F_{\mathrm{sde}}(s,u)
  &=
  \beta(\kappa_s)g(\kappa_s,u)
  -
  \frac12\beta(\kappa_s)u,
  \\
  \widehat b(s,u)=\widehat F_{\mathrm{sde}}(s,u)
  &=
  \beta(\kappa_s)\widehat g(\kappa_s,u)
  -
  \frac12\beta(\kappa_s)u.
  \label{eq:latent-reverse-sde-drifts}
\end{align}
The exact and approximate dynamics may then be written as
\begin{equation}
  \dd X^\chi_s
  =
  F_\chi(s,X^\chi_s)\dd s
  +
  \Sigma_\chi(s)\dd W_s,
  \qquad
  \dd \widehat X^\chi_s
  =
  \widehat F_\chi(s,\widehat X^\chi_s)\dd s
  +
  \Sigma_\chi(s)\dd W_s.
  \label{eq:latent-unified-exact-approx-dynamics}
\end{equation}
When \(\chi=\mathrm{ode}\), we write
\(
  X^{\mathrm{ode}}=u^{\leftarrow},
  \widehat X^{\mathrm{ode}}=\widehat u^{\leftarrow}.
\)
When \(\chi=\mathrm{sde}\), we write
\(
  X^{\mathrm{sde}}=\xi^{\leftarrow},
  \widehat X^{\mathrm{sde}}=\widehat \xi^{\leftarrow}.
\)

Let \(\mathcal R_{\tau,t_1}\subset[\tau,t_1]\times\mathbb R^{m_\xi}\) be a region on which the exact and approximate guidance fields are compared, and which contains the exact and approximate trajectories under consideration.
Define the realised pointwise guidance error by
\begin{equation}
  \varepsilon(s)
  =
  \sup\Bigl\{
    \|g(\kappa_s,u)-\widehat g(\kappa_s,u)\|:
    (\kappa_s,u)\in\mathcal R_{\tau,t_1}
  \Bigr\}, \qquad \delta_\chi(s)
  =
  \lambda_\chi\beta(\kappa_s)\varepsilon(s).
  \label{eq:latent-pointwise-guidance-error}
\end{equation}
For the ODE and the SDE, we thus have that
\begin{equation}
  \delta_{\mathrm{ode}}(s)
  =
  \frac12\beta(\kappa_s)\varepsilon(s),
  \qquad
  \delta_{\mathrm{sde}}(s)
  =
  \beta(\kappa_s)\varepsilon(s).
  \label{eq:latent-ode-error-functions}
\end{equation}
In the common case where
\(
  \varepsilon_{\tau,t_1}
  :=
  \sup_{(t,u)\in\mathcal R_{\tau,t_1}}
  \|g(t,u)-\widehat g(t,u)\|<\infty,
\)
it follows from \eqref{eq:latent-pointwise-guidance-error} that
\(
  \delta_\chi(s)
  \le
  \lambda_\chi\beta(\kappa_s)\varepsilon_{\tau,t_1}.
\)

Given a time-dependent one-sided Lipschitz rate \(\eta\in L^1([0,R_{\tau,t_1}])\), we define the integral
\begin{equation}
  \Theta_\eta(s,r)
  =
  \exp\left\{\int_r^s \eta(q)\dd q\right\},
  \qquad
  0\le r\le s\le R_{\tau,t_1}.
  \label{eq:latent-theta-eta}
\end{equation}
Finally, throughout the statements below, we treat \(\widehat g\) as a realised approximate guidance field.
If \(\widehat g\) is random, our results are to be read conditionally on this realisation.
Probabilistic rates follow by combining these conditional stability bounds with separate probabilistic control of the realised guidance error.

\subsection{Unified stability of approximate guidance}
\label{subsec:latent-unified-stability}

The following result is a variable-rate version of the deterministic perturbation bound obtained in \citep[Proposition~E.5]{moss2026conditioning}, restated in our present notation and extended to cover the reverse-time SDE under a synchronous coupling.

\begin{proposition}
\label{prop:reverse-sde-synchronous-stability}
Fix \(\chi\in\{\mathrm{ode},\mathrm{sde}\}\), and let \(X^\chi\) and
\(\widehat X^\chi\) solve
\eqref{eq:latent-unified-exact-approx-dynamics} on
\([0,R_{\tau,t_1}]\).
For the SDE, assume that the two equations are driven by the same
Brownian motion.
Assume that
\(
  (\kappa_s,X^\chi_s),(\kappa_s,\widehat X^\chi_s)
  \in
  \mathcal R_{\tau,t_1}
\)
for all
\(
  s\in[0,R_{\tau,t_1}].
\)
Suppose that the exact reverse-time drift satisfies the one-sided
Lipschitz condition
\begin{equation}
  \langle x-y,F_\chi(s,x)-F_\chi(s,y)\rangle
  \le
  \eta_\chi(s)\|x-y\|^2
  \label{eq:latent-osl}
\end{equation}
whenever \((\kappa_s,x)\) and \((\kappa_s,y)\) belong to
\(\mathcal R_{\tau,t_1}\).
Then, for every \(0\le s\le R_{\tau,t_1}\), the following bound holds:
\begin{equation}
  \|X^\chi_s-\widehat X^\chi_s\|
  \le
  \Theta_{\eta_\chi}(s,0)\|X^\chi_0-\widehat X^\chi_0\|
  +
  \int_0^s \Theta_{\eta_\chi}(s,r)\delta_\chi(r)\dd r.
  \label{eq:latent-guidance-error-bound-reverse-time}
\end{equation}
If the exact and approximate solves are coupled by the same reverse-time
initial point, equivalently by the same terminal point at forward time
\(t_1\), the first term is absent.
Moreover, for  \(p\ge1\), provided the initial laws have finite \(p\)-th moments, 
\begin{equation}
  W_p\!\left(\Law(X^\chi_s),\Law(\widehat X^\chi_s)\right)
  \le
  \Theta_{\eta_\chi}(s,0)
  W_p\!\left(\Law(X^\chi_0),\Law(\widehat X^\chi_0)\right)
  +
  \int_0^s \Theta_{\eta_\chi}(s,r)\delta_\chi(r)\dd r.
  \label{eq:latent-unified-wasserstein-stability}
\end{equation}
\end{proposition}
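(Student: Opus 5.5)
The approach is a Grönwall-type argument on the difference process \(D_s = X^\chi_s-\widehat X^\chi_s\). The key observation is that, because both solutions share the same diffusion coefficient \(\Sigma_\chi(s)\) and, in the SDE case, the same Brownian motion, the noise cancels exactly in \(D_s\). Hence \(D\) is absolutely continuous in \(s\) with
\[
  \frac{\dd D_s}{\dd s} = F_\chi(s,X^\chi_s)-\widehat F_\chi(s,\widehat X^\chi_s)
\]
pathwise. This holds in both cases, so the ODE and SDE can be treated identically, path by path.

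The first step is to split the drift difference by adding and subtracting the exact drift at \(\widehat X^\chi_s\):
\[
  F_\chi(s,X^\chi_s)-\widehat F_\chi(s,\widehat X^\chi_s)
  = \bigl[F_\chi(s,X^\chi_s)-F_\chi(s,\widehat X^\chi_s)\bigr] + \lambda_\chi\beta(\kappa_s)\bigl[g(\kappa_s,\widehat X^\chi_s)-\widehat g(\kappa_s,\widehat X^\chi_s)\bigr].
\]
The linear term \(-\rho_\chi\beta u\) cancels in the second bracket.

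Next, differentiate \(\tfrac12\|D_s\|^2\). The one-sided Lipschitz condition \eqref{eq:latent-osl} applies because both trajectories lie in \(\mathcal R_{\tau,t_1}\). Combined with Cauchy--Schwarz and the definition \eqref{eq:latent-pointwise-guidance-error}, this gives
\[
  \frac{\dd}{\dd s}\tfrac12\|D_s\|^2 \le \eta_\chi(s)\|D_s\|^2 + \delta_\chi(s)\|D_s\|.
\]
To pass to \(\|D_s\|\) itself while avoiding division by zero, work with \(\phi_\epsilon(s)=\sqrt{\|D_s\|^2+\epsilon^2}\). Then
\[
  \phi_\epsilon' \le \eta_\chi^+(s)\phi_\epsilon + \delta_\chi(s),
\]
with a small correction if \(\eta_\chi\) is negative; alternatively, use the standard lemma that \(u'\le \eta u+\delta\) for \(u=\|D\|\) holds a.e. where \(D\neq0\). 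The linear Grönwall inequality with integrating factor \(\Theta_{\eta_\chi}(s,0)^{-1}\) then yields \eqref{eq:latent-guidance-error-bound-reverse-time}, after letting \(\epsilon\downarrow0\). Under the common terminal coupling, \(D_0=0\) and the first term drops.

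The main obstacle is handling a possibly negative \(\eta_\chi\) rigorously in this regularisation. My first choice is to apply Grönwall directly to \(e^{-\int_0^s\eta_\chi}\|D_s\|\) on open intervals where \(D\neq0\), and to restart from the last zero of \(D\) otherwise. This works because \(\Theta\) is multiplicative and \(\delta_\chi\ge0\), so restarting only decreases the bound. Integrability of \(\eta_\chi\) and \(\delta_\chi\) ensures absolute continuity throughout.

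For the Wasserstein bound \eqref{eq:latent-unified-wasserstein-stability}, take an optimal \(W_p\) coupling of the initial laws. Take it independent of the driving Brownian motion, and drive both SDEs by the same \(W\). The pathwise bound holds for this coupling, and taking the \(L^p\) norm via Minkowski's inequality gives \(\|D_s\|_{L^p}\le \Theta(s,0)\|D_0\|_{L^p}+\int_0^s\Theta(s,r)\delta_\chi(r)\dd r\), since the second term is deterministic given the realised \(\widehat g\). Because \(W_p\) at time \(s\) is bounded by the \(L^p\) distance under any coupling, the result follows. Finite \(p\)-th moments of the initial laws make all quantities finite.
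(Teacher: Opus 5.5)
Your proposal is correct and follows essentially the same route as the paper. Both arguments cancel the noise under the synchronous coupling, use the same add-and-subtract split of the drift difference, derive the differential inequality $\mathrm{d}^+\|D_s\|/\mathrm{d}s\le\eta_\chi(s)\|D_s\|+\delta_\chi(s)$, apply the variable-coefficient Gr\"onwall inequality, and obtain the $W_p$ bound from an optimal initial coupling driven by a shared Brownian motion. The only difference is that you treat the non-differentiability of $\|D_s\|$ at zero explicitly, either via $\sqrt{\|D_s\|^2+\epsilon^2}$ with an $O(|\eta_\chi|\epsilon)$ correction or by restarting at the last zero, whereas the paper simply works with the upper right Dini derivative.
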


\begin{proof}
Let \(D_s=X^\chi_s-\widehat X^\chi_s\).
When \(\chi=\mathrm{ode}\), this is an ordinary deterministic difference equation.
When \(\chi=\mathrm{sde}\), the Brownian terms cancel under the synchronous coupling.
In either case, we have
\begin{equation}
  D_s
  =
  D_0
  +
  \int_0^s
  \{F_\chi(r,X^\chi_r)-\widehat F_\chi(r,\widehat X^\chi_r)\}\dd r.
\end{equation}
We can write the integrand as
\(
  F_\chi(r,X^\chi_r)-\widehat F_\chi(r,\widehat X^\chi_r)
  =
  F_\chi(r,X^\chi_r)-F_\chi(r,\widehat X^\chi_r)
  +
  F_\chi(r,\widehat X^\chi_r)-\widehat F_\chi(r,\widehat X^\chi_r).
\)
The first term is controlled by \eqref{eq:latent-osl}.
The second term satisfies
\begin{equation}
  \|F_\chi(r,u)-\widehat F_\chi(r,u)\|
  =
  \lambda_\chi\beta(\kappa_r)
  \|g(\kappa_r,u)-\widehat g(\kappa_r,u)\|
  \le
  \delta_\chi(r)
\end{equation}
on the region under consideration.
Hence the upper right-hand derivative satisfies
\begin{equation}
  \frac{\dd^+}{\dd s}\|D_s\|
  \le
  \eta_\chi(s)\|D_s\|
  +
  \delta_\chi(s).
\end{equation}
The variable-coefficient Gronwall inequality gives \eqref{eq:latent-guidance-error-bound-reverse-time}.
The Wasserstein estimate follows by coupling the initial variables optimally and, in the SDE case, using the same Brownian motion for the two SDEs.
\end{proof}

\begin{corollary}
\label{cor:latent-constant-rate-simplification}
Suppose the conditions of \Cref{prop:reverse-sde-synchronous-stability} hold. Suppose in addition that \(\eta_\chi(s)\le\bar\eta_{\chi,\tau,t_1}\) on
\([0,R_{\tau,t_1}]\) and
\(
  \bar\beta_{\tau,t_1}
  =
  \operatorname*{ess\,sup}_{t\in[\tau,t_1]}\beta(t)
  <\infty.
\)
Then, under common terminal coupling, it holds for all $s\in[0,R_{\tau,t_1}]$ that
\begin{equation}
  \|X^\chi_s-\widehat X^\chi_s\|
  \le
  \lambda_\chi\bar\beta_{\tau,t_1}\varepsilon_{\tau,t_1}
  \Psi_{\bar\eta_{\chi,\tau,t_1}}(s), \qquad \Psi_\eta(r)
  =
  \begin{cases}
    \frac{e^{\eta r}-1}{\eta}, & \eta\ne0,\\
    r, & \eta=0.
  \end{cases}
  \label{eq:latent-unified-constant-rate-bound}
\end{equation}
\end{corollary}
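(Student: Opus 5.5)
This corollary is a specialisation of \Cref{prop:reverse-sde-synchronous-stability}: I will feed the constant-rate hypotheses into the bound \eqref{eq:latent-guidance-error-bound-reverse-time} and evaluate the resulting integral in closed form.

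First, the common terminal coupling means $X^\chi_0=\widehat X^\chi_0$, so the initial-error term $\Theta_{\eta_\chi}(s,0)\|X^\chi_0-\widehat X^\chi_0\|$ vanishes. This leaves
\[
  \|X^\chi_s-\widehat X^\chi_s\|\le\int_0^s\Theta_{\eta_\chi}(s,r)\,\delta_\chi(r)\dd r .
\]

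Second, I bound the two factors in the integrand separately.

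\begin{enumerate}
\item Since $\eta_\chi(q)\le\bar\eta_{\chi,\tau,t_1}$ for a.e.\ $q$ and the exponential is monotone, $\Theta_{\eta_\chi}(s,r)\le e^{\bar\eta_{\chi,\tau,t_1}(s-r)}$ for $0\le r\le s$.
\item From the remark after \eqref{eq:latent-ode-error-functions}, $\delta_\chi(r)\le\lambda_\chi\beta(\kappa_r)\varepsilon_{\tau,t_1}$. Because $\kappa_r=t_1-r\in[\tau,t_1]$ for $r\in[0,R_{\tau,t_1}]$, we also have $\beta(\kappa_r)\le\bar\beta_{\tau,t_1}$ for a.e.\ $r$.
\end{enumerate}

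The essential supremum is harmless here because the bound only enters inside a Lebesgue integral. Both bounds are nonnegative, so their product bounds the integrand, and
\[
  \|X^\chi_s-\widehat X^\chi_s\|\le\lambda_\chi\bar\beta_{\tau,t_1}\varepsilon_{\tau,t_1}\int_0^s e^{\bar\eta(s-r)}\dd r .
\]

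Third, I evaluate the integral. For $\bar\eta\neq0$, the substitution $v=s-r$ gives $\int_0^s e^{\bar\eta v}\dd v=(e^{\bar\eta s}-1)/\bar\eta$. For $\bar\eta=0$ the integral is simply $s$. In both cases it equals $\Psi_{\bar\eta}(s)$, which is \eqref{eq:latent-unified-constant-rate-bound}.

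There is no serious obstacle. The only point needing care is that $\bar\eta$ may be negative, so one must not bound the exponential by $1$ or use $|\bar\eta|$. The monotone bound $e^{\int_r^s\eta_\chi}\le e^{\bar\eta(s-r)}$ holds for either sign, and $\Psi_{\bar\eta}$ remains nonnegative for negative $\bar\eta$, so the argument goes through unchanged.
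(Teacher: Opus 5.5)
Your proposal is correct and follows essentially the same route as the paper: drop the initial term under common terminal coupling, bound $\Theta_{\eta_\chi}(s,r)\le e^{\bar\eta_{\chi,\tau,t_1}(s-r)}$ and $\delta_\chi(r)\le\lambda_\chi\bar\beta_{\tau,t_1}\varepsilon_{\tau,t_1}$, then integrate to get $\Psi_{\bar\eta_{\chi,\tau,t_1}}(s)$. Your added care about the essential supremum (it only matters inside a Lebesgue integral) and about a possibly negative $\bar\eta$ fills in details the paper leaves implicit; it does not change the argument.
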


\begin{proof}
Under common terminal coupling, the first term in
\eqref{eq:latent-guidance-error-bound-reverse-time} vanishes.
Moreover, since \(\eta_\chi(s)\le\bar\eta_{\chi,\tau,t_1}\) and
\(\delta_\chi(s)\le\lambda_\chi\bar\beta_{\tau,t_1}\varepsilon_{\tau,t_1}\), we have
\(
  \Theta_{\eta_\chi}(s,r)
  \le
  \exp\{\bar\eta_{\chi,\tau,t_1}(s-r)\}.
\)
Therefore
\[
  \|X^\chi_s-\widehat X^\chi_s\|
  \le
  \lambda_\chi\bar\beta_{\tau,t_1}\varepsilon_{\tau,t_1}
  \int_0^s
  \exp\{\bar\eta_{\chi,\tau,t_1}(s-r)\}
  \dd r=\lambda_\chi\bar\beta_{\tau,t_1}\varepsilon_{\tau,t_1}
  \Psi_{\bar\eta_{\chi,\tau,t_1}}(s).
\]
\end{proof}

\subsection{Curvature regimes}
\label{subsec:latent-curvature-regimes}

The sign of the one-sided Lipschitz rate $\eta_{\chi}$ determines the nature of the bound in \Cref{prop:reverse-sde-synchronous-stability} and \Cref{cor:latent-constant-rate-simplification}.
This, in turn, is determined by the curvature of the smoothed latent log-likelihood. 
This is the subject of the following proposition.

\begin{proposition}
\label{prop:latent-curvature-regimes}
Suppose that, uniformly for \(t\in[\tau,t_1]\), $g$ is one-sided Lipschitz with constant $\ell_{\tau,t_1}$ on the truncated region:
\begin{equation}
  \langle x-y,g(t,x)-g(t,y)\rangle
  \le
  \ell_{\tau,t_1}\|x-y\|^2
  \label{eq:latent-general-monotonicity}
\end{equation}
Then \Cref{prop:reverse-sde-synchronous-stability} applies with
\begin{equation}
  \eta_\chi(s)
  =
  \beta(\kappa_s)
  \left(\lambda_\chi\ell_{\tau,t_1}-\rho_\chi\right).
  \label{eq:latent-curvature-rate}
\end{equation}
Suppose, more strongly, that uniformly for $t\in[\tau,t_1]$, $g$ is strongly dissipative with constant \(\mu_{\tau,t_1}\ge0\), on the truncated region:
\begin{equation}
  \langle x-y,g(t,x)-g(t,y)\rangle
  \le
  -\mu_{\tau,t_1}\|x-y\|^2.
  \label{eq:latent-strong-monotone}
\end{equation}
Then \Cref{prop:reverse-sde-synchronous-stability} applies with
\begin{equation}
  \eta_\chi(s)
  =
  -\left(\rho_\chi+\lambda_\chi\mu_{\tau,t_1}\right)\beta(\kappa_s).
  \label{eq:latent-contractive-rate}
\end{equation}
\end{proposition}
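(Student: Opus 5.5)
The plan is a direct verification of the one-sided Lipschitz hypothesis \eqref{eq:latent-osl} of \Cref{prop:reverse-sde-synchronous-stability}, using the explicit form of the exact reverse-time drift $F_\chi$ in \eqref{eq:latent-unified-reversed-drifts}. Fix $s\in[0,R_{\tau,t_1}]$ and points $x,y$ with $(\kappa_s,x),(\kappa_s,y)\in\mathcal R_{\tau,t_1}$. Since $F_\chi(s,u)=\lambda_\chi\beta(\kappa_s)g(\kappa_s,u)-\rho_\chi\beta(\kappa_s)u$, the drift splits into a guidance part and a linear part. Hence
\[
  \langle x-y,F_\chi(s,x)-F_\chi(s,y)\rangle
  =
  \lambda_\chi\beta(\kappa_s)\langle x-y,g(\kappa_s,x)-g(\kappa_s,y)\rangle
  -\rho_\chi\beta(\kappa_s)\|x-y\|^2 .
\]

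For the first claim, $\kappa_s\in[\tau,t_1]$, so the uniform condition \eqref{eq:latent-general-monotonicity} applies at time $t=\kappa_s$. Because $\lambda_\chi\beta(\kappa_s)\ge0$ (as $\lambda_\chi>0$ and $\beta>0$), multiplying the inequality by this factor preserves its direction. This gives the bound $\beta(\kappa_s)(\lambda_\chi\ell_{\tau,t_1}-\rho_\chi)\|x-y\|^2$, which is exactly \eqref{eq:latent-curvature-rate}. The second claim is identical, with $\ell_{\tau,t_1}$ replaced by $-\mu_{\tau,t_1}$ from \eqref{eq:latent-strong-monotone}, yielding $-(\rho_\chi+\lambda_\chi\mu_{\tau,t_1})\beta(\kappa_s)$.

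The only remaining points are technical. First, \Cref{prop:reverse-sde-synchronous-stability} requires $\eta_\chi\in L^1([0,R_{\tau,t_1}])$. This holds because $\beta$ is locally integrable on $[0,1)$, so $\int_\tau^{t_1}\beta<\infty$, and the constants in front of $\beta$ are finite. Second, I will state explicitly that the comparison region $\mathcal R_{\tau,t_1}$ is the same region on which the curvature hypotheses hold, so the trajectories stay in the set where the one-sided bound is valid. I do not expect a genuine obstacle. The one thing to watch is the nonnegativity of $\lambda_\chi\beta(\kappa_s)$, which is what allows the one-sided inequality on $g$ to be scaled in the correct direction.
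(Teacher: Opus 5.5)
Your proposal is correct and follows the paper's proof exactly: you expand $\langle x-y,F_\chi(s,x)-F_\chi(s,y)\rangle$ into the guidance term scaled by $\lambda_\chi\beta(\kappa_s)\ge 0$ minus $\rho_\chi\beta(\kappa_s)\|x-y\|^2$, then substitute the one-sided (resp.\ strongly dissipative) bound at $t=\kappa_s\in[\tau,t_1]$. The paper leaves implicit the nonnegativity of the scaling factor and the fact that $\eta_\chi\in L^1$ (from local integrability of $\beta$ on the compact interval $[\tau,t_1]$); you state both explicitly, and both are handled correctly.
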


\begin{proof}
Recalling the definition of the reverse-time drift $F_{\chi}$ from \eqref{eq:latent-unified-reversed-drifts}, it is straightforward to compute
\begin{align}
  \langle x-y,F_\chi(s,x)-F_\chi(s,y)\rangle
  &=
  \lambda_\chi\beta(\kappa_s)
  \langle x-y,g(\kappa_s,x)-g(\kappa_s,y)\rangle
  -
  \rho_\chi\beta(\kappa_s)\|x-y\|^2.
\end{align}
The rates given in \eqref{eq:latent-curvature-rate} and \eqref{eq:latent-contractive-rate} now follow immediately after substituting \eqref{eq:latent-general-monotonicity} and \eqref{eq:latent-strong-monotone}, respectively.
\end{proof}

\subsection{Guidance and discretisation error}
\label{subsec:latent-guidance-discretisation-error}

The previous stability result controls the error between the exact guided dynamics and the exact approximately guided dynamics.
It is also necessary to control the discretisation error introduced by a numerical solver.

\begin{corollary}
\label{cor:latent-guidance-discretisation-error}
Assume that the hypotheses of \Cref{prop:reverse-sde-synchronous-stability} hold for the exact and approximately guided ODE and SDE dynamics.
Let \(0=s_0<\cdots<s_N=R_{\tau,t_1}\) be a reverse-time grid with maximal step size \(h\).
Define
\begin{equation}
  A_{\eta_\chi}
  =
  \sup_{0\le s\le R_{\tau,t_1}}\Theta_{\eta_\chi}(s,0),
  \qquad
  H_{\eta_\chi,\delta_\chi}
  =
  \sup_{0\le s\le R_{\tau,t_1}}
  \int_0^s\Theta_{\eta_\chi}(s,r)\delta_\chi(r)\dd r.
  \label{eq:latent-A-H-stability-constants}
\end{equation}
Suppose that the numerical approximations \(\widehat u^{\leftarrow,h}\) and \(\widehat\xi^{\leftarrow,h}\) to the approximately guided ODE and SDE satisfy
\begin{equation}
  \max_{0\le n\le N}
  \|\widehat u^{\leftarrow}_{s_n}-\widehat u^{\leftarrow,h}_{s_n}\|
  \le
  C^{\mathrm{ode}}_{\mathrm{num},\tau,t_1}h^q,
  \qquad
  \big\|
    \max_{0\le n\le N}
    \|\widehat\xi^{\leftarrow}_{s_n}-\widehat\xi^{\leftarrow,h}_n\|
  \big\|_{L^p}
  \le
  C^{\mathrm{sde}}_{\mathrm{num},p}h^\varrho.
  \label{eq:latent-numerical-error-assumption}
\end{equation}
Then it holds that
\begin{align}
  \max_{0\le n\le N}
  \|u^{\leftarrow}_{s_n}-\widehat u^{\leftarrow,h}_{s_n}\|
  &\le
  A_{\eta_{\mathrm{ode}}}\|u^{\leftarrow}_0-\widehat u^{\leftarrow}_0\|
  +
  H_{\eta_{\mathrm{ode}},\delta_{\mathrm{ode}}}
  +
  C^{\mathrm{ode}}_{\mathrm{num},\tau,t_1}h^q,
  \label{eq:latent-ode-total-error-bound-reverse-time}
  \\
  \big\|
    \max_{0\le n\le N}
    \|\xi^{\leftarrow}_{s_n}-\widehat\xi^{\leftarrow,h}_n\|
  \big\|_{L^p}
  &\le
  A_{\eta_{\mathrm{sde}}}
  \|\xi^{\leftarrow}_0-\widehat\xi^{\leftarrow}_0\|_{L^p}
  +
  H_{\eta_{\mathrm{sde}},\delta_{\mathrm{sde}}}
  +
  C^{\mathrm{sde}}_{\mathrm{num},p}h^\varrho.
  \label{eq:latent-sde-total-error-bound}
\end{align}
In both displays, the initial-discrepancy term vanishes under common terminal coupling.
\end{corollary}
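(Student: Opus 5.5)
The plan is a triangle-inequality split at each grid point $s_n$, inserting the exact-time solution of the approximately guided dynamics between the exact dynamics and its numerical approximation. For the ODE we write
\[
  \|u^{\leftarrow}_{s_n}-\widehat u^{\leftarrow,h}_{s_n}\|
  \le
  \|u^{\leftarrow}_{s_n}-\widehat u^{\leftarrow}_{s_n}\|
  +
  \|\widehat u^{\leftarrow}_{s_n}-\widehat u^{\leftarrow,h}_{s_n}\|.
\]
The first term is the guidance-perturbation error and the second is the discretisation error. The second term is bounded directly by the first inequality in \eqref{eq:latent-numerical-error-assumption}, uniformly in $n$.

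For the first term we apply \Cref{prop:reverse-sde-synchronous-stability} with $\chi=\mathrm{ode}$, evaluated at $s=s_n$. This gives
\[
  \|u^{\leftarrow}_{s_n}-\widehat u^{\leftarrow}_{s_n}\|
  \le
  \Theta_{\eta_{\mathrm{ode}}}(s_n,0)\,\|u^{\leftarrow}_0-\widehat u^{\leftarrow}_0\|
  +
  \int_0^{s_n}\Theta_{\eta_{\mathrm{ode}}}(s_n,r)\,\delta_{\mathrm{ode}}(r)\dd r .
\]
Bounding the two terms by their suprema over $s\in[0,R_{\tau,t_1}]$ gives $A_{\eta_{\mathrm{ode}}}\|u^{\leftarrow}_0-\widehat u^{\leftarrow}_0\|+H_{\eta_{\mathrm{ode}},\delta_{\mathrm{ode}}}$, and these constants do not depend on $n$. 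Taking the maximum over $n$ of the sum then yields \eqref{eq:latent-ode-total-error-bound-reverse-time}. Everything here is deterministic.

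For the SDE we argue the same way, but at the level of random variables and under the synchronous coupling. The exact, approximately guided and discretised processes are all driven by the same Brownian motion. The pathwise bound \eqref{eq:latent-guidance-error-bound-reverse-time} holds almost surely, because its proof uses only the cancellation of the noise terms together with Gronwall. We therefore have, pointwise in $\omega$,
\[
  \max_n\|\xi^{\leftarrow}_{s_n}-\widehat\xi^{\leftarrow,h}_n\|
  \le
  A_{\eta_{\mathrm{sde}}}\|\xi^{\leftarrow}_0-\widehat\xi^{\leftarrow}_0\|
  +
  H_{\eta_{\mathrm{sde}},\delta_{\mathrm{sde}}}
  +
  \max_n\|\widehat\xi^{\leftarrow}_{s_n}-\widehat\xi^{\leftarrow,h}_n\|,
\]
where the maximum of a sum is bounded by the sum of the maxima. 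Taking $L^p$ norms and using Minkowski's inequality, the first term becomes $A_{\eta_{\mathrm{sde}}}\|\xi^{\leftarrow}_0-\widehat\xi^{\leftarrow}_0\|_{L^p}$ and the middle term is a constant. The last term is controlled by the second inequality in \eqref{eq:latent-numerical-error-assumption}, which gives \eqref{eq:latent-sde-total-error-bound}.

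Under common terminal coupling the initial discrepancy is zero, so that term vanishes.

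The main obstacle is to justify that $\delta_\chi$ and $\eta_\chi$ are deterministic, or at least $\omega$-independent, so that $H$ can be pulled out of the $L^p$ norm. This holds because $\widehat g$ is treated as a realised field, which the paper's conditioning convention permits. It also requires that the region hypothesis of the proposition holds along all paths. I would state this explicitly as being inherited from the assumed hypotheses.
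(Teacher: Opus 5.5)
Your proposal is correct and follows essentially the same route as the paper. Both arguments apply \Cref{prop:reverse-sde-synchronous-stability} at each grid point, bound by the suprema $A_{\eta_\chi}$ and $H_{\eta_\chi,\delta_\chi}$, and add the numerical error by the triangle inequality for the ODE. For the SDE, both use the almost-sure pathwise bound and then the $L^p$ norm with Minkowski's inequality. Your observation that $H_{\eta_{\mathrm{sde}},\delta_{\mathrm{sde}}}$ is non-random is exactly what the paper's convention of treating $\widehat g$ as a realised field supplies, since $\varepsilon(s)$ is a supremum over the fixed region $\mathcal R_{\tau,t_1}$.
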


\begin{proof}
Apply \Cref{prop:reverse-sde-synchronous-stability} at the grid points and take the maximum over \(n\).
For the ODE, add the deterministic numerical error by the triangle inequality.
For the SDE, first use the pathwise stability bound, then take the \(L^p\)-norm and apply Minkowski's inequality together with \eqref{eq:latent-numerical-error-assumption}.
\end{proof}

\begin{corollary}
\label{cor:latent-rate-consequence}
Assume that the hypotheses of \Cref{cor:latent-guidance-discretisation-error} hold under a common terminal coupling, and consider a joint limit in which \(S\to\infty\), \(h\to0\), and \(a_S\to0\).
Suppose the realised guidance error satisfies
\(
  \varepsilon_{\tau,t_1}=O_{\mathbb P}(a_S)
\)
on \(\mathcal R_{\tau,t_1}\).
For \(\chi\in\{\mathrm{ode},\mathrm{sde}\}\), define
\[
  B_{\chi,\tau,t_1}
  =
  \sup_{0\le s\le R_{\tau,t_1}}
  \int_0^s
  \Theta_{\eta_\chi}(s,r)
  \lambda_\chi\beta(\kappa_r)
  \dd r.
\]
Assume that the stability and numerical-error constants are uniformly
controlled along this limit, in the sense that
\(
  B_{\chi,\tau,t_1}=O_{\mathbb P}(1),
\)
\(
  C^{\mathrm{ode}}_{\mathrm{num},\tau,t_1}=O_{\mathbb P}(1),
\)
\(
  C^{\mathrm{sde}}_{\mathrm{num},p}=O_{\mathbb P}(1).
\)
This is automatic when these quantities are deterministic
and bounded independently of \(S\) and \(h\).
Then
\begin{equation}
  \max_{0\le n\le N}
  \|u^{\leftarrow}_{s_n}-\widehat u^{\leftarrow,h}_{s_n}\|
  =
  O_{\mathbb P}(a_S+h^q),
  \qquad
  \max_{0\le n\le N}
  \|\xi^{\leftarrow}_{s_n}-\widehat\xi^{\leftarrow,h}_n\|
  =
  O_{\mathbb P}(a_S+h^\varrho).
  \label{eq:latent-joint-guidance-numerical-rates}
\end{equation}
In particular, if a uniform Monte Carlo guidance bound gives
\(a_S=S^{-1/2}\), then the corresponding rates are
\(O_{\mathbb P}(S^{-1/2}+h^q)\) for the ODE and
\(O_{\mathbb P}(S^{-1/2}+h^\varrho)\) for the SDE.
\end{corollary}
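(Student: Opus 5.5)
The plan is to read off the rates directly from the total-error bounds in \Cref{cor:latent-guidance-discretisation-error}, specialised to a common terminal coupling. Under that coupling the initial-discrepancy terms \(A_{\eta_\chi}\|X^\chi_0-\widehat X^\chi_0\|\) vanish. We are then left with the guidance term \(H_{\eta_\chi,\delta_\chi}\) and the numerical term.

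The first step is to factor the guidance error out of \(H_{\eta_\chi,\delta_\chi}\). By \eqref{eq:latent-pointwise-guidance-error} we have \(\delta_\chi(r)\le\lambda_\chi\beta(\kappa_r)\varepsilon_{\tau,t_1}\) for every \(r\). Since \(\Theta_{\eta_\chi}(s,r)\ge 0\), this gives
\[
  H_{\eta_\chi,\delta_\chi}
  \le
  \varepsilon_{\tau,t_1}\,
  \sup_{0\le s\le R_{\tau,t_1}}\int_0^s\Theta_{\eta_\chi}(s,r)\lambda_\chi\beta(\kappa_r)\dd r
  =
  \varepsilon_{\tau,t_1}B_{\chi,\tau,t_1}.
\]
The product of an \(O_{\mathbb P}(a_S)\) quantity and an \(O_{\mathbb P}(1)\) quantity is \(O_{\mathbb P}(a_S)\), by the standard tightness argument. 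Hence \(H_{\eta_\chi,\delta_\chi}=O_{\mathbb P}(a_S)\).

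For the ODE, \eqref{eq:latent-ode-total-error-bound-reverse-time} is then bounded by \(\varepsilon_{\tau,t_1}B_{\mathrm{ode},\tau,t_1}+C^{\mathrm{ode}}_{\mathrm{num},\tau,t_1}h^q\). This is \(O_{\mathbb P}(a_S+h^q)\), since sums of \(O_{\mathbb P}\) terms combine.

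For the SDE, the cleanest route is to use the pathwise bound rather than the \(L^p\) one. The first step is to apply \Cref{prop:reverse-sde-synchronous-stability} pathwise at the grid points, which gives \(\max_n\|\xi^{\leftarrow}_{s_n}-\widehat\xi^{\leftarrow}_{s_n}\|\le \varepsilon_{\tau,t_1}B_{\mathrm{sde},\tau,t_1}\). The numerical part \(\max_n\|\widehat\xi^{\leftarrow}_{s_n}-\widehat\xi^{\leftarrow,h}_n\|\) has \(L^p\)-norm at most \(C^{\mathrm{sde}}_{\mathrm{num},p}h^\varrho\). By Markov's inequality it is therefore \(O_{\mathbb P}(h^\varrho)\), provided \(C^{\mathrm{sde}}_{\mathrm{num},p}=O_{\mathbb P}(1)\). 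Here I would read the \(L^p\) bound conditionally on the realised guidance field, as the paper stipulates. A triangle inequality then gives \(O_{\mathbb P}(a_S+h^\varrho)\).

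The main obstacle is this conditioning subtlety. The stability constants and the numerical constants may depend on the random realisation of \(\widehat g\). Converting a conditional \(L^p\) bound with a random constant into an unconditional \(O_{\mathbb P}\) statement requires a short argument. Conditionally, \(\mathbb P(\cdot>Mh^\varrho C\mid\widehat g)\le M^{-p}\). Integrating this over the realisation, and intersecting with the event \(\{C\le K\}\) of high probability, completes it. The final claim follows by substituting \(a_S=S^{-1/2}\).
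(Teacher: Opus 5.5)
Your proposal is correct and follows the paper's own proof essentially step for step. You drop the initial-discrepancy terms under common terminal coupling and bound $H_{\eta_\chi,\delta_\chi}\le B_{\chi,\tau,t_1}\varepsilon_{\tau,t_1}=O_{\mathbb P}(a_S)$. For the ODE you add the numerical term, and for the SDE you combine the pathwise stability bound with a Markov-inequality conversion of the conditional $L^p$ numerical error, closing with the triangle inequality. Your handling of the random constant $C^{\mathrm{sde}}_{\mathrm{num},p}$ (a conditional Markov bound intersected with $\{C\le K\}$) just spells out the step the paper compresses into ``due to Markov's inequality''.
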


\begin{proof}
Under a common terminal coupling, the initial-discrepancy terms in
\Cref{cor:latent-guidance-discretisation-error} vanish.
Moreover,
\[
  H_{\eta_\chi,\delta_\chi}
  \le
  B_{\chi,\tau,t_1}\varepsilon_{\tau,t_1}
  =
  O_{\mathbb P}(a_S).
\]
For the ODE, the numerical term satisfies
\(
  C^{\mathrm{ode}}_{\mathrm{num},\tau,t_1}h^q
  =O_{\mathbb P}(h^q)
\), so the first conclusion follows from
\eqref{eq:latent-ode-total-error-bound-reverse-time}.
For the SDE, the conditional \(L^p\) estimate in
\eqref{eq:latent-numerical-error-assumption}, together with
\(C^{\mathrm{sde}}_{\mathrm{num},p}=O_{\mathbb P}(1)\), implies that
\[
  \max_{0\le n\le N}
  \|\widehat\xi^{\leftarrow}_{s_n}
  -\widehat\xi^{\leftarrow,h}_n\|
  =
  O_{\mathbb P}(h^\varrho).
\]
due to Markov's inequality.
The second conclusion follows from the triangle inequality.
\end{proof}

\subsection{Process-space propagation}
\label{subsec:latent-process-propagation}

The latent bounds pass directly to process space whenever the generator is Lipschitz on the region visited by the coupled latent trajectories.

\begin{corollary}
\label{cor:latent-process-error}
Assume the hypotheses of \Cref{cor:latent-guidance-discretisation-error}.
Suppose that \(E_m\) is equipped with a norm and that \(T:\mathbb R^{m_\xi}\to E_m\) is \(L_T\)-Lipschitz on the latent region visited by the exact, approximately guided, and numerical trajectories.
Then the ODE and SDE bounds in \Cref{cor:latent-guidance-discretisation-error} remain valid after applying \(T\), with the right-hand sides multiplied by \(L_T\).
In particular,
\begin{align}
  \max_{0\le n\le N}
  \|T(u^{\leftarrow}_{s_n})-T(\widehat u^{\leftarrow,h}_{s_n})\|
  &\le
  L_T
  \left(
    A_{\eta_{\mathrm{ode}}}\|u^{\leftarrow}_0-\widehat u^{\leftarrow}_0\|
    +
    H_{\eta_{\mathrm{ode}},\delta_{\mathrm{ode}}}
    +
    C^{\mathrm{ode}}_{\mathrm{num},\tau,t_1}h^q
  \right),
  \label{eq:latent-process-ode-error}
  \\
  \left\|
    \max_{0\le n\le N}
    \|T(\xi^{\leftarrow}_{s_n})-T(\widehat\xi^{\leftarrow,h}_n)\|
  \right\|_{L^p}
  &\le
  L_T
  \left(
    A_{\eta_{\mathrm{sde}}}
    \|\xi^{\leftarrow}_0-\widehat\xi^{\leftarrow}_0\|_{L^p}
    +
    H_{\eta_{\mathrm{sde}},\delta_{\mathrm{sde}}}
    +
    C^{\mathrm{sde}}_{\mathrm{num},p}h^\varrho
  \right).
  \label{eq:latent-process-sde-error}
\end{align}
Moreover, any Wasserstein bound above is inherited under pushforward, since
\(
  W_p(T_\#\mu,T_\#\nu)
  \le
  L_T W_p(\mu,\nu)
\)
whenever the relevant latent coupling is supported on the Lipschitz region.
\end{corollary}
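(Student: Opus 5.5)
The plan is to compose the latent bounds of \Cref{cor:latent-guidance-discretisation-error} with the Lipschitz property of \(T\). The argument is pointwise in the coupled trajectories, so it is deterministic for the ODE and pathwise for the SDE. The key observation is that each of the three points \(u^{\leftarrow}_{s_n}\), \(\widehat u^{\leftarrow}_{s_n}\) and \(\widehat u^{\leftarrow,h}_{s_n}\) lies in the latent region on which \(T\) is \(L_T\)-Lipschitz; the same holds for the SDE analogues. Hence for each grid index \(n\) we have
\[
  \|T(u^{\leftarrow}_{s_n})-T(\widehat u^{\leftarrow,h}_{s_n})\|
  \le
  L_T\|u^{\leftarrow}_{s_n}-\widehat u^{\leftarrow,h}_{s_n}\|.
\]
Taking the maximum over \(0\le n\le N\) and applying \eqref{eq:latent-ode-total-error-bound-reverse-time} gives \eqref{eq:latent-process-ode-error} directly.

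For the SDE, the same pointwise inequality holds for every realisation, with \(\xi^{\leftarrow}_{s_n}\) and \(\widehat\xi^{\leftarrow,h}_n\) in place of the ODE points. Taking the maximum over \(n\) gives the bound for \(\max_n\|T(\xi^{\leftarrow}_{s_n})-T(\widehat\xi^{\leftarrow,h}_n)\|\) almost surely. Monotonicity and homogeneity of the \(L^p\) norm then pull out the factor \(L_T\), and \eqref{eq:latent-sde-total-error-bound} yields \eqref{eq:latent-process-sde-error}.

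For the Wasserstein statement, take any coupling \(\gamma\) of \(\mu\) and \(\nu\) supported on the Lipschitz region. Its image \((T\times T)_\#\gamma\) is a coupling of \(T_\#\mu\) and \(T_\#\nu\), and
\[
  \int\|T(x)-T(y)\|^p\,\dd\gamma(x,y)\le L_T^p\int\|x-y\|^p\,\dd\gamma(x,y).
\]
Taking \(p\)-th roots and then the infimum over such couplings gives \(W_p(T_\#\mu,T_\#\nu)\le L_TW_p(\mu,\nu)\). Combining this with \eqref{eq:latent-unified-wasserstein-stability} transfers the latent Wasserstein bounds to process space.

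The only delicate step is the support requirement. Lipschitz continuity is assumed only on the visited region, so the infimum has to be restricted to couplings supported there. I would handle this by reading the hypothesis as saying that the optimal, or synchronous, coupling used in \Cref{prop:reverse-sde-synchronous-stability} is supported on that region. Under that reading the argument goes through with no further work.
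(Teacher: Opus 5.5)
Your proposal is correct and follows the paper's proof. The pathwise and strong $L^p$ bounds come from applying $\|T(x)-T(y)\|\le L_T\|x-y\|$ to the coupled points before invoking \Cref{cor:latent-guidance-discretisation-error}, and the Wasserstein bound comes from pushing the same latent coupling forward through $T$, matching the paper's condition ``whenever the relevant latent coupling is supported on the Lipschitz region''. Also, if both latent laws put full mass on that region, every coupling is automatically supported on its square, so you need not restrict the infimum.
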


\begin{proof}
The pathwise and strong \(L^p\) statements follow from
\(\|T(x)-T(y)\|\le L_T\|x-y\|\).
The Wasserstein statement follows by pushing forward the same latent coupling through \(T\).
\end{proof}

\subsection{Additional path-law control for the reverse SDE}
\label{subsec:reverse-sde-stability}

The preceding bounds control coupled paths.
For the reverse-time SDE, one also obtains a path-law bound by Girsanov's theorem.

\begin{proposition}
\label{prop:reverse-sde-girsanov-stability}
Assume that the usual Girsanov conditions hold, and let
\(\mathsf P\) and \(\widehat{\mathsf P}\) denote the path laws of the exact and approximately guided reverse SDEs on
\(C([0,R_{\tau,t_1}];\mathbb R^{m_\xi})\).
Assume also that
\(
  \Law(\xi^{\leftarrow}_0)
  \ll
  \Law(\widehat\xi^{\leftarrow}_0).
\)
Then
\begin{align}
  \KL(\mathsf P\|\widehat{\mathsf P})
  &=
  \KL\!\left(
    \Law(\xi^{\leftarrow}_0)
    \,\middle\|\,
    \Law(\widehat\xi^{\leftarrow}_0)
  \right)
  +
  \frac12
  \E_{\mathsf P}
  \int_0^{R_{\tau,t_1}}
  \beta(\kappa_s)
  \|g(\kappa_s,\xi^{\leftarrow}_s)-\widehat g(\kappa_s,\xi^{\leftarrow}_s)\|^2
  \dd s.
  \label{eq:latent-sde-path-kl}
\end{align}
In particular, if the two SDEs start from the same initial law, then the first term vanishes.
If the realised guidance error is uniformly bounded by \(\varepsilon_{\tau,t_1}\) along the exact paths, then, for \(0\le s\le R_{\tau,t_1}\),
\begin{align}
  \KL(\mathsf P_{[0,s]}\|\widehat{\mathsf P}_{[0,s]})
  \le
  \KL\!\left(
    \Law(\xi^{\leftarrow}_0)
    \,\middle\|\,
    \Law(\widehat\xi^{\leftarrow}_0)
  \right)
  +
  \frac12\varepsilon_{\tau,t_1}^2
  \int_0^s\beta(\kappa_r)\dd r.
  \label{eq:latent-sde-path-kl-bound}
\end{align}
Consequently,
\begin{equation}
  \left\|
    \Law(\xi^{\leftarrow}_s)-\Law(\widehat\xi^{\leftarrow}_s)
  \right\|_{\mathrm{TV}}^2
  \le
    \frac12
    \KL\!\left(
      \Law(\xi^{\leftarrow}_0)
      \,\middle\|\,
      \Law(\widehat\xi^{\leftarrow}_0)
    \right)
    +
    \frac14\varepsilon_{\tau,t_1}^2
    \int_0^s\beta(\kappa_r)\dd r.
  \label{eq:latent-sde-tv-bound}
\end{equation}
\end{proposition}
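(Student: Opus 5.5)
The plan is to apply Girsanov's theorem to the two reverse-time SDEs in \eqref{eq:latent-unified-exact-approx-dynamics} with \(\chi=\mathrm{sde}\). They share the diffusion coefficient \(\Sigma_{\mathrm{sde}}(s)=\sqrt{\beta(\kappa_s)}I_{m_\xi}\), which is invertible, and differ only in their drifts. By \eqref{eq:latent-reverse-sde-drifts}, the drift difference is
\[
b(s,u)-\widehat b(s,u)=\beta(\kappa_s)\bigl(g(\kappa_s,u)-\widehat g(\kappa_s,u)\bigr),
\]
since the linear terms \(-\tfrac12\beta(\kappa_s)u\) cancel.

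\textbf{Step 1: disintegrate at the initial point.} Write \(\mathsf P(\dd\omega)=\Law(\xi^{\leftarrow}_0)(\dd x)\,\mathsf P^x(\dd\omega)\), and similarly for \(\widehat{\mathsf P}\). The chain rule for relative entropy gives
\[
\KL(\mathsf P\|\widehat{\mathsf P})=\KL\bigl(\Law(\xi^{\leftarrow}_0)\,\|\,\Law(\widehat\xi^{\leftarrow}_0)\bigr)+\int\KL(\mathsf P^x\|\widehat{\mathsf P}^x)\,\Law(\xi^{\leftarrow}_0)(\dd x).
\]
The absolute continuity \(\Law(\xi^{\leftarrow}_0)\ll\Law(\widehat\xi^{\leftarrow}_0)\) is what makes the first term meaningful.

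\textbf{Step 2: compute the conditional term.} For fixed \(x\), the ``usual Girsanov conditions'' (e.g.\ a Novikov-type condition, or finite energy plus a localisation argument) give
\[
\log\frac{\dd\mathsf P^x}{\dd\widehat{\mathsf P}^x}=\int_0^{R}\langle \Sigma^{-1}(b-\widehat b),\dd\overline w_s\rangle+\tfrac12\int_0^{R}\|\Sigma^{-1}(b-\widehat b)\|^2\dd s,
\]
where \(\overline w\) is the \(\mathsf P^x\)-Brownian motion and \(R=R_{\tau,t_1}\). Taking the \(\mathsf P^x\)-expectation kills the martingale term. Moreover,
\[
\|\Sigma^{-1}(b-\widehat b)\|^2=\beta(\kappa_s)^{-1}\beta(\kappa_s)^2\|g-\widehat g\|^2=\beta(\kappa_s)\|g-\widehat g\|^2,
\]
evaluated along \(\xi^{\leftarrow}_s\). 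Integrating over \(x\) then yields \eqref{eq:latent-sde-path-kl}.

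\textbf{Main obstacle.} The main obstacle is that the stochastic integral must be a true martingale with zero mean. I would handle this by localisation. Let \(\sigma_n\) be the first time the accumulated energy \(\int_0^s\beta\|g-\widehat g\|^2\) exceeds \(n\). Apply the identity to the stopped paths, use lower semicontinuity of \(\KL\) under the limit \(\sigma_n\uparrow R\), and use monotone convergence on the energy side. This gives equality whenever the energy is finite, and the bound trivially when it is infinite. This is the standard route of Léonard for the entropy of path measures.

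\textbf{Step 3: restricted horizon.} For the restriction to \([0,s]\), the same identity holds with the upper limit \(s\). This is consistent with the data-processing inequality, since restriction is a measurable map. Bounding \(\|g-\widehat g\|\le\varepsilon_{\tau,t_1}\) along the exact paths gives \eqref{eq:latent-sde-path-kl-bound}.

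\textbf{Step 4: time marginals.} The map \(\omega\mapsto\omega_s\) is measurable, so by data processing the one-time marginal satisfies \(\KL(\Law(\xi^{\leftarrow}_s)\|\Law(\widehat\xi^{\leftarrow}_s))\le\KL(\mathsf P_{[0,s]}\|\widehat{\mathsf P}_{[0,s]})\). Pinsker's inequality, \(\|\mu-\nu\|_{\mathrm{TV}}^2\le\tfrac12\KL(\mu\|\nu)\), then gives \eqref{eq:latent-sde-tv-bound}, with the factors \(\tfrac12\) and \(\tfrac14\) as stated.
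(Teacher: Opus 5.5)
Your proposal is correct and follows the same route as the paper. Both arguments apply Girsanov's theorem to two SDEs sharing the invertible diffusion coefficient \(\sqrt{\beta(\kappa_s)}I_{m_\xi}\), use \(\|\Sigma_{\mathrm{sde}}(s)^{-1}(b-\widehat b)\|^2=\beta(\kappa_s)\|g-\widehat g\|^2\) together with the uniform guidance-error bound, and finish with data processing and Pinsker; the paper folds your explicit initial-point chain rule for \(\KL\) and your localisation step into ``follows immediately from Girsanov's theorem'' under the ``usual Girsanov conditions.''
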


\begin{proof}
The two SDEs have a common diffusion coefficient. Moreover, we have from the definitions that
\[
  \|\Sigma_{\mathrm{sde}}(s)^{-1}(b(s,u)-\widehat b(s,u))\|^2
  =
  \beta(\kappa_s)
  \|g(\kappa_s,u)-\widehat g(\kappa_s,u)\|^2.
\]
The identity in \eqref{eq:latent-sde-path-kl} follows immediately from Girsanov's theorem.
The KL bound follows from the uniform guidance-error assumption.
The total-variation bound follows by data processing and Pinsker's inequality.
\end{proof}

\subsection{Derivative-free guidance}
\label{sec:derivative-free-guidance}

When \(\nabla_{\xi}\log\Lambda(\xi)\) is unavailable, one can estimate the guidance field using a form of Fisher's identity.
Under the tilted bridge conditional density, we have
\[
  p(\xi_0\mid \xi_t=u,\calC)
  \propto
  \Lambda(\xi_0)
  \N\!\left(
    \xi_0;\alpha(t)u,\sigma(t)^2I_{m_\xi}
  \right).
\]
Using Fisher's identity, it follows that
\begin{equation}
  \nabla_u\log\rho_t(u)
  =
  -\frac{1}{\sigma(t)^2}
  \left(
    u-\alpha(t)\E[\xi_0\mid \xi_t=u,\calC]
  \right).
  \label{eq:fisher-guided-score}
\end{equation}
Meanwhile, since \(\rho_t(u)\propto \psi(t,u)r(u)\), we have $ \nabla_u\log\rho_t(u) = g(t,u) - u$. 
Combining this with the previous expression, we arrive at
\begin{equation}
  g(t,u)
  =
  \frac{\alpha(t)}{\sigma(t)^2}
  \left(
    \E[\xi_0\mid \xi_t=u,\calC]
    -
    \alpha(t)u
  \right).
  \label{eq:mean-difference-guidance}
\end{equation}
This immediately suggests a gradient-free estimator. For a given pair $(t,u)$, draw bridge samples
\[
  \xi^{(s)}=\alpha(t)u+\sigma(t)\epsilon^{(s)},
  \qquad
  \epsilon^{(s)}\sim\N(0,I_{m_\xi}), \qquad s=1,\dots,S.
\]
We can then define the self-normalised approximation
\begin{equation}
  \widehat g_S^{\mathrm{df}}(t,u)
  =
  \frac{\alpha(t)}{\sigma(t)^2}
  \left(
    \sum_{s=1}^S \bar w_s\xi^{(s)}
    -
    \alpha(t)u
  \right), \qquad \bar w_s
  =
  \frac{\Lambda(\xi^{(s)})}
  {\sum_{\ell=1}^S\Lambda(\xi^{(\ell)})}.
  \label{eq:derivative-free-guidance}
\end{equation}
This estimator only requires likelihood evaluations, but can be less sample-efficient than the gradient-based estimator because it estimates a small conditional-mean difference. 
Moreover, the prefactor \(\alpha(t)/\sigma(t)^2\) can amplify Monte Carlo noise at small noising times.
\section{Additional details for guidance-aware parameter estimation}
\label[appendix]{sec:parameter-adapt-add-details}

In this appendix, we fix the discretisation level \(m\) and suppress this from the notation.
We thus write \(Z_{\vart}\) for \(Z_{\vart,m}\), and
\(
  \mathcal J(\vart)
  =
  \log Z_{\vart}
\).

\subsection{A static interpretation via the Gibbs--Donsker--Varadhan formula}
The evidence for the condition has a useful interpretation via the Gibbs--Donsker--Varadhan formula \citep{donsker1975asymptotic}.
\begin{proposition}
\label{prop:conditional-evidence-gibbs}
Assume that \(\Lambda_{\vart,\calC}\ge0\) and that
\(
  0<Z_{\vart}<\infty
\).
Then the log evidence admits the representation
\begin{equation}
  \log Z_{\vart}
  =
  \sup_{q\ll r}
  \left\{
    \E_{\xi_0\sim q}
    \left[
      \log\Lambda_{\vart,\calC}(\xi_0)
    \right]
    -
    \KL(q\|r)
  \right\}.
  \label{eq:gibbs-variational}
\end{equation}
The supremum is attained at
\(
  \rho_{\vart}(\xi_0\mid\calC)
\),
whenever the displayed quantities are finite.
Moreover, whenever \(q\ll\rho_{\vart}(\cdot\mid\calC)\),
\begin{equation}
  \E_{\xi_0\sim q}
  \left[
    \log\Lambda_{\vart,\calC}(\xi_0)
  \right]
  -
  \KL(q\|r)
  =
  \log Z_{\vart}
  -
  \KL\bigl(q\|\rho_{\vart}(\cdot\mid\calC)\bigr).
  \label{eq:gibbs-gap-identity}
\end{equation}
\end{proposition}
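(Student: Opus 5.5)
We will establish the gap identity \eqref{eq:gibbs-gap-identity} first and then read off the variational formula \eqref{eq:gibbs-variational} from it. Fix \(q\ll\rho_{\vart}(\cdot\mid\calC)\), and write \(\rho\) for \(\rho_{\vart}(\cdot\mid\calC)\) and \(\Lambda\) for \(\Lambda_{\vart,\calC}\). By \eqref{eq:latent-density-ratio} we have \(\dd\rho/\dd r=\Lambda/Z_{\vart}\) \(r\)-a.e., and \(\rho\ll r\), so \(q\ll r\). The chain rule for Radon--Nikodym derivatives then gives
\[
  \log\frac{\dd q}{\dd r}
  =
  \log\frac{\dd q}{\dd\rho}+\log\Lambda-\log Z_{\vart}
  \qquad q\text{-a.e.}
\]
Note that \(\Lambda>0\) holds \(\rho\)-a.e., hence also \(q\)-a.e. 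Integrating against \(q\) gives
\[
  \KL(q\|r)=\KL(q\|\rho)+\E_q[\log\Lambda]-\log Z_{\vart},
\]
and rearranging yields \eqref{eq:gibbs-gap-identity}.

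The main obstacle is justifying the split of the expectation, since the three integrands may fail to be integrable. We will work in the extended sense. Here \(\log(\dd q/\dd\rho)\) has a negative part that is \(q\)-integrable, because \(\int (x\log x)^-\,\dd\rho\le e^{-1}\). Under the standing proviso that the displayed quantities are finite, we assume \(\E_q[\log\Lambda]\) is finite, or at least that its positive part is. Then linearity holds with values in \((-\infty,\infty]\), and both sides of the identity equal \(-\infty\) simultaneously. If \(\E_q|\log\Lambda|<\infty\), everything is finite or \(+\infty\) consistently, and the identity holds in \([-\infty,\infty)\).

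For the variational formula, take \(q\ll r\). If \(q\not\ll\rho\), then \(q\) charges the set \(\{\Lambda=0\}\), so \(\E_q[\log\Lambda]=-\infty\). Such a \(q\) contributes \(-\infty\) to the supremum, with the convention that \(-\infty-\KL=-\infty\). Otherwise \(q\ll\rho\), and \eqref{eq:gibbs-gap-identity} shows the objective equals \(\log Z_{\vart}-\KL(q\|\rho)\le\log Z_{\vart}\), by nonnegativity of KL. Choosing \(q=\rho\) gives \(\KL(\rho\|\rho)=0\), so the bound is attained whenever \(\E_\rho[\log\Lambda]\) and \(\KL(\rho\|r)\) are finite. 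As a consistency check, \(\KL(\rho\|r)=\E_\rho[\log\Lambda]-\log Z_{\vart}\). Hence the supremum equals \(\log Z_{\vart}\) and is attained at \(\rho_{\vart}(\cdot\mid\calC)\), which completes the argument.
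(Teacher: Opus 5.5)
Your proposal is correct and follows the paper's proof: the gap identity comes from $\dd\rho_{\vart}(\cdot\mid\calC)/\dd r=\Lambda_{\vart,\calC}/Z_{\vart}$, and the variational formula from nonnegativity of $\KL(q\|\rho_{\vart}(\cdot\mid\calC))$, with equality at $q=\rho_{\vart}(\cdot\mid\calC)$. You are more careful than the paper, which skips both the case $q\ll r$ with $q\not\ll\rho_{\vart}(\cdot\mid\calC)$ and extended-value integrability; note that, as in the paper, you only get equality of the supremum through attainment, i.e.\ when $\KL(\rho_{\vart}(\cdot\mid\calC)\|r)<\infty$, and the remaining case would need a truncation argument (conditioning $\rho_{\vart}(\cdot\mid\calC)$ on $\{\Lambda_{\vart,\calC}\le n\}$ and letting $n\to\infty$).
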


\begin{proof}
The proof is standard.
By definition,
\(
  \frac{\rho_{\vart}(\xi_0\mid\calC)}{r(\xi_0)}
  =
  \frac{\Lambda_{\vart,\calC}(\xi_0)}{Z_{\vart}}.
\)
It follows that, whenever \(q\ll\rho_{\vart}(\cdot\mid\calC)\),
\begin{align}
  \KL\bigl(q\|\rho_{\vart}(\cdot\mid\calC)\bigr)
  &=
  \int
  \log
  \left(
    \frac{q(\xi_0)}
    {\rho_{\vart}(\xi_0\mid\calC)}
  \right)
  q(\xi_0)\dd\xi_0
  \nonumber\\
  &=
  \KL(q\|r)
  -
  \E_{\xi_0\sim q}
  \left[
    \log\Lambda_{\vart,\calC}(\xi_0)
  \right]
  +
  \log Z_{\vart}.
\end{align}
Rearranging this identity gives \eqref{eq:gibbs-gap-identity}.
The variational formula follows from nonnegativity of KL divergence, with equality at \(q=\rho_{\vart}(\cdot\mid\calC)\).
\end{proof}

By evaluating the identity in \Cref{prop:conditional-evidence-gibbs} at \(q=\rho_{\vart}(\cdot\mid\calC)\), we obtain
\begin{align}
  \log Z_{\vart}
  =
  \E_{\xi_0\sim\rho_{\vart}(\cdot\mid\calC)}
  \left[
    \log\Lambda_{\vart,\calC}(\xi_0)
  \right]
  -
  \KL\bigl(
    \rho_{\vart}(\cdot\mid\calC)
    \|r
  \bigr)
  \label{eq:static-free-energy-latent}
\end{align}
Combining this with the identity in \Cref{prop:kl-identity}, we also have that
\begin{align}
  \log Z_{\vart}
  =
  \E_{f_0\sim\pi_{\vart}(\cdot\mid\calC)}
  \left[
    \log L_{\calC}(f_0)
  \right]
  -
  \KL\bigl(
    \pi_{\vart}(\cdot\mid\calC)
    \|p_{\vart}
  \bigr).
  \label{eq:static-free-energy-process}
\end{align}
This shows that the evidence represents a balance between satisfaction of \(\calC\) and the relative-entropy cost of deforming \(p_{\vart}\) into \(\pi_{\vart}(\cdot\mid\calC)\).

\begin{corollary}
\label{cor:hard-event-evidence}
Suppose that the additional condition \(\calC\) is the event that the discretised process lies in a feasible set \(A\subseteq E_m\), so that
\(
  L_{\calC}(f_0)
  =
  \mathbf 1_A(f_0)
\).
If \(p_{\vart}(A)>0\), then
\(
  Z_{\vart}
  =
  p_{\vart}(A)
\)
and
\(
  \pi_{\vart}(\cdot\mid\calC)
  =
  p_{\vart}(\cdot\mid A).
\)
Thus, in particular, 
\begin{equation}
  -\log Z_{\vart}
  =
  \KL\bigl(p_{\vart}(\cdot\mid A)\|p_{\vart}\bigr).
  \label{eq:hard-event-evidence-kl}
\end{equation}
Thus, maximising the evidence is equivalent to minimising the KL deformation from the prior process law to the constrained process law.
\end{corollary}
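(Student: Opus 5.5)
The plan is a direct computation. The first step identifies the normaliser: with \(L_{\calC}=\mathbf 1_A\), we get \(Z_{\vart}=\int_{E_m}\mathbf 1_A(f_0)\,p_{\vart}(f_0)\dd f_0=p_{\vart}(A)\). Because \(p_{\vart}(A)>0\), this is strictly positive, and it is at most one, so \(0<Z_{\vart}<\infty\) holds. The same definition gives
\[
\pi_{\vart}(f_0\mid\calC)=\mathbf 1_A(f_0)\,p_{\vart}(f_0)/p_{\vart}(A),
\]
which is exactly the elementary conditional density \(p_{\vart}(\cdot\mid A)\). In the measure-theoretic setting of \Cref{app:measure-theoretic-view} the argument is the same, with \(P_{\vart}(A)\) in place of \(p_{\vart}(A)\).

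For the KL identity, I would use the static free-energy representation \eqref{eq:static-free-energy-process}, which comes from \Cref{prop:conditional-evidence-gibbs} together with \Cref{prop:kl-identity}:
\[
\log Z_{\vart}=\E_{\pi_{\vart}(\cdot\mid\calC)}[\log L_{\calC}(f_0)]-\KL(\pi_{\vart}(\cdot\mid\calC)\|p_{\vart}).
\]
Under \(\pi_{\vart}(\cdot\mid\calC)=p_{\vart}(\cdot\mid A)\), the set \(A\) has probability one, so \(\log\mathbf 1_A=0\) almost surely and the expectation vanishes. Rearranging gives \eqref{eq:hard-event-evidence-kl}.

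The main point of care is the extended-value convention: \(\log\mathbf 1_A=-\infty\) off \(A\). That set is null under the conditional law, so the expectation is a well-defined zero, and the finiteness requirement in \Cref{prop:conditional-evidence-gibbs} is met.

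To avoid relying on those propositions at all, I would also check the identity directly. The density ratio is \(\dd p_{\vart}(\cdot\mid A)/\dd p_{\vart}=\mathbf 1_A/p_{\vart}(A)\), which equals the constant \(1/p_{\vart}(A)\) on the support. Hence the KL divergence is \(\log(1/p_{\vart}(A))=-\log Z_{\vart}\).

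The closing interpretive sentence then follows because \(-\log\) is decreasing: maximising \(\log Z_{\vart}\) over \(\vart\) is the same as minimising \(\KL(p_{\vart}(\cdot\mid A)\|p_{\vart})\).
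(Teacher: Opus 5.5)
Your proposal is correct, and your ``direct check'' is exactly the paper's proof. The paper computes \(Z_{\vart}=\int_A p_{\vart}(f_0)\dd f_0=p_{\vart}(A)\) and identifies the conditioned law as \(p_{\vart}(\cdot\mid A)\). It then writes the Radon--Nikodym derivative \(\mathbf 1_A/p_{\vart}(A)\) and reads off \(\KL(p_{\vart}(\cdot\mid A)\|p_{\vart})=-\log p_{\vart}(A)\).

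Your primary route through the free-energy identity \eqref{eq:static-free-energy-process} is a genuinely different and valid argument. It shows the corollary as the special case of ``condition satisfaction minus deformation cost'' in which the first term vanishes, because \(\log\mathbf 1_A=0\) almost surely under the conditioned law. This matches the paper's later remark that, for hard constraints, the log-likelihood contribution drops out.

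The price is that this route passes through \Cref{prop:conditional-evidence-gibbs} and \Cref{prop:kl-identity}, and hence through \Cref{ass:latent-rep} and \Cref{prop:latent-process-agree}. The process-space statement needs none of these. It also requires the extended-value bookkeeping for \(\log\mathbf 1_A\). Moreover, checking that the quantities in \Cref{prop:conditional-evidence-gibbs} are finite, in particular the KL term, essentially amounts to the direct computation anyway.

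So the Radon--Nikodym calculation is the more economical and self-contained proof, and it is the one to lead with.
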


\begin{proof}
Since \(L_{\calC}=\mathbf 1_A\),
\(
  Z_{\vart}
  =
  \int_A p_{\vart}(f_0)\dd f_0
  =
  p_{\vart}(A)
\),
and the conditioned law is \(p_{\vart}(\cdot\mid A)\).
Moreover,
\[
  \frac{\dd p_{\vart}(\cdot\mid A)}{\dd p_{\vart}}(f_0)
  =
  \frac{\mathbf 1_A(f_0)}{p_{\vart}(A)}.
\]
It follows that
\[
  \KL\bigl(p_{\vart}(\cdot\mid A)\|p_{\vart}\bigr)
  =
  \E_{p_{\vart}(\cdot\mid A)}
  \left[
    \log\frac{1}{p_{\vart}(A)}
  \right]
  =
  -\log p_{\vart}(A)
  =
  -\log Z_{\vart}.
\]
\end{proof}

\begin{corollary}
\label{cor:soft-residual-evidence}
Suppose that the additional condition is encoded by a likelihood of the form
\(
  L_{\calC}(f_0)
  =
  a\exp\{-\Phi_{\calC}(f_0)\},
\)
where \(\Phi_{\calC}(f_0)\ge0\) and \(a>0\) is independent of \(\vart\).
Then, up to additive constants independent of \(\vart\),
\begin{equation}
  -\log Z_{\vart}
  \equiv
  \KL\bigl(\pi_{\vart}(\cdot\mid\calC)\|p_{\vart}\bigr)
  +
  \E_{f_0\sim\pi_{\vart}(\cdot\mid\calC)}
  \left[
    \Phi_{\calC}(f_0)
  \right].
  \label{eq:soft-residual-evidence-kl}
\end{equation}
Thus evidence maximisation balances a small deformation from the ordinary process law against good satisfaction of the soft condition.
\end{corollary}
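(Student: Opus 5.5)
The plan is to specialise the process-space free-energy identity \eqref{eq:static-free-energy-process}, obtained from \Cref{prop:conditional-evidence-gibbs} combined with \Cref{prop:kl-identity}, to the likelihood $L_{\calC}(f_0)=a\exp\{-\Phi_{\calC}(f_0)\}$.

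\textbf{Step 1 (check the hypotheses).} Since $a>0$ and $\Phi_{\calC}\ge0$, the likelihood satisfies $0<L_{\calC}\le a$. Hence $0<Z_{\vart}\le a<\infty$, so the standing assumption $0<Z_{\vart}<\infty$ holds automatically. Moreover $\log L_{\calC}=\log a-\Phi_{\calC}$ is finite everywhere.

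\textbf{Step 2 (apply the identity).} Substituting $\log L_{\calC}(f_0)=\log a-\Phi_{\calC}(f_0)$ into \eqref{eq:static-free-energy-process} gives
\[
  \log Z_{\vart}
  =
  \log a
  -
  \E_{f_0\sim\pi_{\vart}(\cdot\mid\calC)}\bigl[\Phi_{\calC}(f_0)\bigr]
  -
  \KL\bigl(\pi_{\vart}(\cdot\mid\calC)\|p_{\vart}\bigr).
\]
Negating, and noting that $\log a$ does not depend on $\vart$, yields \eqref{eq:soft-residual-evidence-kl} up to this additive constant.

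\textbf{Step 3 (justify the extended-value arithmetic).} Splitting the expectation in Step 2 requires that no $\infty-\infty$ term appears. Because $\Phi_{\calC}\ge0$, the expectation of $\Phi_{\calC}$ is well defined in $[0,\infty]$. The KL term is also nonnegative.

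To show both are finite, use \eqref{eq:process-density-ratio}: the density ratio is $\pi/p=L_{\calC}/Z_{\vart}$. Therefore
\[
  \KL\bigl(\pi_{\vart}(\cdot\mid\calC)\|p_{\vart}\bigr)
  =
  \log a-\log Z_{\vart}-\E_{\pi}[\Phi_{\calC}]
  \le
  \log a-\log Z_{\vart}<\infty .
\]
It follows that both terms are finite and the rearrangement is legitimate. This is the only point requiring care; the rest is direct substitution.
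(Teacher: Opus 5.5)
Your proposal is correct and matches the paper's proof: the paper likewise substitutes $\log L_{\calC}=\log a-\Phi_{\calC}$ into \eqref{eq:static-free-energy-process} and discards the $\vart$-independent constant $-\log a$. Your Steps 1 and 3 are a sound addition that the paper leaves implicit: they verify $0<Z_{\vart}\le a$ and show that both the KL term and $\E_{\pi}[\Phi_{\calC}]$ are finite, so no $\infty-\infty$ arises.
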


\begin{proof}
By \eqref{eq:static-free-energy-process}, we have that
\[
  \log Z_{\vart}
  =
  \E_{f_0\sim\pi_{\vart}(\cdot\mid\calC)}
  \left[
    \log L_{\calC}(f_0)
  \right]
  -
  \KL\bigl(
    \pi_{\vart}(\cdot\mid\calC)
    \|p_{\vart}
  \bigr).
\]
Since
\(
  \log L_{\calC}(f_0)
  =
  \log a-\Phi_{\calC}(f_0),
\)
we obtain
\[
  -\log Z_{\vart}
  =
  -\log a
  +
  \E_{f_0\sim\pi_{\vart}(\cdot\mid\calC)}
  \left[
    \Phi_{\calC}(f_0)
  \right]
  +
  \KL\bigl(
    \pi_{\vart}(\cdot\mid\calC)
    \|p_{\vart}
  \bigr).
\]
The term \(-\log a\) is independent of \(\vart\), giving \eqref{eq:soft-residual-evidence-kl}.
\end{proof}

\subsection{A dynamic interpretation via Doob transform and Girsanov's theorem}
\label{subsec:dynamic-interpretation}

The static free-energy identity shows that maximising \(Z_{\vart}\) balances condition satisfaction against the KL cost of deforming the ordinary process law.
We can also obtain a dynamic interpretation of this deformation cost, using Doob transforms and Girsanov's theorem \citep{doob1957conditional,girsanov1960transforming,oksendal2003stochastic}.
The result below shows that, under a suitable finite-horizon reference diffusion, the same deformation cost is represented by the energy of the guidance drift.

For each \(\vart\), recall that the latent likelihood is \(\Lambda_{\vart,\calC}\), the normalising constant is \(Z_{\vart}\), the reference density is \(r(\xi)=\N(\xi;0,I_{m_\xi})\), and the latent target is
\begin{equation}
  \rho_{\vart}(\xi\mid\calC)
  =
  \frac{\Lambda_{\vart,\calC}(\xi)r(\xi)}{Z_{\vart}}.
  \label{eq:dynamic-latent-target-recall}
\end{equation}
We now give a path-space version of the same construction.
Instead of tilting the static latent law \(r\) by \(\Lambda_{\vart,\calC}(\xi)\), we consider a stationary reference diffusion in latent space and tilt its path law by the likelihood of its terminal endpoint.
This lets us interpret the KL deformation in terms of the energy of the Doob-transform guidance drift.

We work on a finite generation horizon, rescaled to \([0,1]\).
Let \(\bar\beta\in L^1([0,1])\) be nonnegative, and let \(\mathsf P_{\vart}\) be the path law of the stationary reference latent Ornstein--Uhlenbeck process
\begin{equation}
  \dd U_s
  =
  -\frac12\bar\beta(s)U_s\dd s
  +
  \sqrt{\bar\beta(s)}\dd B_s,
  \qquad
  U_0\sim r,
  \qquad
  0\le s\le1.
  \label{eq:reference-generation-sde}
\end{equation}
Under \(\mathsf P_{\vart}\), the one-time marginal of \(U_s\) is \(r\) for every \(s\in[0,1]\).
Thus \(\mathsf P_{\vart}\) is an unconditioned stationary reference dynamics in latent space.
Define the tilted path law \(\mathsf Q_{\vart}\) by
\begin{equation}
  \frac{\dd \mathsf Q_{\vart}}{\dd \mathsf P_{\vart}}
  =
  \frac{\Lambda_{\vart,\calC}(U_1)}{Z_{\vart}}.
  \label{eq:path-tilt}
\end{equation}
Under \(\mathsf Q_{\vart}\), paths are reweighted according to how well their terminal state satisfies the condition.
The terminal marginal under \(\mathsf P_{\vart}\) is \(r\), while the terminal marginal under \(\mathsf Q_{\vart}\) is precisely \(\rho_{\vart}(\cdot\mid\calC)\).
The corresponding Doob potential is
\begin{equation}
  h_{\vart}(s,u)
  =
  \E_{\mathsf P_{\vart}}
  \left[
    \Lambda_{\vart,\calC}(U_1)
    \mid
    U_s=u
  \right].
  \label{eq:h-generation}
\end{equation}
This is the expected future condition likelihood from the current generation-time state \(u\).
Equivalently, \(h_{\vart}\) is the same smoothed likelihood object as the noising-time potential \(\psi_{\vart}\), written in generation time rather than noising time.

\begin{assumption}
\label{ass:girsanov}
For the fixed parameter value \(\vart\), assume that
\(0<Z_{\vart}<\infty\), that
\(\Lambda_{\vart,\calC}>0\) \(r\)-almost everywhere, and that
\(
  \KL\bigl(\rho_{\vart}(\cdot\mid\calC)\|r\bigr)<\infty.
\)
Assume that \(h_{\vart}\) in \eqref{eq:h-generation} has a positive,
sufficiently regular version for which \(\nabla_u\log h_{\vart}\) is
well defined, It\^o's formula applies to \(h_{\vart}(s,U_s)\), and the
backward Kolmogorov drift term vanishes.
Assume that the density process
\[
  M_{\vart,s}
  \coloneqq
  \frac{h_{\vart}(s,U_s)}{Z_{\vart}},
  \qquad 0\le s\le1,
\]
is a true \(\mathsf P_{\vart}\)-martingale.
Further, suppose that this process admits the stochastic exponential representation as $M_{\vart,s}=M_{\vart,0}\mathcal E_{\vart,s}$, where \(M_{\vart,0}=\frac{h_{\vart}(0,U_0)}{Z_{\vart}}\) and
\[
  \mathcal E_{\vart,s}
  =
  \exp\left\{
    \int_0^s
    \sqrt{\bar\beta(r)}
    \nabla^\top_u\log h_{\vart}(r,U_r)
    \dd B_r
    -
    \frac12
    \int_0^s
    \bar\beta(r)
    \|\nabla_u\log h_{\vart}(r,U_r)\|^2
    \dd r
  \right\}.
\]
\end{assumption}

\begin{theorem}
\label{thm:girsanov}
Under Assumption~\ref{ass:girsanov}, the tilted law \(\mathsf Q_{\vart}\) is the Doob transform of \(\mathsf P_{\vart}\) associated with \(h_{\vart}\).
In particular, under \(\mathsf Q_{\vart}\), the coordinate process \(U_s\) solves
\begin{equation}
  \dd U_s
  =
  \left(
    -\frac12\bar\beta(s)U_s
    +
    \bar\beta(s)\nabla_u\log h_{\vart}(s,U_s)
  \right)\dd s
  +
  \sqrt{\bar\beta(s)}\dd B_s^{\mathsf Q_{\vart}},
  \label{eq:guided-sde-under-Q}
\end{equation}
where \(B^{\mathsf Q_{\vart}}\) is Brownian motion under \(\mathsf Q_{\vart}\), with initial law \(U_0\sim(\mathsf Q_{\vart})_0\).
Moreover,
\begin{equation}
  \frac12
  \E_{\mathsf Q_{\vart}}
  \left[
    \int_0^1
    \bar\beta(s)
    \|\nabla_u\log h_{\vart}(s,U_s)\|^2
    \dd s
  \right]
  =
  \KL\bigl(\rho_{\vart}(\cdot\mid\calC)\|r\bigr)
  -
  \KL((\mathsf Q_{\vart})_0\|(\mathsf P_{\vart})_0).
  \label{eq:energy-with-boundary}
\end{equation}
\end{theorem}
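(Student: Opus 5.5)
The plan has three parts: identify $\mathsf Q_{\vart}$ with the Doob transform, read off the guided drift by Girsanov, and obtain the energy identity from the chain rule for relative entropy on path space.

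\textbf{Step 1: the density process.} By the Markov property of the reference OU process and the tower property,
\[
\E_{\mathsf P_{\vart}}\Bigl[\tfrac{\dd\mathsf Q_{\vart}}{\dd\mathsf P_{\vart}}\,\Big|\,\mathcal F_s\Bigr]=\frac{h_{\vart}(s,U_s)}{Z_{\vart}}=M_{\vart,s}.
\]
So the restriction of $\mathsf Q_{\vart}$ to $\mathcal F_s$ has density $M_{\vart,s}$, and at $s=1$ this recovers \eqref{eq:path-tilt} because $h_{\vart}(1,u)=\Lambda_{\vart,\calC}(u)$.

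Assumption~\ref{ass:girsanov} gives the factorisation $M_{\vart,s}=M_{\vart,0}\,\mathcal E_{\vart,s}$. The factor $M_{\vart,0}$ is $\mathcal F_0$-measurable, so it only changes the initial law, to $(\mathsf Q_{\vart})_0=M_{\vart,0}\,r$. The stochastic exponential $\mathcal E_{\vart,s}$ is a true martingale, so Girsanov's theorem applies. It shows that
\[
B^{\mathsf Q_{\vart}}_s=B_s-\int_0^s\sqrt{\bar\beta(r)}\,\nabla_u\log h_{\vart}(r,U_r)\dd r
\]
is a $\mathsf Q_{\vart}$-Brownian motion. Substituting this into \eqref{eq:reference-generation-sde} gives \eqref{eq:guided-sde-under-Q}. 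This is exactly the statement that $\mathsf Q_{\vart}$ is the $h_{\vart}$-transform of $\mathsf P_{\vart}$.

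\textbf{Step 2: the energy identity.} Compute $\KL(\mathsf Q_{\vart}\|\mathsf P_{\vart})$ in two ways.

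First, directly from \eqref{eq:path-tilt}, $\KL(\mathsf Q_{\vart}\|\mathsf P_{\vart})=\E_{\mathsf Q_{\vart}}[\log(\Lambda_{\vart,\calC}(U_1)/Z_{\vart})]$. This depends only on $U_1$, whose $\mathsf Q_{\vart}$-law is $\rho_{\vart}(\cdot\mid\calC)$. Using \eqref{eq:latent-density-ratio}, it equals $\KL(\rho_{\vart}(\cdot\mid\calC)\|r)$, which is finite by assumption.

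Second, take the log of the factorised density:
\[
\log\tfrac{\dd\mathsf Q_{\vart}}{\dd\mathsf P_{\vart}}=\log M_{\vart,0}+\int_0^1\sqrt{\bar\beta}\,\nabla^\top\log h_{\vart}\dd B-\tfrac12\int_0^1\bar\beta\|\nabla\log h_{\vart}\|^2\dd r.
\]
Rewrite $\dd B$ as $\dd B^{\mathsf Q_{\vart}}+\sqrt{\bar\beta}\,\nabla\log h_{\vart}\dd r$. The stochastic integral then becomes a $\mathsf Q_{\vart}$-martingale term plus $\int_0^1\bar\beta\|\nabla\log h_{\vart}\|^2\dd r$. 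Taking $\mathsf Q_{\vart}$-expectations, the martingale term has mean zero, and $\E_{\mathsf Q_{\vart}}\log M_{\vart,0}=\KL((\mathsf Q_{\vart})_0\|(\mathsf P_{\vart})_0)$. This leaves
\[
\KL(\mathsf Q_{\vart}\|\mathsf P_{\vart})=\KL((\mathsf Q_{\vart})_0\|(\mathsf P_{\vart})_0)+\tfrac12\,\E_{\mathsf Q_{\vart}}\int_0^1\bar\beta\|\nabla\log h_{\vart}\|^2\dd r.
\]
Equating the two expressions gives \eqref{eq:energy-with-boundary}.

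\textbf{Main obstacle.} The delicate point is justifying that the $\mathsf Q_{\vart}$-stochastic integral has zero mean, and that the energy is finite. I would localise with stopping times $\tau_n$ at which the energy integral $\int_0^{\tau_n}\bar\beta\|\nabla\log h_{\vart}\|^2\dd r$ reaches $n$. For each $n$ the identity holds exactly with $1$ replaced by $\tau_n$.

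On the left, the path KL restricted to $\mathcal F_{\tau_n}$ is monotone in $n$ and bounded by the finite total KL. On the right, the energy term increases to its limit by monotone convergence. Finiteness of $\KL(\rho_{\vart}(\cdot\mid\calC)\|r)$ then bounds the energy, so passing to the limit $n\to\infty$ is legitimate.
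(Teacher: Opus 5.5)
Your proposal follows the paper's proof essentially step for step: you obtain the density process $h_{\vart}(s,U_s)/Z_{\vart}$ via the Markov property and apply Girsanov to read off the guided drift. You then compute $\KL(\mathsf Q_{\vart}\|\mathsf P_{\vart})$ twice, once through the terminal marginal and once through the factorised log-density, and your localisation is a real improvement, since the paper simply asserts that the $\mathsf Q_{\vart}$-stochastic integral has mean zero. One small step is missing from your limit argument: monotonicity and boundedness of the localised KLs only give ``$\le$''. For equality, note that the energy integral is a.s.\ finite (it must be, for $\mathcal E_{\vart,1}$ to be defined), so $\tau_n=1$ for all large $n$ and $M_{\vart,\tau_n}\to M_{\vart,1}$ a.s.; Fatou's lemma applied under $\mathsf P_{\vart}$ to $M_{\vart,\tau_n}\log M_{\vart,\tau_n}\ge -1/e$ then shows the localised KLs converge to $\KL(\mathsf Q_{\vart}\|\mathsf P_{\vart})$.
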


\begin{proof}
By the Markov property and Assumption~\ref{ass:girsanov}, \(h_{\vart}\) solves the backward Kolmogorov equation associated with \eqref{eq:reference-generation-sde}, with terminal condition \(h_{\vart}(1,u)=\Lambda_{\vart,\calC}(u)\).
Using also It\^o's formula, we have that
\begin{equation}
  \dd h_{\vart}(s,U_s)
  =
  \sqrt{\bar\beta(s)}
  \nabla h_{\vart}(s,U_s)^\top
  \dd B_s.
\end{equation}
The density process of \(\mathsf Q_{\vart}\) with respect to \(\mathsf P_{\vart}\) is
\begin{equation}
  M_{\vart,s}
  =
  \E_{\mathsf P_{\vart}}
  \left[
    \frac{\Lambda_{\vart,\calC}(U_1)}{Z_{\vart}}
    \mid
    \mathcal F_s
  \right]
  =
  \frac{h_{\vart}(s,U_s)}{Z_{\vart}}.
\end{equation}
It follows that
\begin{equation}
  \frac{\dd M_{\vart,s}}{M_{\vart,s}}
  =
  \sqrt{\bar\beta(s)}
  \nabla_u\log h_{\vart}(s,U_s)^\top
  \dd B_s.
\end{equation}
By Girsanov's theorem,
\[
  B_s^{\mathsf Q_{\vart}}
  =
  B_s
  -
  \int_0^s
  \sqrt{\bar\beta(r)}
  \nabla_u\log h_{\vart}(r,U_r)
  \dd r
\]
is Brownian motion under \(\mathsf Q_{\vart}\).
Substituting this into \eqref{eq:reference-generation-sde} gives \eqref{eq:guided-sde-under-Q}.
For the energy identity, write
\begin{align}
  \log M_{\vart,1}
  =
  \log M_{\vart,0}
  &+
  \int_0^1
  \sqrt{\bar\beta(s)}
  \nabla_u\log h_{\vart}(s,U_s)^\top
  \dd B_s^{\mathsf Q_{\vart}}
  \\
  &+
  \frac12
  \int_0^1
  \bar\beta(s)
  \|\nabla_u\log h_{\vart}(s,U_s)\|^2
  \dd s.
\end{align}
Taking expectation under \(\mathsf Q_{\vart}\), the stochastic integral has mean zero, and hence
\begin{align}
  \KL(\mathsf Q_{\vart}\|\mathsf P_{\vart})
  &=
  \E_{\mathsf Q_{\vart}}[\log M_{\vart,1}]
  \nonumber\\
  &=
  \KL((\mathsf Q_{\vart})_0\|(\mathsf P_{\vart})_0)
  +
  \frac12
  \E_{\mathsf Q_{\vart}}
  \int_0^1
  \bar\beta(s)
  \|\nabla_u\log h_{\vart}(s,U_s)\|^2
  \dd s.
  \label{eq:path-kl-energy}
\end{align}
On the other hand, the likelihood ratio \(\dd\mathsf Q_{\vart}/\dd\mathsf P_{\vart}\) depends only on \(U_1\).
The left-hand side therefore simplifies to
\[
  \KL(\mathsf Q_{\vart}\|\mathsf P_{\vart})
  =
  \KL((\mathsf Q_{\vart})_1\|(\mathsf P_{\vart})_1).
\]
Under \(\mathsf P_{\vart}\), \(U_1\sim r\), and under \(\mathsf Q_{\vart}\), \(U_1\sim\rho_{\vart}(\cdot\mid\calC)\).
Substituting this into the previous display gives
\begin{equation}
  \KL(\mathsf Q_{\vart}\|\mathsf P_{\vart})
  =
  \KL\bigl(\rho_{\vart}(\cdot\mid\calC)\|r\bigr).
\end{equation}
Combining this with \eqref{eq:path-kl-energy} proves \eqref{eq:energy-with-boundary}.
\end{proof}

\begin{remark}
For a finite stationary OU horizon with finite integrated noise, \(U_0\) and \(U_1\) are generally correlated.
Terminal reweighting can thus change the law of \(U_0\), resulting in the boundary term in \eqref{eq:energy-with-boundary}.
If $\smash{\int_0^1\bar\beta(s)\dd s\to\infty}$, then the correlation between \(U_0\) and \(U_1\) tends to zero.
Equivalently, the initial generation-time state becomes independent of the endpoint.
In that limit, terminal reweighting does not alter the initial law, and \((\mathsf Q_{\vart})_0=(\mathsf P_{\vart})_0\).
\end{remark}

We can now provide an interpretation of the evidence in terms of the guidance.
Define the guidance energy by
\begin{equation}
  \mathcal E_{\mathrm{guide}}(\vart)
  =
  \frac12
  \E_{\mathsf Q_{\vart}}
  \left[
    \int_0^1
    \bar\beta(s)
    \|\nabla_u\log h_{\vart}(s,U_s)\|^2
    \dd s
  \right].
  \label{eq:guide-energy-def}
\end{equation}
Suppose that we are in the limiting regime where \((\mathsf Q_{\vart})_0=(\mathsf P_{\vart})_0\). Then, by \Cref{thm:girsanov}, the guidance energy coincides with the relative-entropy cost
\begin{equation}
  \mathcal E_{\mathrm{guide}}(\vart)
  =
  \KL\bigl(\rho_{\vart}(\cdot\mid\calC)\|r\bigr).
  \label{eq:guide-energy-boundary}
\end{equation}
It follows, combining this with the result in \Cref{prop:conditional-evidence-gibbs}, that the log evidence can be written as
\begin{equation}
  \log Z_{\vart}
  =
  \E_{\xi_0\sim\rho_{\vart}(\cdot\mid\calC)}
  \left[
    \log\Lambda_{\vart,\calC}(\xi_0)
  \right]
  -
  \mathcal E_{\mathrm{guide}}(\vart).
  \label{eq:evidence-guidance-energy-boundary-free}
\end{equation}
Thus evidence maximisation can be interpreted dynamically as balancing condition satisfaction against the guidance energy required to deform the reference latent process.
For hard constraints, interpreted as limits of strictly positive soft constraints, the log-likelihood contribution vanishes on the constrained support, so evidence maximisation reduces, in the boundary-free regime, to minimising guidance energy.
For soft constraints, the objective also includes the expected residual loss in \Cref{cor:soft-residual-evidence}.

\subsection{Basic theoretical guarantees for parameter estimation}
\label{subsec:parameter-adaptation-theory}

We now record some basic theoretical results relating to our parameter estimation scheme.
We begin by establishing that the exact fixed-parameter time slices give unbiased evidence gradients.

\begin{proposition} 
\label{prop:intermediate-gradient}
Assume that \(0<Z_{\vart}<\infty\), that the reference density \(r\) and the noising schedule do not depend on \(\vart\), and that differentiation may be interchanged with integration.
Then, for every \(t\in[0,1)\) such that \(\psi_{\vart}(t,\cdot)\) is positive and differentiable in \(\vart\), 
\begin{equation}
  \nabla_{\vart}\mathcal J(\vart)
  =
  \nabla_{\vart}\log Z_{\vart}
  =
  \E_{u\sim\rho_{\vart,t}(\cdot\mid\calC)}
  \left[
    \nabla_{\vart}
    \log\psi_{\vart}(t,u)
  \right].
  \label{eq:intermediate-gradient}
\end{equation}
Thus, if a fixed-parameter \textnormal{\textsc{LatentFlow}} solve at \(\vart\) produces particles with selected time-slice marginals
\(
  u_j^{(i)}\sim\rho_{\vart,t_j}(\cdot\mid\calC),
\)
with
\(
  0\le t_j<1,
\)
then an unbiased estimator of \(\nabla_{\vart}\mathcal J(\vart)\) is 
\begin{equation}
  \widehat G(\vart)
  =
  \frac{1}{MJ}
  \sum_{i=1}^M\sum_{j=1}^J
  \nabla_{\vart}\log\psi_{\vart}(t_j,u_j^{(i)}).
  \label{eq:offline-gradient-estimator-appendix}
\end{equation}
\end{proposition}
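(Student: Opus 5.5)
The plan is to show that the normalising constant can be written as an integral of $\psi_\vart(t,\cdot)$ against the $\vart$-free density $r$. Differentiating under the integral then gives the claimed expectation after recognising $\rho_{\vart,t}$. Concretely, I first establish that for every $t\in[0,1)$,
\begin{equation}
  Z_\vart=\int_{\R^{m_\xi}}\psi_\vart(t,u)\,r(u)\dd u .
\end{equation}

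This identity follows from the tower property under the reference bridge. Under $\xi_0\sim r$ the bridge marginal is $\xi_t\sim r$, so
\begin{equation}
  \E_{u\sim r}[\psi_\vart(t,u)]=\E_{u\sim r}\bigl[\E[\Lambda_{\vart,\calC}(\xi_0)\mid\xi_t=u]\bigr]=\E_{\xi_0\sim r}[\Lambda_{\vart,\calC}(\xi_0)]=Z_\vart .
\end{equation}
Equivalently, one can integrate the density $\rho_{\vart,t}(u)=\psi_\vart(t,u)r(u)/Z_\vart$ from \eqref{eq:appendix-guided-marginal-density} and use that it integrates to one. The step that genuinely needs the hypotheses is that neither $r$ nor the schedule $(\alpha,\sigma)$ depends on $\vart$. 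Because of this, all $\vart$-dependence sits inside $\psi_\vart$.

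Next I differentiate, using the assumed interchange of differentiation and integration. This gives $\nabla_\vart Z_\vart=\int\nabla_\vart\psi_\vart(t,u)\,r(u)\dd u$. Since $\psi_\vart(t,\cdot)>0$, I can write $\nabla_\vart\psi_\vart=\psi_\vart\nabla_\vart\log\psi_\vart$. Dividing by $Z_\vart\in(0,\infty)$ then yields
\begin{equation}
  \nabla_\vart\log Z_\vart=\int\nabla_\vart\log\psi_\vart(t,u)\,\frac{\psi_\vart(t,u)r(u)}{Z_\vart}\dd u=\E_{u\sim\rho_{\vart,t}(\cdot\mid\calC)}\bigl[\nabla_\vart\log\psi_\vart(t,u)\bigr].
\end{equation}
At $t=0$ we have $\psi_\vart(0,\cdot)=\Lambda_{\vart,\calC}$, so this recovers \eqref{eq:endpoint-theta-gradient} as a special case.

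For the estimator $\widehat G(\vart)$, unbiasedness is linearity of expectation. Each summand $\nabla_\vart\log\psi_\vart(t_j,u^{(i)}_j)$ with $u^{(i)}_j\sim\rho_{\vart,t_j}$ has mean $\nabla_\vart\mathcal J(\vart)$ by the identity above, applied at $t=t_j$. No independence across $i$ or $j$ is needed; only the correct marginals are. The exact time-slice marginals are supplied by \Cref{thm:latent-reverse-sde-exactness} for an exact fixed-parameter solve.

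The main obstacle is only bookkeeping. One must ensure that $\vart$ enters solely through $\Lambda_{\vart,\calC}$, so that the interchange hypothesis, applied to $u\mapsto\psi_\vart(t,u)r(u)$, is legitimate. If needed, I would phrase the interchange directly on the joint integral $\iint\Lambda_{\vart,\calC}(\alpha(t)u+\sigma(t)\varepsilon)\,r(\varepsilon)r(u)\dd\varepsilon\dd u$.
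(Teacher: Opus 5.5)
Your proposal is correct and follows essentially the same route as the paper: write $Z_{\vart}=\int \psi_{\vart}(t,u)\,r(u)\dd u$, differentiate under the integral, use $\nabla_{\vart}\psi_{\vart}=\psi_{\vart}\nabla_{\vart}\log\psi_{\vart}$ to recognise the expectation under $\rho_{\vart,t}(\cdot\mid\calC)$, and obtain unbiasedness of $\widehat G(\vart)$ by linearity. Two points the paper leaves implicit are made explicit in your version: the tower-property justification of the integral representation, and the observation that only the correct time-slice marginals, not independence across particles, are needed.
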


\begin{proof}
Recall that
\(
  Z_{\vart}
  =
  \int_{\mathbb R^{m_\xi}}
  \psi_{\vart}(t,u)r(u)\dd u.
\)
Taking derivatives, and simplifying, we have
\begin{align*}
  \nabla_{\vart}\log Z_{\vart}
  &=
  \frac{1}{Z_{\vart}}
  \int_{\mathbb R^{m_\xi}}
  \nabla_{\vart}\psi_{\vart}(t,u)r(u)\dd u
  \\
  &=
  \int_{\mathbb R^{m_\xi}}
  \nabla_{\vart}\log\psi_{\vart}(t,u)
  \frac{\psi_{\vart}(t,u)r(u)}{Z_{\vart}}
  \dd u =
  \E_{u\sim\rho_{\vart,t}(\cdot\mid\calC)}
  \left[
    \nabla_{\vart}\log\psi_{\vart}(t,u)
  \right].
\end{align*}
\end{proof}

We now record an elementary convergence result for our update scheme.
In practice, we cannot compute 
\(
  h_{\vart}(t,u)
  :=
  \nabla_{\vart}\log\psi_{\vart}(t,u)
\)
directly, and it must be replaced by a self-normalised ratio estimator
\(\smash{
  \widehat h_{\vart,S}(t,u)
  :=
  \nabla_{\vart}\log\widehat\psi_{\vart,S}(t,u)
  }
\); 
see \Cref{sec:mc-guidance} for a related discussion.
This estimator is generally biased for finite $S$, since it is a ratio of Monte Carlo averages.
It is, however, consistent as \(S\to\infty\), under the usual integrability assumptions and provided that the limiting denominator is positive.
In any case, the following result only requires a conditional bias bound.

\begin{theorem}
\label{thm:offline-eb-ascent-biased}
Let \(\Theta\subseteq\mathbb R^q\) be convex, and suppose that \(\mathcal J:\Theta\to\mathbb R\) is continuously differentiable, bounded above by \(\mathcal J^\star\), and \(L_{\mathcal J}\)-smooth along the iterates.
Consider the updates 
\[
  \vart_{k+1}
  =
  \vart_k+\eta_k\widehat G_k,
  \qquad
  0<\eta_k<\frac{1}{2L_{\mathcal J}},
\]
and assume that the iterates remain in \(\Theta\).
Let \(\mathcal F_k\) be the filtration generated by the algorithm before drawing \(\widehat G_k\).
Suppose, in addition, that \(\widehat G_k\) admits the decomposition
\[
  \widehat G_k
  =
  \nabla_{\vart}\mathcal J(\vart_k)+b_k+\xi_k,
\]
where \(b_k\) is \(\mathcal F_k\)-measurable, and satisfies \(\|b_k\|\le\beta_k\) for some $\mathcal{F}_k$-measurable \(\beta_k\ge0\), and where $\xi_k$ satisfies
\[
  \E[\xi_k\mid\mathcal F_k]=0,
  \qquad
  \E[\|\xi_k\|^2\mid\mathcal F_k]\le\sigma^2.
\]
Then, for every \(K\ge1\),
\begin{equation}
  \min_{0\le k<K}
  \E\bigl[\|\nabla_{\vart}\mathcal J(\vart_k)\|^2\bigr]
  \le
  \frac{
    \mathcal J^\star-\mathcal J(\vart_0)
    +
    \sum_{k=0}^{K-1}\eta_k\,\E[\beta_k^2]
    +
    \frac{L_{\mathcal J}\sigma^2}{2}\sum_{k=0}^{K-1}\eta_k^2
  }{
    \sum_{k=0}^{K-1}\frac{\eta_k}{2}(1-2L_{\mathcal J}\eta_k)
  }.
  \label{eq:offline-min-gradient-bound-biased}
\end{equation}
In particular, if \(\eta_k\equiv\eta<1/(2L_{\mathcal J})\) and \(\beta_k\le\beta\) for all \(k\), the limiting stationarity floor is of order
\[
  O\bigl(\beta^2+L_{\mathcal J}\sigma^2\eta\bigr).
\]
If, in addition, \(\E[\beta_k^2]\to0\), then the bias contribution to this floor vanishes; the remaining stochastic contribution can be made small by decreasing \(\eta\), and disappears when \(\sigma=0\).
\end{theorem}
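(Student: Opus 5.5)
This is the standard biased-SGD ascent analysis: a one-step descent lemma from smoothness, bias and noise handled by Young's inequality and conditional expectation, then telescoping. Write \(G_k=\nabla_{\vart}\mathcal J(\vart_k)\) and \(\widehat G_k=G_k+b_k+\xi_k\).

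\emph{Step 1: one-step lower bound.} Convexity of \(\Theta\) and \(L_{\mathcal J}\)-smoothness along the iterates give
\[
  \mathcal J(\vart_{k+1})
  \ge
  \mathcal J(\vart_k)
  +
  \eta_k\langle G_k,\widehat G_k\rangle
  -
  \tfrac{L_{\mathcal J}}{2}\eta_k^2\|\widehat G_k\|^2 .
\]

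\emph{Step 2: conditional expectation.} Condition on \(\mathcal F_k\). Since \(G_k\) and \(b_k\) are \(\mathcal F_k\)-measurable and \(\E[\xi_k\mid\mathcal F_k]=0\),
\[
  \E[\langle G_k,\widehat G_k\rangle\mid\mathcal F_k]=\|G_k\|^2+\langle G_k,b_k\rangle ,
\]
and Young's inequality gives \(\langle G_k,b_k\rangle\ge-\tfrac12\|G_k\|^2-\tfrac12\beta_k^2\). For the quadratic term, the cross terms with \(\xi_k\) vanish in conditional expectation, so
\[
  \E[\|\widehat G_k\|^2\mid\mathcal F_k]=\|G_k+b_k\|^2+\E[\|\xi_k\|^2\mid\mathcal F_k]\le 2\|G_k\|^2+2\beta_k^2+\sigma^2 .
\]
Combining these,
\[
  \E[\mathcal J(\vart_{k+1})\mid\mathcal F_k]
  \ge
  \mathcal J(\vart_k)
  +
  \tfrac{\eta_k}{2}(1-2L_{\mathcal J}\eta_k)\|G_k\|^2
  -
  \tfrac{\eta_k}{2}(1+2L_{\mathcal J}\eta_k)\beta_k^2
  -
  \tfrac{L_{\mathcal J}\sigma^2}{2}\eta_k^2 .
\]
The step-size condition \(\eta_k<1/(2L_{\mathcal J})\) gives \(1+2L_{\mathcal J}\eta_k\le2\), so the bias term is bounded below by \(-\eta_k\beta_k^2\). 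This matches the coefficients in \eqref{eq:offline-min-gradient-bound-biased} exactly.

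\emph{Step 3: telescoping.} Take total expectations and sum over \(k=0,\dots,K-1\). Using \(\E[\mathcal J(\vart_K)]\le\mathcal J^\star\) and rearranging gives
\[
  \sum_k\tfrac{\eta_k}{2}(1-2L_{\mathcal J}\eta_k)\,\E\|G_k\|^2
  \le
  \mathcal J^\star-\mathcal J(\vart_0)
  +\sum_k\eta_k\E[\beta_k^2]
  +\tfrac{L_{\mathcal J}\sigma^2}{2}\sum_k\eta_k^2 .
\]
Each weight \(\tfrac{\eta_k}{2}(1-2L_{\mathcal J}\eta_k)\) is positive, so the minimum of \(\E\|G_k\|^2\) is at most the weighted average. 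This yields the stated bound.

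\emph{Step 4: consequences.} With constant \(\eta\) and \(\beta_k\le\beta\), the denominator is \(K\tfrac{\eta}{2}(1-2L_{\mathcal J}\eta)\). Letting \(K\to\infty\), the numerator-to-denominator ratio tends to \((2\beta^2+L_{\mathcal J}\sigma^2\eta)/(1-2L_{\mathcal J}\eta)\), which gives the \(O(\beta^2+L_{\mathcal J}\sigma^2\eta)\) floor. If \(\E[\beta_k^2]\to0\), the Cesàro average of the bias terms vanishes, and the remaining noise term is \(O(L_{\mathcal J}\sigma^2\eta)\), which is zero when \(\sigma=0\).

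The main obstacle is Step 2. Matching the exact constants requires splitting \(\langle G_k,b_k\rangle\) with weight \(\tfrac12\) and absorbing the \(\beta_k^2\) term from the quadratic expansion through the step-size condition. Care is also needed that \(\xi_k\) is orthogonal to the \(\mathcal F_k\)-measurable quantities only in conditional mean, which is all Step 2 uses.
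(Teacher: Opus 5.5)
Your proposal is correct and takes the same route as the paper, whose proof just invokes the standard biased-gradient SGD analysis applied to \(-\mathcal J\) via the descent lemma, with Young's inequality absorbing \(b_k\), and then telescopes. Your Step~2 supplies the computation the paper omits, and the coefficients you obtain, namely \(\tfrac{\eta_k}{2}(1-2L_{\mathcal J}\eta_k)\), \(\eta_k\E[\beta_k^2]\) and \(\tfrac{L_{\mathcal J}\sigma^2}{2}\eta_k^2\), match the stated bound exactly.
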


\begin{proof}
This follows from the standard smooth nonconvex analysis of stochastic gradient methods with biased gradients; see, for instance, \citet{ajalloeian2020convergence}.
Applying the same descent-lemma argument to \(f=-\mathcal J\), and using Young's inequality to absorb the bias term \(b_k\), gives \eqref{eq:offline-min-gradient-bound-biased}.
\end{proof}

\section{Additional Related Work}
\label[appendix]{sec:add-related-work}

\textbf{Conditioning stochastic-process priors.}
Perhaps the most well-studied case of stochastic-process conditioning is inference for Gaussian processes (GPs).
Under finitely many linear-Gaussian observations, the posterior over any finite collection of function values remains Gaussian and can be computed analytically \citep{rasmussen2006gaussian}.
This conjugacy is generally lost under non-Gaussian likelihoods, motivating a large literature on approximate inference  \citep{kuss2005assessing,nickisch2008approximations,hensman2015scalable,nickisch2018state}.
Related work encodes known linear operator or differential-equation constraints directly in the GP prior or regression construction \citep{jidling2017linearly,langehegermann2018algorithmic,gulian2020gaussian}.
By contrast, shape information such as monotonicity, boundedness, positivity, and convexity is commonly incorporated through virtual inequality observations, finite-dimensional coefficient inequalities, truncation, or posterior projection, generally resulting in non-Gaussian or projected posterior laws \citep{riihimaki2010gaussian,daveiga2012gaussian,lin2014bayesian,maatouk2017gaussian,lopezlopera2018finite,agrell2019gaussian,swiler2020survey,maatouk2025bayesian,astfalck2026posterior}.
An alternative generalised Bayes linear approach enforces domain restrictions through the choice of inferential solution space \citep{astfalck2024generalised}.
The closest GP-specific antecedent to the current work is \textsc{FlowGP} \citep{moss2026conditioning}, which conditions a finite-dimensional whitened GP representation on nonlinear or non-Gaussian information by solving a guided probability-flow ODE.
Our framework contains \textsc{FlowGP} as a special case but applies to a substantially broader class of stochastic-process priors; see~\Cref{app:innovation-representations}.

For SDE priors, \textsc{LatentFlow} is closely connected to diffusion-bridge simulation and conditioned diffusion processes.
Classical bridge methods condition a diffusion to hit specified endpoints or endpoint observations, often by adding an approximate guiding term to its drift.
\citet{delyon2006simulation} construct readily simulated endpoint-guided diffusions and derive an explicit likelihood-ratio correction.
\citet{schauer2017guided} derive guiding terms from auxiliary tractable diffusions with known transition densities, while \citet{bierkens2020simulation} extend this guided-proposal framework to partial endpoint observations and elliptic or hypoelliptic settings.
Related bridge samplers use coupling or other proposal mechanisms  \citep{bladt2014simple,bladt2016simulation}.
More recent methods learn time reversals using score matching \citep{heng2021simulating}, estimate bridge scores without first learning the time reversal \citep{baker2025score}, develop infinite-dimensional conditioning constructions \citep{baker2024conditioning}, introduce neural guided bridges \citep{yang2025neural}, or exploit Malliavin-calculus identities \citep{pidstrigach2025conditioning}.
Rare-event simulation and transition-path sampling address closely related path ensembles.
Transition-path sampling uses path-space Monte Carlo to study rare reactive trajectories \citep{dellago1998transition,dellago2002transition}, with recent diffusion-based methods also targeting transition-path ensembles \citep{seong2025transition}, whereas genealogical particle methods and adaptive multilevel splitting estimate rare-event probabilities by repeatedly selecting and resampling trajectories according to their progress toward a rare set \citep{delmoral2005genealogical,cerou2007adaptive,cerou2012sequential}.
\textsc{LatentFlow} targets likelihood-tilted path laws by tilting the
law of the underlying innovations and transporting it toward regions of
high condition likelihood.
It therefore provides a likelihood-guided complement to output-space bridge proposals and splitting methods when suitable pulled-back gradients or guidance estimators are available.

\textbf{Latent and reference-space formulations.}
Many stochastic-process laws admit representations as pushforwards of a tractable reference distribution rather than through an explicit density on the resulting process space.
This perspective is implicit in simulator and filtering representations of state-space models \citep{doucet2001sequential,sarkka2013bayesian}; in GMRF and SPDE constructions of spatial fields \citep{rue2005gaussian,lindgren2011explicit}; and in process priors and finite-dimensional representations built by composing or transforming latent random variables or functions, including random-feature kernel representations, latent-force models, GP state-space models, and deep GPs \citep{rahimi2007random,alvarez2009latent,alvarez2013linear,frigola2013bayesian,damianou2013deep,frigola2014variational}.
Pathwise GP constructions based on Matheron's rule similarly represent posterior sample paths as transformations of prior samples and provide scalable samples for downstream Monte Carlo computation \citep{wilson2020efficient,wilson2021pathwise}.
Innovation and non-centred parameterisations make this explicit for diffusion processes by parameterising latent paths through transformed path variables or driving-noise variables together with model parameters \citep{roberts2001inference,golightly2008bayesian,papaspiliopoulos2013data,whitaker2017bayesian,graham2022manifold}.
Neural-process and implicit-process families likewise define learned distributions over functions through neural maps \citep{garnelo2018neural,kim2019attentive,ma2019variational,gordon2020convolutional}.
Similarly, normalising flows push forward simple latent laws, although they usually learn invertible transformations so that output densities remain available by change of variables \citep{rezende2015variational,papamakarios2021normalizing}.

The same reference-law viewpoint is central to Bayesian inverse problems.
A posterior on a possibly infinite-dimensional parameter or function space is commonly defined by tilting a tractable reference measure by a likelihood \citep{stuart2010inverse,dashti2017bayesian}.
Function-space MCMC methods, including pCN and Hilbert-space HMC, are designed directly for such targets and can remain well behaved under mesh refinement \citep{beskos2011hybrid,cotter2013mcmc,hairer2014spectral}.
SMC, annealed importance sampling, and flow-augmented transport samplers construct sequences of intermediate distributions between reference and posterior laws \citep{neal2001annealed,delmoral2006sequential,arbel2021annealed}, while transport-map methods seek explicit reference-to-posterior maps, including in function-space inverse problems \citep{elmoselhy2012bayesian,hosseini2025conditional}.
Recent score-based methods have likewise emphasised defining and approximating posterior sampling dynamics directly on function spaces \citep{baldassari2023conditional,baldassari2024taming,baldassari2025preconditioned}.

\textbf{Bayesian inference in generator noise space.}
A related literature performs posterior inference directly in the input-noise coordinates of a fixed generative model.
For example, \citet{holden2022bayesian} use parallel-tempered pCN in the latent space of a VAE, while \citet{patel2022solution} use HMC together with a GAN.
Another early contribution is \citet{bohm2019uncertainty}.
Normalising-flow priors have also been combined with variationally trained conditional flows for amortised inverse-problem inference \citep{whang2021composing}.
Related early work performs constrained-HMC inference in the input-noise space of a differentiable generator \citep{graham2017asymptotically}, while distributional-control methods learn a transformation of the generator's reference law toward specified output-space constraints \citep{wu2022promptgen}.

There is also a substantial literature on combining transport maps with Monte Carlo methods.
Transport-map methods construct invertible transformations between tractable reference laws and target laws, either as approximate posterior representations or as preconditioners and proposals for Monte Carlo sampling \citep{elmoselhy2012bayesian,parno2018transport,hoffman2019neutra,cabezas2023transport,cabezas2024markovian}.
When an approximate map is embedded within a valid MCMC kernel with the appropriate correction, map approximation error affects sampling efficiency rather than the invariant target \citep{parno2018transport,hoffman2019neutra}.
NeuTra HMC uses a variationally trained normalising flow followed by HMC in the transported coordinates \citep{hoffman2019neutra}, whereas transport elliptical slice sampling uses a related nonlinear preconditioning strategy for elliptical slice sampling \citep{cabezas2023transport}.
A closely related latent construction is the neural-transport MCMC method of \citet{nijkamp2022mcmc}, in which an energy correction exponentially tilts the output law of a learned flow and sampling is performed in its latent coordinates.
That work is directed toward learning and sampling energy-based generative models, rather than conditioning a stochastic-process simulator.

More recent methods apply the same principle to diffusion and flow-based generators.
Outsourced diffusion sampling considers a deterministic representation \(x=f_{\vart}(z)\), with \(z\) drawn from a Gaussian reference, and trains an amortised diffusion sampler for the latent posterior induced by an output-space likelihood or reward \citep{venkatraman2025outsourced}.
Noise-space HMC instead applies HMC directly to the initial noise of a
deterministic reverse-diffusion map \citep{xia2026noise}.
Similar constructions have appeared for flow-matching models: Source-Guided Flow Matching modifies the source distribution while retaining the learned vector field, whereas D-Flow SGLD samples the source posterior induced by a new measurement operator \citep{wang2025sourceguided,parikh2026dflow}.
Related inverse-problem methods perform SMC in the latent coordinates of diffusions, guide sampling trajectories under pre-trained latent flow priors, or combine latent diffusion posterior sampling with surrogate-likelihood guidance \citep{achituve2025inverse,askari2025latent,wang2026latent}.

These approaches share the central principle of conditioning a generator by changing the distribution of its input noise.
\textsc{LatentFlow} differs primarily in the scope of the generator and in how the latent posterior is sampled.
In our case, \(T_{\vart}\) may describe an explicit stochastic-process prior, a numerical simulator, or a pre-trained neural generator, and need not be invertible or density-evaluable.
Moreover, \textsc{LatentFlow} does not train an auxiliary conditional generator or amortised latent sampler.
Instead, its time-dependent guidance is determined directly by the pulled-back likelihood and a known Gaussian noising bridge.

\textbf{Diffusion guidance and finite-time controlled transport.}
The sampling construction underlying \textsc{LatentFlow} is closely related to diffusion and flow-based generative modelling.
Diffusion probabilistic models define a forward noising process from a data distribution to a simple reference law together with a learned reverse process \citep{sohl2015deep,ho2020denoising}; continuous-time score-based models generate samples by reversing a noising SDE or solving the associated probability-flow ODE \citep{song2021score}.
Flow matching, rectified flows, and stochastic interpolants provide related continuous-time transport formulations \citep{lipman2023flow,liu2023flow,albergo2023building,albergo2023stochastic}.
Recent guided flow-matching methods extend these constructions to conditional generation and energy- or reward-guided sampling \citep{feng2025guidance,mark2025feynman}.

A large literature conditions pre-trained diffusion models on labels, measurements, inverse problems, or reward functions by modifying their reverse dynamics through classifier, classifier-free, measurement, reward, or off-the-shelf guidance terms \citep{dhariwal2021diffusion,ho2022classifierfree,chung2023diffusion,bansal2023universal}.
To control bias introduced by approximate guidance, other conditional diffusion samplers use SMC, particle MCMC, Feynman--Kac representations, twisting, or related correction mechanisms \citep{wu2023practical,cardoso2024monte,corenflos2025conditioning,singhal2025general,skreta2025feynman,wu2025reverse}.
Under their stated assumptions, some of these methods are asymptotically exact as the particle number increases, while others preserve a model-defined conditional target through a valid Monte Carlo correction; see~\citet{zhao2025conditional} for a recent review.
Diffusion and flow models have also been developed directly for function-valued data and function-space posterior sampling \citep{dutordoir2023neural,franzese2023continuous,kerrigan2024functional,lim2025score,yao2025guided}.

These constructions admit a broader interpretation through finite-time stochastic control.
Doob \(h\)-transforms provide a classical mechanism for conditioning Markov processes; for diffusions, the transformed dynamics acquire a drift correction involving the log-gradient of an appropriate space--time \(h\)-function \citep{doob1957conditional,didi2024framework}.
Related entropy-minimisation and Schr\"odinger-bridge formulations characterise controlled path measures as minimum-relative-entropy perturbations of a reference process subject to marginal or endpoint constraints \citep{follmer1985entropy,leonard2014survey,chen2016relation}.
Modern Schr\"odinger-bridge, Schr\"odinger--F\"ollmer, and controlled-diffusion samplers exploit this finite-time transport perspective for sampling unnormalised targets \citep{bernton2019schrodinger,huang2021schrodinger,zhang2022path,vargas2022bayesian,vargas2023denoising,vargas2024controlled,tamogashev2026data}, while diffusion Schr\"odinger-bridge methods construct transports between prescribed endpoint laws \citep{debortoli2021diffusion,peluchetti2023diffusion,liu2023i2sb,liu2024generalized}.
Several recent conditional-diffusion methods make the connection to Doob transforms explicit \citep{didi2024framework,denker2024deft,guo2026conditional}.
In infinite dimensions, \citet{park2024stochastic} develop a function-space stochastic-control formulation of diffusion bridges; \citet{yang2025infinite} learn discretisation-equivariant bridge scores using operator learning; and \citet{pieper2025simulation} construct guided bridge measures for semilinear SPDEs conditioned on linear terminal observations. 
\citet{baker2026supervised} develop a complementary construction for conditioning trained function-space diffusion priors by supervised guidance.

\textsc{LatentFlow} instantiates the finite-time guidance principle in latent innovation space.
Rather than learning a noising or transport process from samples, it uses a known reference-space bridge and the stochastic-process law supplied by \(T_{\vart}\).
Its guiding field is the gradient of \(\log h_t\), where \(h_t\) is the conditional expectation of the pulled-back likelihood under the latent noising bridge, equivalently the log-gradient of the corresponding smoothed pulled-back likelihood.
Thus, \textsc{LatentFlow} combines the pullback formulation used in generator-noise inference with a diffusion-style finite-time transport, yielding a reference-space sampler for a broad class of stochastic-process priors.

\section{Additional Numerical Experiments}
\label[appendix]{app:numerics-additional}

\subsection{Sampling Benchmarks}
\label[appendix]{app:sampling-benchmarks}

We now evaluate \textsc{LatentFlow} on three canonical benchmark distributions that are challenging for standard inference methods.
Each target takes the form
\(
    p^*(x) \;\propto\; p_{\text{prior}}(x)\cdot p(y \mid x),
\)
with prior $p_{\text{prior}}(x) = \mathcal{N}(x;\,0,I)$ and likelihoods $p(y \mid x)$ described below.

For \textsc{LatentFlow}, we use an Euler--Maruyama discretisation with $100$ time steps, and $10$ Monte Carlo samples for the guidance approximation.
We benchmark against the No-U-Turn Sampler \citep[NUTS;][]{hoffman2014no}, mean-field VI \citep{blei2017variational}, and a Laplace approximation.
Sample quality is measured using the Maximum Mean Discrepancy (MMD$^2$) with a multi-scale RBF kernel, with uncertainty reported as standard errors over $5$ independent runs, alongside the effective sample size per second (ESS/s).
We generate $N=1{,}000$ samples using each method. 
NUTS thus requires $1,000$ steps to produce its $1,000$ samples, as well as an additional $500$ warmup steps to ensure the chains are well mixed. 

Across all benchmarks, \textsc{LatentFlow} achieves MMD$^2$ scores competitive with or better than NUTS, while producing \emph{independent} samples.
\textsc{LatentFlow} also succeeds on problems where standard MCMC collapses to a single mode.

\textbf{Banana Distribution.}
The banana (twisted Gaussian) distribution has a curved, parabolic valley that resists random-walk exploration:
\[
    x_1 \sim \mathcal{N}(0,\,\sigma_1^2), \qquad
    x_2 \mid x_1 \sim \mathcal{N}\!\bigl(b(x_1^2 - \sigma_1^2),\,\sigma_2^2\bigr),
\]
with $b = 0.7$, $\sigma_1 = 1.0$, and $\sigma_2 = 0.4$.
\Cref{fig:banana} shows samples from each method overlaid on the true density. 
\textsc{LatentFlow} closely follows the curved valley, while VI and Laplace fail to capture the non-linear geometry.
NUTS tracks the true distribution but produces far fewer independent samples, reflected in the low ESS of $67$; see \Cref{tab:banana-quant}.
\textsc{LatentFlow} achieves the best MMD$^2$, and is substantially faster than NUTS, achieving over $100$ times greater ESS/s.

\begin{figure}[ht!]
    \centering
    \includegraphics[width=.98\linewidth]{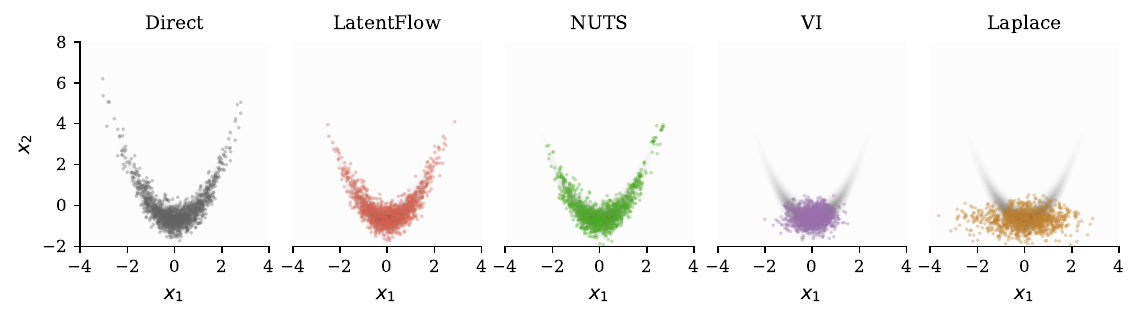}
    \caption{Banana distribution. 
    {Background:} true density. 
    {Scatter:} $1{,}000$ samples from each method. 
    Direct samples are exact draws.
    \textsc{LatentFlow} matches the true density well.
    VI and Laplace fail to capture the curved geometry. 
    }
    \label{fig:banana}
\end{figure}

\begin{table}[ht!]
\small
\centering
\caption{Banana distribution. MMD$^2$ $\pm$ stderr (lower is better), wall-clock time, and ESS/s, all over $5$ independent runs (3 d.p.).}
\label{tab:banana-quant}
\begin{tabular}{lllll}
\toprule
\textbf{Method} & \textbf{MMD}$^2$& \textbf{Time (s)} & \textbf{ESS} & \textbf{ESS/s} \\
\midrule
\textsc{LatentFlow}     & $-0.001 \pm 0.000$ & $0.060\pm0.000$ & $1000$ & $18010.0$ \\
NUTS    & $ 0.004 \pm 0.003$ & $0.470\pm0.160$ & $67$   & $141.7$   \\
VI      & $ 0.059$        & $0.310$         & $1000$ & $3270.7$  \\
Laplace & $ 0.044$             & ---             & ---    & ---       \\
\bottomrule
\end{tabular}
\end{table}

\textbf{Ring Distribution.}
The ring (shell) distribution is concentrated on a thin annulus of radius $R$, with density given by
\[
    p^*(x) \;\propto\; \exp\!\left(-\frac{(\|x\|^2 - R^2)^2}{2\sigma^2}\right),
\]
with $R = 2.0$ and $\sigma = 0.3$.
Since gradients point predominantly \emph{radially}, MCMC chains circulate slowly around the ring and mix poorly.
As shown in \Cref{fig:ring}, \textsc{LatentFlow} traces the full annulus accurately and achieves over $20\times$ the ESS/s of NUTS.

\pagebreak

\begin{figure}[ht!]
    \centering
    \includegraphics[width=\linewidth]{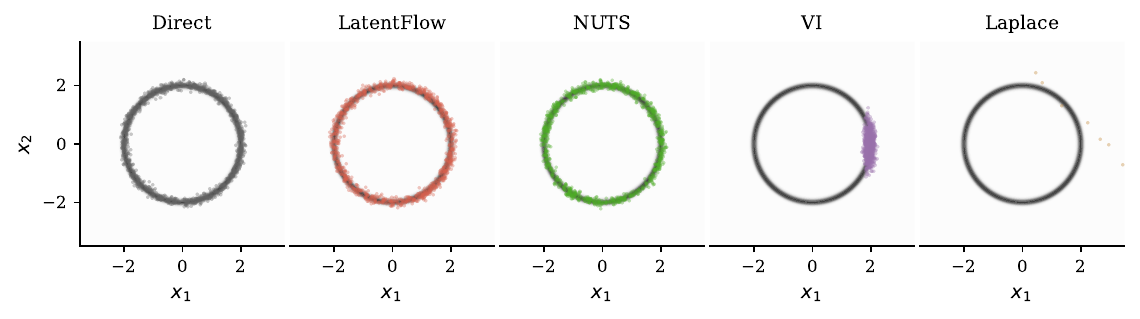}
    \caption{Ring distribution. %
    {Background:} true density. 
    {Scatter:} $1{,}000$ samples from each method.
    \textsc{LatentFlow} traces the annulus accurately.
    NUTS also covers the ring but requires more warm-up steps.
    VI collapses to a central blob.
    Laplace diverges catastrophically.}
    \label{fig:ring}
\end{figure}

\begin{table}[ht!]
\small
\centering
\caption{Ring distribution. MMD$^2$ $\pm$ stderr (lower is better), wall-clock time, ESS/s, and radial statistics, all over $5$ independent runs (3 d.p.). Ground truth: $\mathbb{E}[r]= 2$, $\mathrm{Std}[r]\approx 0.076$.}
\label{tab:ring-quant}
\begin{tabular}{lllllll}
\toprule
\textbf{Method} & \textbf{MMD}$^2$ & \textbf{Time (s)} & \textbf{ESS/s}
               & $\mathbb{E}[r]$ & $\mathrm{Std}[r]$ \\
\midrule
\textsc{LatentFlow}       & $ 0.000 \pm 0.000$ & $0.050\pm0.000$ & $21263.4$ & $2.004$ & $0.086$ \\
NUTS      & $ 0.001 \pm 0.001$ & $0.340\pm0.060$ & $897.1$   & $1.999$ & $0.075$ \\
VI        & $ 0.453$             & $0.290$         & $3412.0$  & $1.960$ & $0.087$ \\
Laplace   & $ 0.458$             & ---             & ---       & $274.105$ & $209.312$ \\
\bottomrule
\end{tabular}
\end{table}

\textbf{Mixture of Gaussians.}
We test multi-modal coverage using a mixture of $K=4$ equal-weight Gaussians centred at $(\pm 2, \pm 2)$ with $\sigma = 0.5$.
\textsc{LatentFlow} places samples in all four quadrants, while NUTS concentrates in quadrant~4; see \Cref{fig:mog}.
While NUTS achieves a raw ESS of $943$ and an ESS/s of $2{,}191.4$ (\Cref{tab:mog-quant}), its samples originate from a single mode.
Its apparent efficiency is thus misleading: it fails to cover multiple modes, reflected in its MMD$^2$ of $0.401 \pm 0.009$.
\textsc{LatentFlow} achieves the best sample quality with MMD$^2 = 0.00079 \pm 0.00027$, recovering the correct global mean of
approximately $(0, 0)$ and standard deviation near $2.04$ in both dimensions, compared to NUTS's collapsed mean of $(2.016,\,1.984)$ with standard deviation ${\approx}\,0.490$.

\begin{figure}[ht!]
    \centering
    \includegraphics[width=\linewidth]{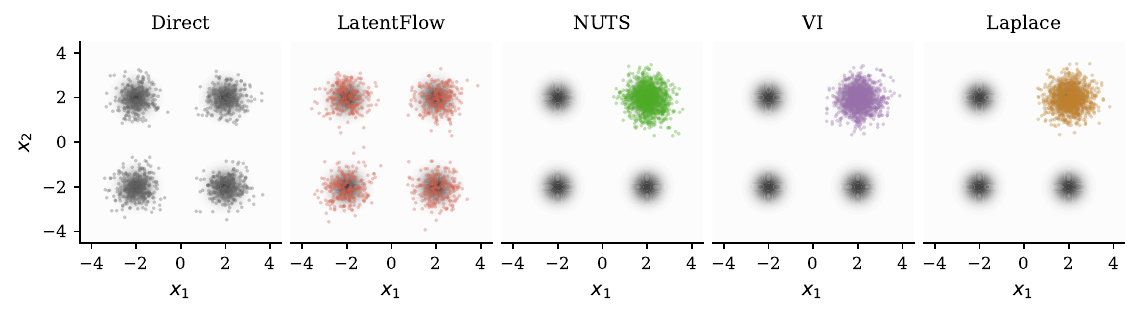}
    \caption{Mixture of Gaussians. 
    {Background:} true density (four modes). 
    {Scatter:} $1{,}000$ samples from each method.
    \textsc{LatentFlow} covers all four modes. 
    NUTS collapses entirely to quadrant~4 ($100\%$ of the chain). 
    VI collapses to a single Gaussian centred near $(2, 2)$.}
    \label{fig:mog}
\end{figure}

\begin{table}[ht!]
\small
\centering
\caption{Mixture of Gaussians.
MMD$^2$ $\pm$ stderr (lower is better), wall-clock time, and ESS/s, all over $5$ independent runs (3 d.p.).
Despite its high ESS/s, NUTS fails to achieve multi-modal coverage (all samples are from one mode).}
\label{tab:mog-quant}
\begin{tabular}{lllll}
\toprule
\textbf{Method} & \textbf{MMD}$^2$ & \textbf{Time (s)} & \textbf{ESS} & \textbf{ESS/s} \\
\midrule
\textsc{LatentFlow}  & $0.001 \pm 0.000$ & $0.050\pm0.000$ & $1000$ & $19865.9$ \\
NUTS & $0.401 \pm 0.009$     & $0.430\pm0.130$ & $943$  & $2191.4$  \\
VI   & $0.413$               & $0.140$         & $1000$ & $7119.4$  \\
\bottomrule
\end{tabular}
\end{table}

\pagebreak
\subsection{Spatial Processes}
\label[appendix]{app:spatial-processes}

~
\begin{figure}[ht!]
\vspace{-2mm}
  \centering
  \includegraphics[width=.95\linewidth]{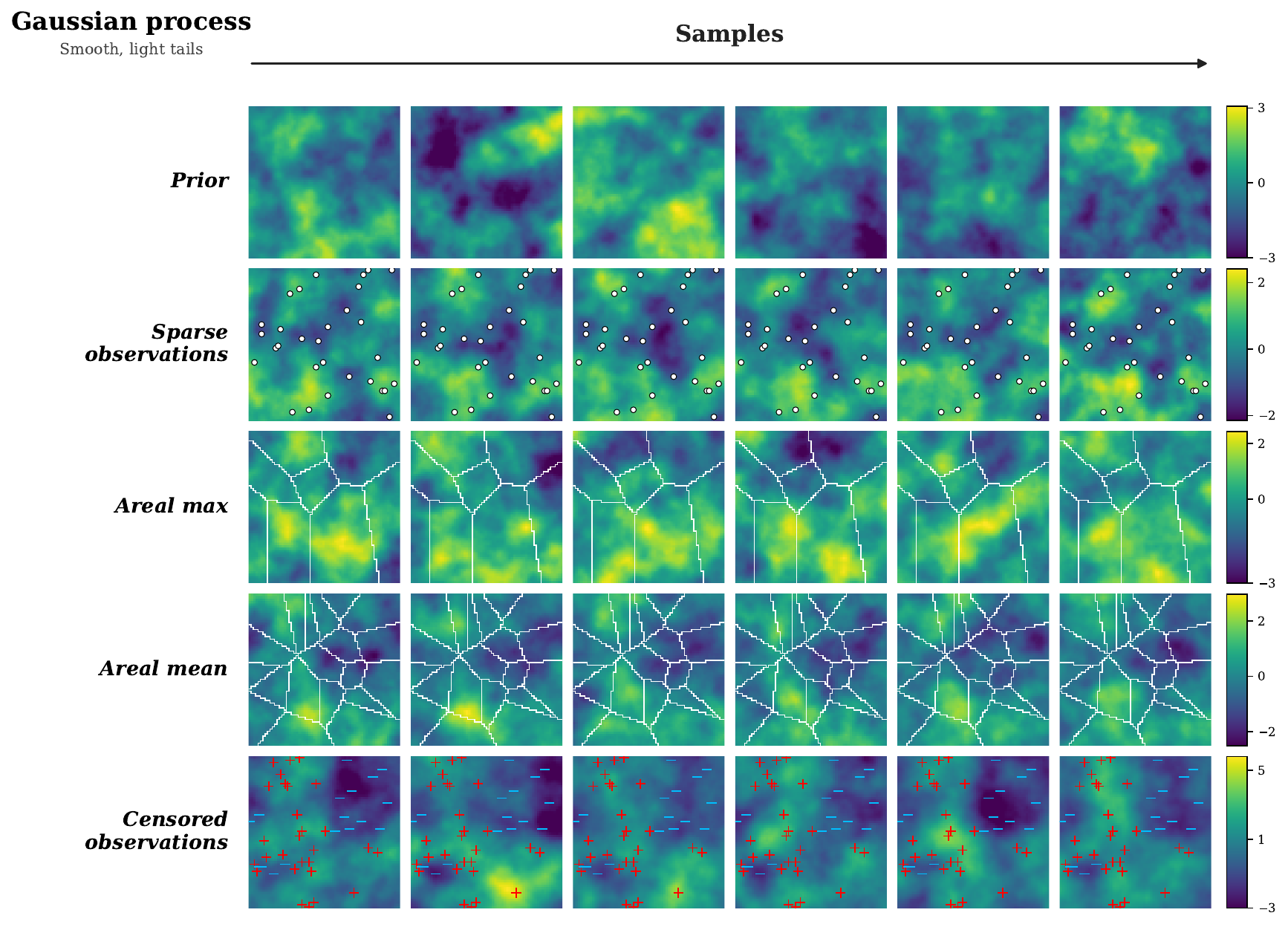}
  \caption{\textbf{Conditional sampling from a Gaussian process prior.} The base
    process is a smooth, light-tailed Gaussian field (Mat\'ern-$3/2$ kernel,
    lengthscale $0.15$). The posteriors interpolate smoothly
    between observations and honour each areal summary while preserving the
    short-lengthscale texture of the prior.}
  \label{fig:spatial-gp}
\end{figure}

\begin{figure}[ht!]
  \centering
  \includegraphics[width=.95\linewidth]{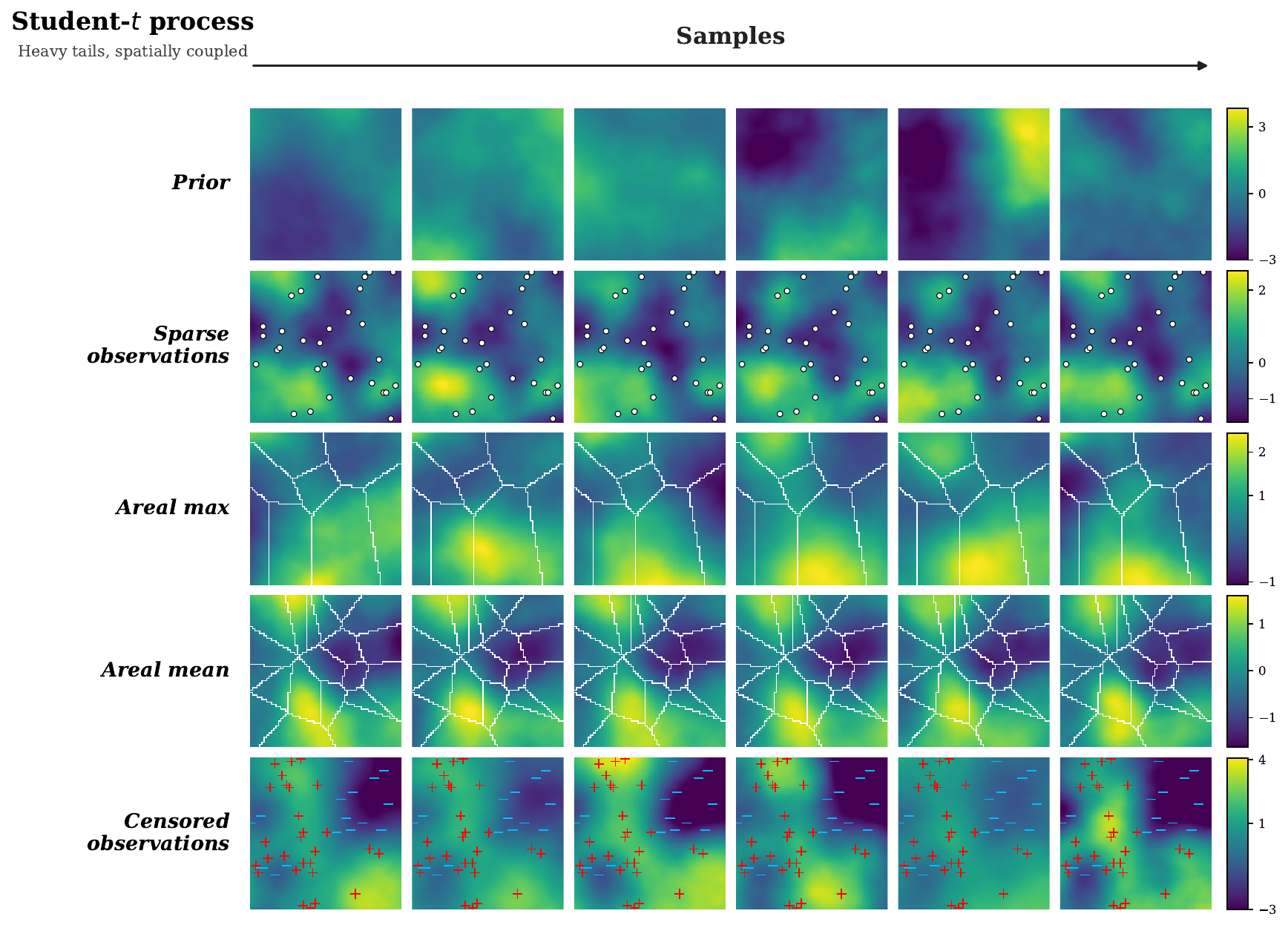}
  \caption{\textbf{Conditional sampling from a Student-$t$ process prior.} The
    base process is a heavy-tailed, spatially coupled Student-$t$ field
    ($\nu_t=4$, Mat\'ern-$3/2$ kernel, lengthscale $0.5$).}
  \label{fig:spatial-studentt}
  \vspace{-2mm}
\end{figure}

\begin{figure}[ht!]
  \centering
  \includegraphics[width=.95\linewidth]{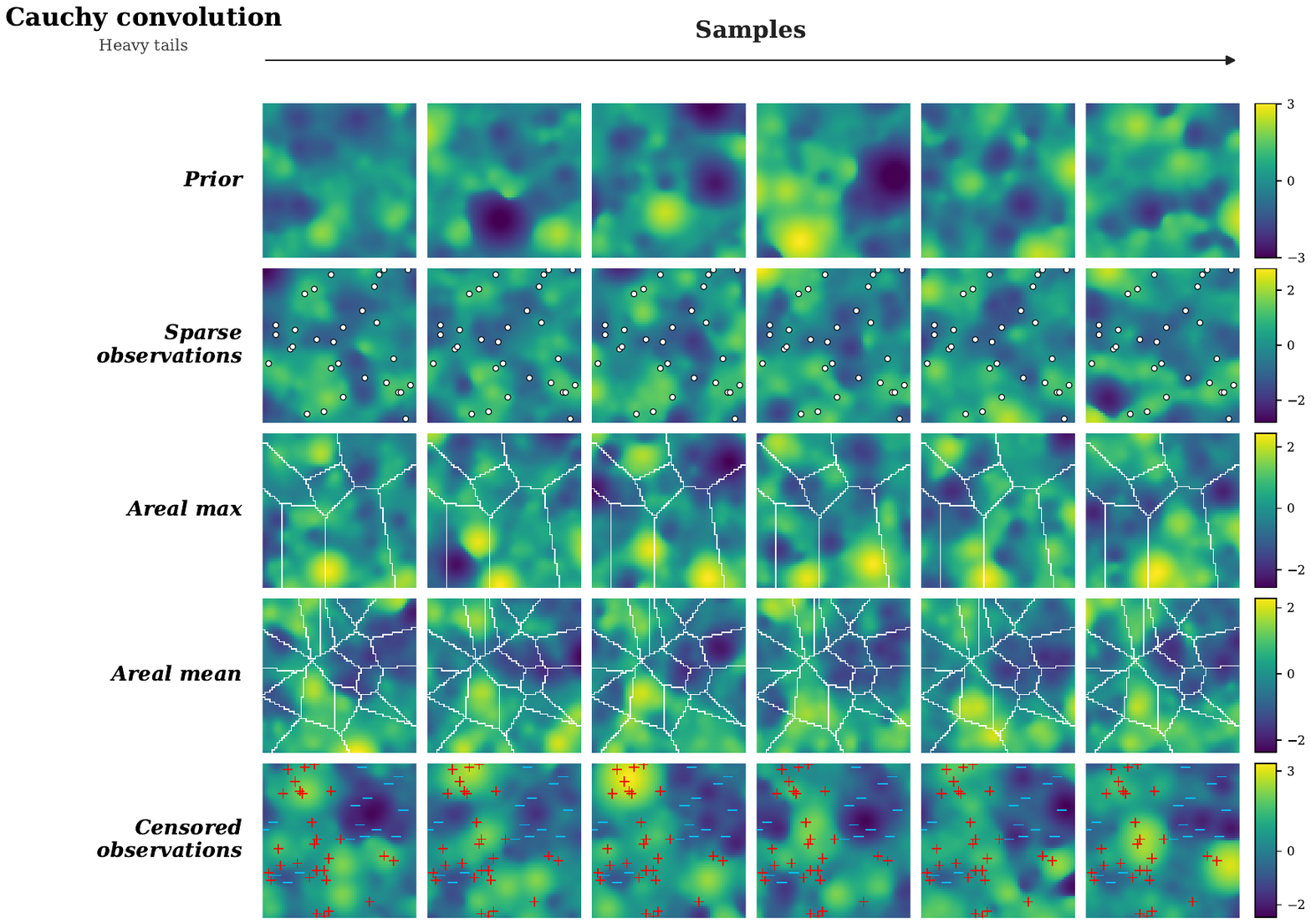}
  \caption{\textbf{Conditional sampling from a Cauchy convolution prior.} The
    base process is a very heavy-tailed field formed by convolving a Cauchy
    noise source with a short-lengthscale kernel (lengthscale $0.05$). The infinite-variance source produces
    sharp, highly localised features; the posteriors reconcile these with the
    imposed point, areal, and censoring constraints.}
  \label{fig:spatial-cauchy}
\end{figure}

\begin{figure}[ht!]
  \centering
  \includegraphics[width=.95\linewidth]{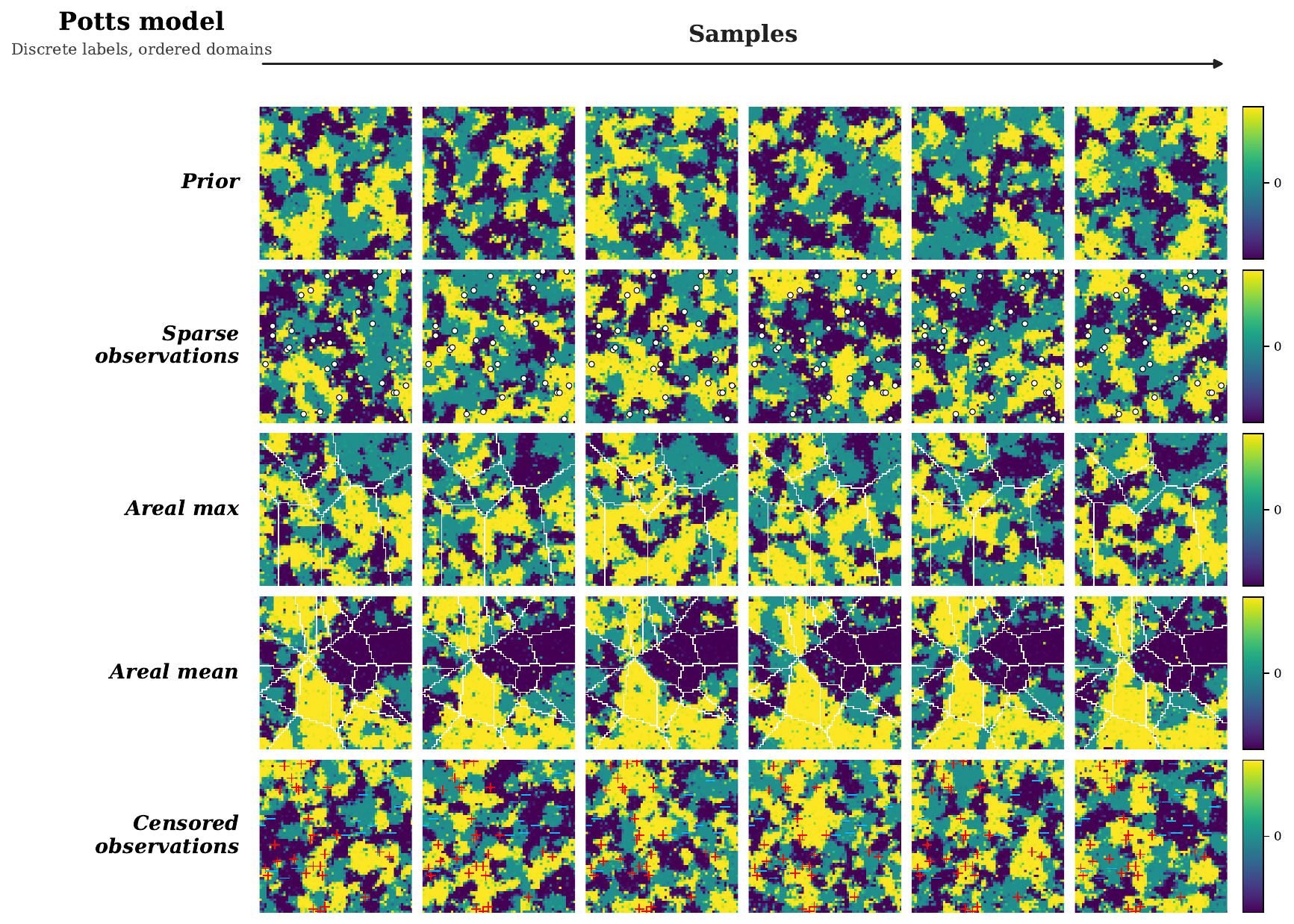}
  \caption{\textbf{Conditional sampling from a Potts model prior.} The base
    process is a discrete-label field with ordered domains (Potts model, $q=3$
    states mapped to $\{-1,0,1\}$, inverse temperature $\beta=1.2$). Conditioning reshapes the domain
    structure to match the observations while retaining the piecewise-constant,
    label-coherent character of the prior.}
  \label{fig:spatial-potts}
\end{figure}

\subsection{Temporal Processes}
\label{app:temporal-processes}
~

\begin{figure}[ht!]
\vspace{-1mm}
\centering
\includegraphics[width=.96\textwidth]{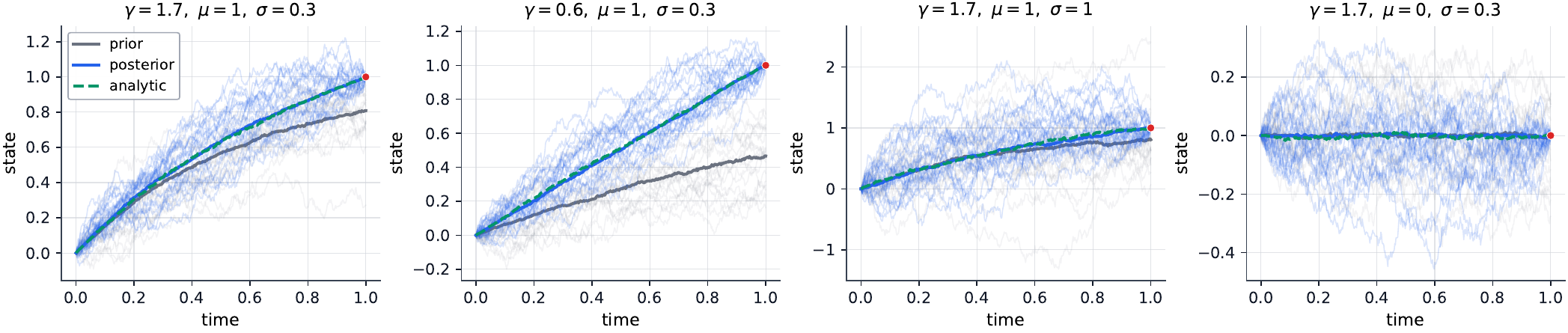}
\caption{
\textbf{Conditional sampling from an OU process prior}.
The prior is an OU diffusion process with mean-reversion rate \(\gamma\),  long-run mean \(\mu\), and diffusion coefficient \(\sigma\).
Each panel shows prior trajectories, posterior bridge samples, and the analytic bridge mean for a different choice of \((\gamma,\mu,\sigma)\).
The red marker denotes the terminal constraint, blue curves denote \textsc{LatentFlow} posterior samples, grey curves denote prior samples, and the green dashed curve denotes the analytic bridge mean.
}
\label{fig:ou-bridge-trajectories}
\vspace{1mm}
\end{figure}

\begin{figure}[ht!]
\centering
\includegraphics[width=.96\textwidth]{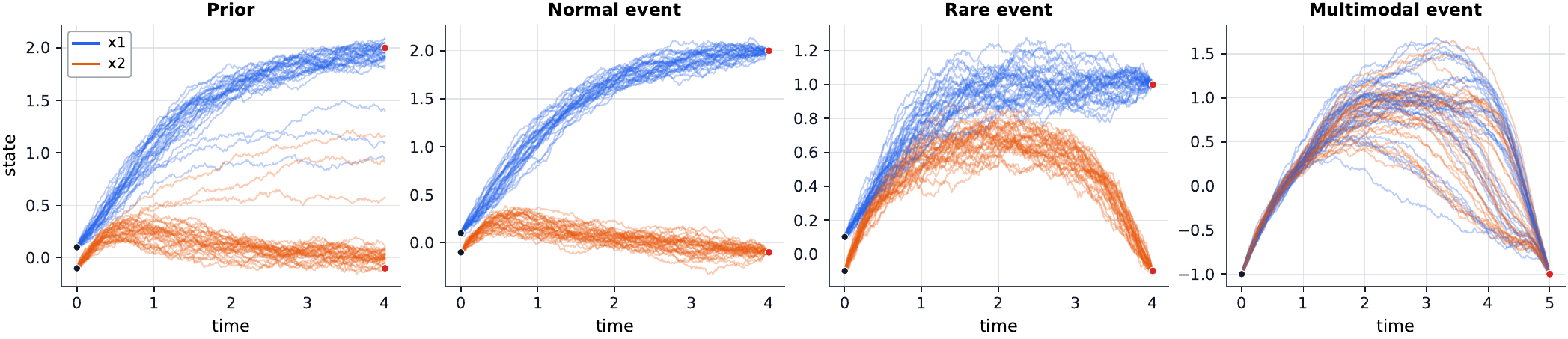}
\caption{
\textbf{Conditional sampling from a cell-differentiation diffusion process prior.}
The panels show samples from the prior and from three endpoint-conditioned laws: a typical terminal event, a rare terminal event, and a multimodal terminal event.
Blue and orange curves denote the two state variables \(x_1\) and \(x_2\), black markers denote the initial state, and red markers denote the terminal constraint.
The rare and multimodal examples illustrate that the same latent bridge construction can condition nonlinear diffusion priors on \emph{very} low-probability events. In particular, for the rare event and the multimodal event no valid samples were found after simulating the unconditioned process 100,000 times.
}
\label{fig:cell-differentiation-bridges}
\vspace{1mm}
\end{figure}

\begin{figure}[ht!]
\centering
\includegraphics[width=.96\textwidth]{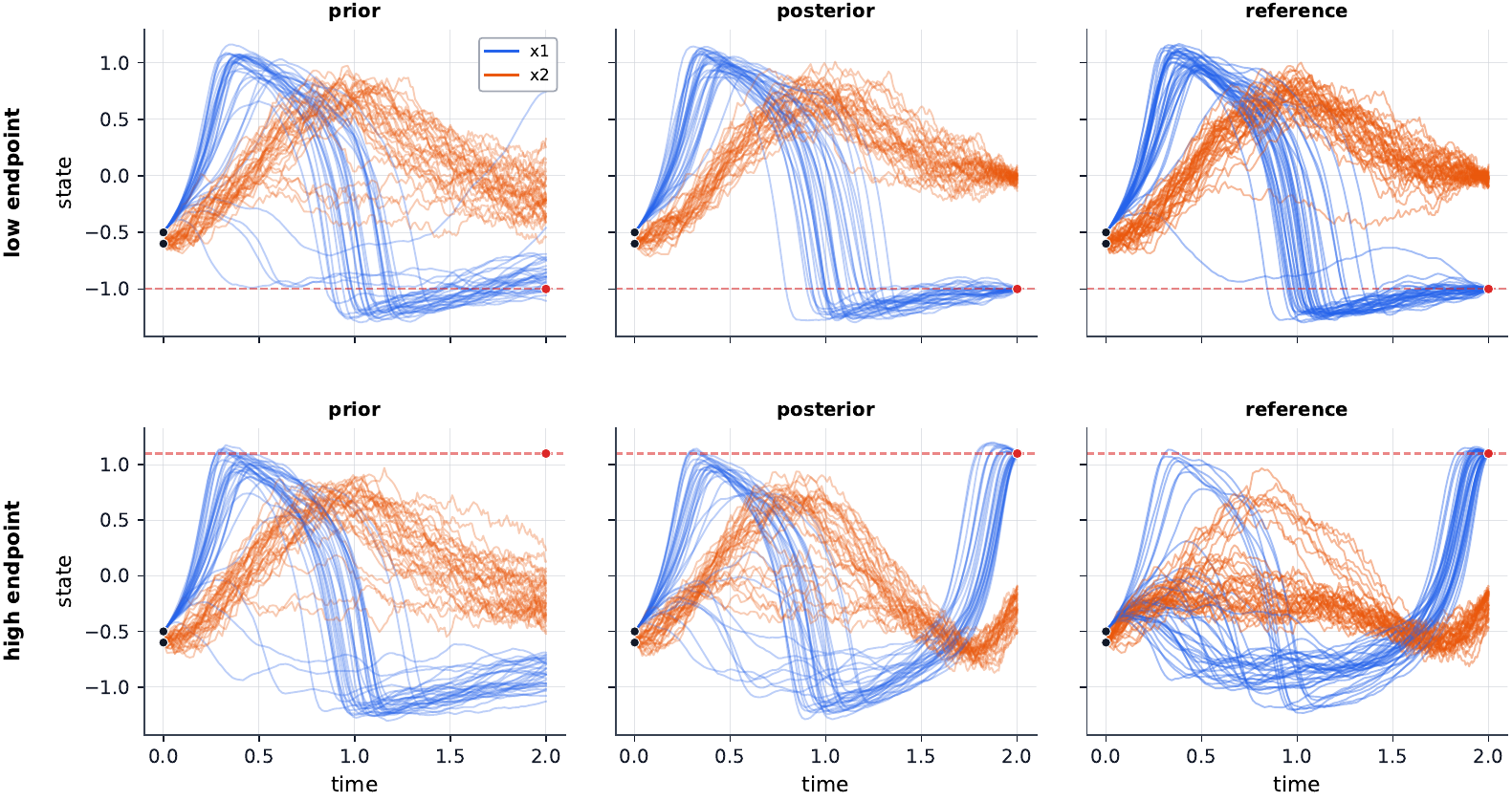}
\caption{
\textbf{Conditional sampling from a FitzHugh--Nagumo diffusion process prior}.
\emph{Rows}: two endpoint constraints on the first state variable: a low endpoint and a high endpoint.
\emph{Columns}: prior trajectories, \textsc{LatentFlow} posterior bridge samples, and reference bridge trajectories.
Blue and orange curves denote the two state variables \(x_1\) and \(x_2\), black markers denote the initial state, and red markers and dashed lines denote the terminal constraint.
The high-endpoint case illustrates conditioning on a rare excursion of the excitable system.
}
\label{fig:fhn-bridges}
\vspace{-1mm}
\end{figure}

\subsection{Spatio-temporal Processes}
\label[appendix]{app:spatio-temporal-processes}

\vspace*{\fill}

\begin{figure}[H]
    \centering
    \includegraphics[width=\textwidth]{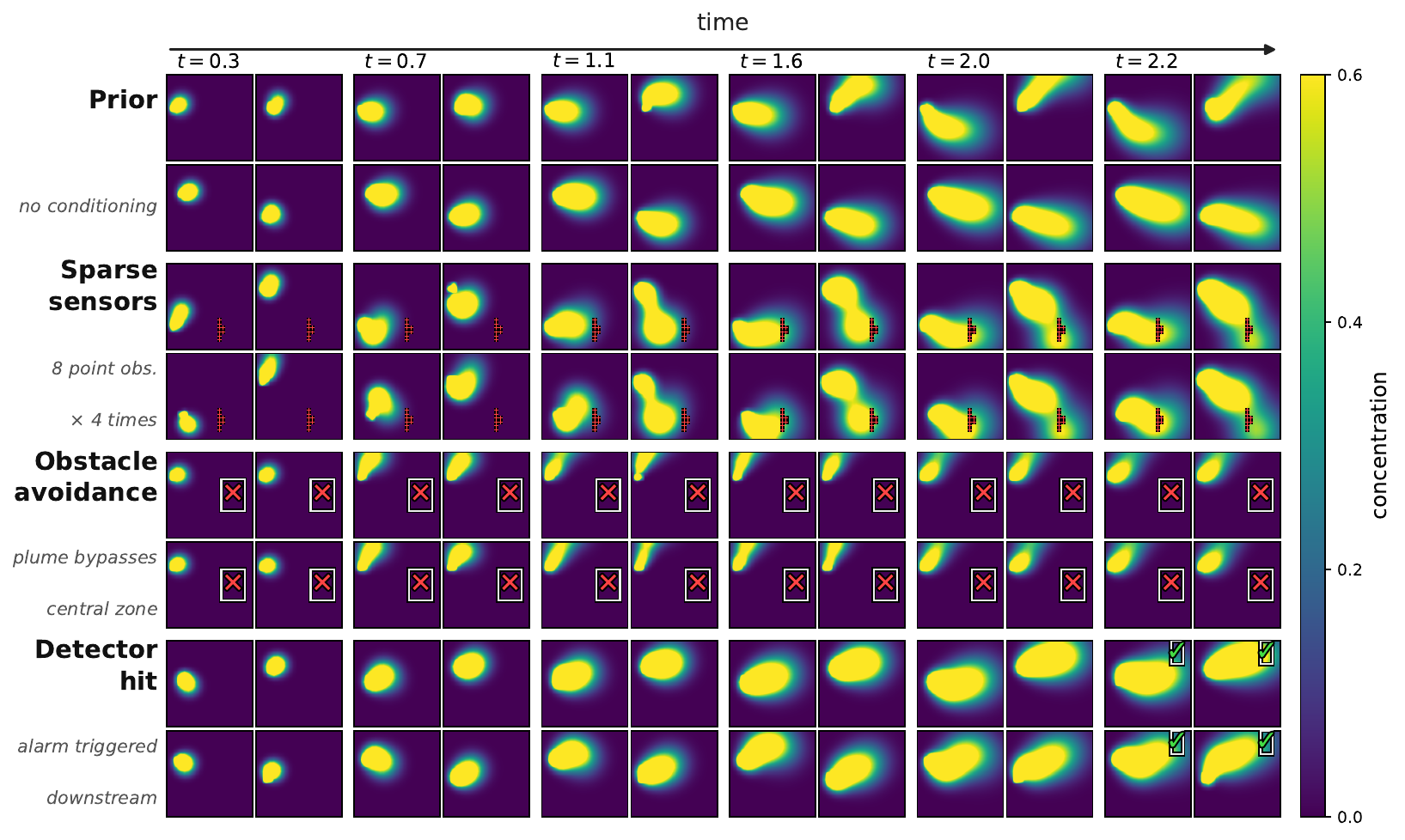}
    \caption{\textbf{Posterior samples from the advection--diffusion plume model under three conditioning regimes, each showing four independent draws at six time snapshots.}
    \textit{Prior:} unconditioned samples exhibiting diverse plume trajectories and intensities. 
    \textit{Sparse sensors:} conditioning on exact concentration values at 8 spatial locations observed at 4 times ($32$ observations total) recovers the ground-truth plume trajectory.
    \textit{Obstacle avoidance:} a binary condition requiring the plume to stay below concentration $0.06$ in the central rectangular region (dashed box, $\times$) at all times steers samples around the obstacle.
    \textit{Detector hit:} a binary condition requiring peak concentration to exceed $0.40$ in the downstream detector region (dashed box, $\checkmark$) concentrates mass on trajectories that reach the top-right corner.}
    \label{ffig:advection_spacetime}
\end{figure}

\vspace*{\fill}
\clearpage

\begin{figure}[p]
    \centering
    \includegraphics[width=\textwidth]{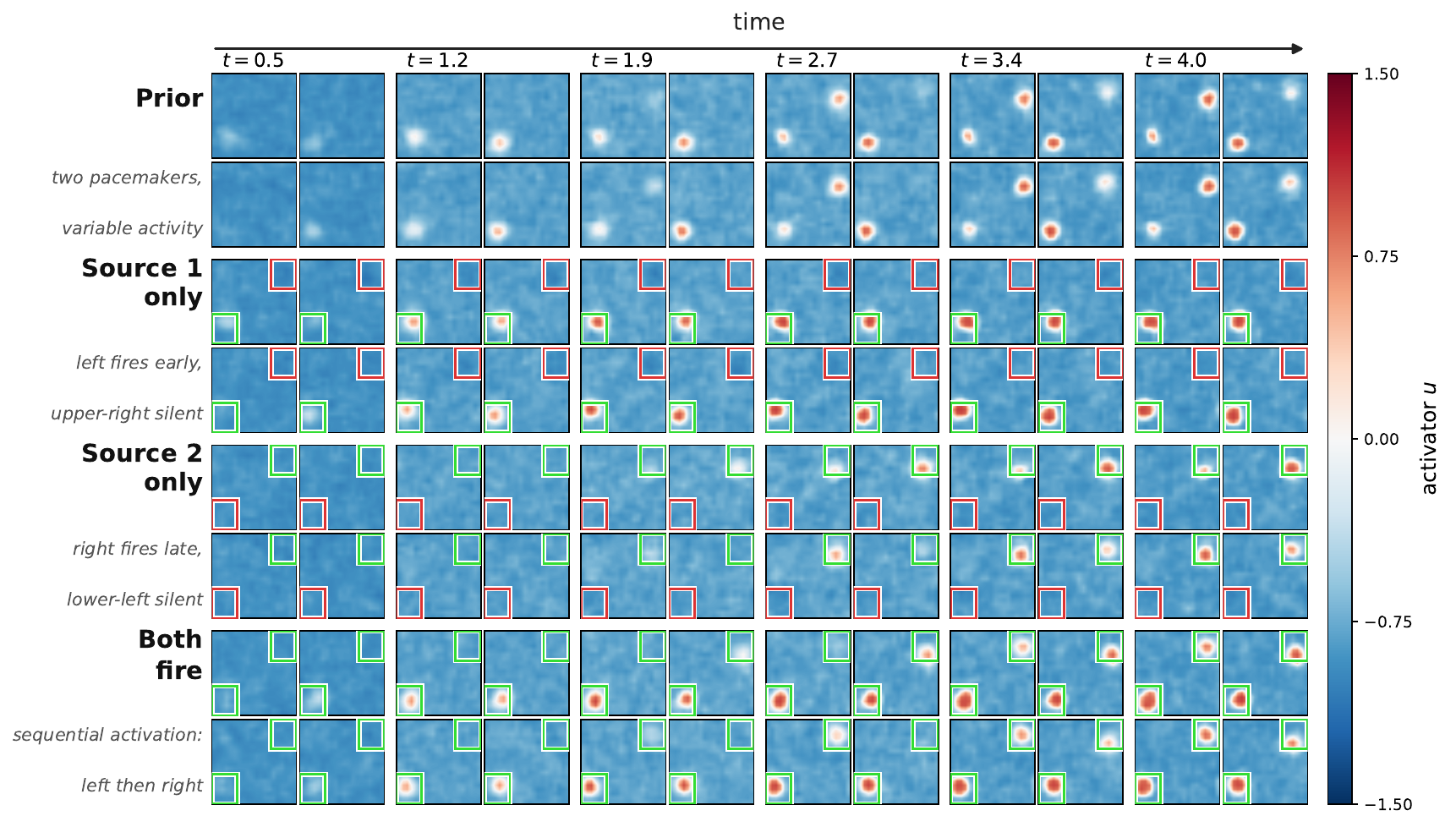}
    \caption{\textbf{Posterior samples from the FitzHugh--Nagumo excitable medium under three conditioning regimes, each showing four independent draws at six time snapshots}.
    \textit{Prior:} unconditioned samples show variable activation of two spatially separated pacemakers (green boxes: active; red boxes: silent). 
    \textit{Source~1 only:} conditioning on the lower-left pacemaker firing early ($t\lesssim 1.3$) while the upper-right remains silent produces a single travelling wavefront originating from the lower-left corner. 
    \textit{Source~2 only:} the complementary condition suppresses the lower-left source and requires the upper-right to fire late ($t\gtrsim 1.8$), yielding a wavefront from the upper-right.
    \textit{Both fire:} conditioning on sequential activation of both sources produces two distinct wavefronts propagating across the medium in succession.}
    \label{ffig:fitz_spacetime}
\end{figure}

\begin{figure}[p]
    \centering
    \includegraphics[width=\textwidth]{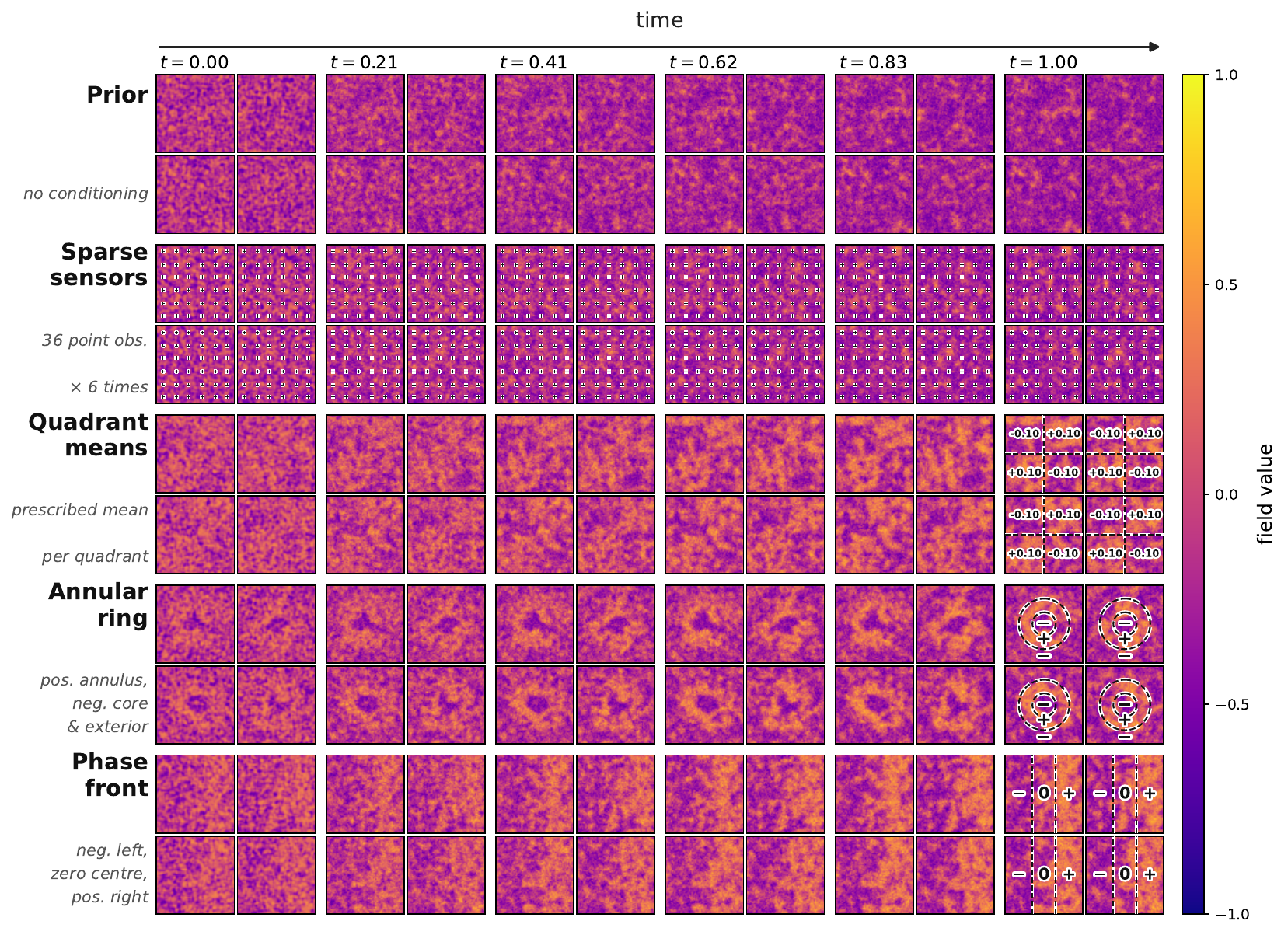}
    \caption{\textbf{Posterior samples from the Allen--Cahn phase-field model under four conditioning regimes, each showing four independent draws at six time snapshots.}  
    Except for the sparse-sensor experiment, all conditions constrain statistics of the final field $u(\cdot,t_{29})$.
    \textit{Prior:} unconditioned samples exhibit stochastic domain coarsening toward the $\pm 1$ phases.
    \textit{Sparse sensors:} conditioning on $36$ point observations at each
    of $6$ snapshot times ($216$ observations total) recovers the ground-truth field trajectory. 
    \textit{Quadrant means:} prescribing the spatial mean in each quadrant to a checkerboard pattern $(+0.10,\,-0.10,\,-0.10,\,+0.10)$ steers the final phase arrangement into the corresponding diagonal structure.  
    \textit{Annular ring:} prescribing a positive mean in an annular band ($0.30 \leq r \leq 0.65$) and negative means in the core and exterior produces a ring-shaped positive domain surrounded by negative phase.
    \textit{Phase front:} prescribing a left-to-right mean gradient $(-0.20,\,0.0,\,+0.20)$ across three vertical strips ($|x|\leq 0.30$ boundaries) induces a diffuse phase interface running vertically through the domain.}
    \label{ffig:allen_cahn_spacetime}
\end{figure}

\end{document}